\definecolor{darkgreen}{rgb}{0.0,0,0.9}
\DeclareSymbolFont{rsfs}{U}{rsfs}{m}{n}
\DeclareSymbolFontAlphabet{\mathscrsfs}{rsfs}
\newcommand\reallywidehat[1]{%
\savestack{\tmpbox}{\stretchto{%
  \scaleto{%
    \scalerel*[\widthof{\ensuremath{#1}}]{\kern.1pt\mathchar"0362\kern.1pt}%
    {\rule{0ex}{\textheight}}
  }{\textheight}%
}{2.4ex}}%
\stackon[-6.9pt]{#1}{\tmpbox}%
}
\DeclareSymbolFont{rsfs}{U}{rsfs}{m}{n}
\DeclareSymbolFontAlphabet{\mathscrsfs}{rsfs}
\numberwithin{equation}{section}
\newtheoremstyle{myexample} 
    {\topsep}                    
    {\topsep}                    
    {\rm }                   
    {}                           
    {\bf }                   
    {.}                          
    {.5em}                       
    {}  
\newcommand*{\rom}[1]{\expandafter\@slowromancap\romannumeral #1@}
\begin{document}

\title{\bf Fast Learning in Reproducing Kernel Kre\u{\i}n Spaces via Signed Measures}

\author{Fanghui Liu\thanks{Department of Electrical Engineering
		(ESAT-STADIUS), KU Leuven}, \;\;\;\;  Xiaolin Huang\thanks{Institute of Image Processing and Pattern Recognition, Institute of Medical Robotics, Shanghai Jiao Tong University}, \;\;\;\; Yingyi Chen\footnotemark[1], \;\;\;\; 
Johan A.K. Suykens\footnotemark[1]}

\maketitle

\begin{abstract}
	In this paper, we attempt to solve a long-lasting open question for non-positive definite (non-PD) kernels in machine learning community: can a given non-PD kernel be decomposed into the difference of two PD kernels (termed as positive decomposition)?
	We cast this question as a distribution view by introducing the \emph{signed measure}, which transforms positive decomposition to measure decomposition: a series of non-PD kernels can be associated with the linear combination of specific finite Borel measures.
	In this manner, our distribution-based framework provides a sufficient and necessary condition to answer this open question.
	Specifically, this solution is also computationally implementable in practice to scale non-PD kernels in large sample cases, which allows us to devise the first random features algorithm to obtain an unbiased estimator.
	Experimental results on several benchmark datasets verify the effectiveness of our algorithm over the existing methods.
\end{abstract}


\section{Introduction}
\vspace{-0.1cm}

Devising a pairwise similarity/dissimilarity function plays a significant role in metric learning and kernel learning \cite{Kulis2013Metric,Jain2017Learning,ye2019fast}.
However, such function is not always positive definite (PD) in practice.
For example, we are often faced with \emph{indefinite} (real, symmetric, but not positive definite) kernels including the hyperbolic tangent kernel \cite{smola2001regularization,cho2019large} and truncated $\ell_1$-distance kernel \cite{huang2017classification}. 
Interestingly, some common-used PD kernels, e.g., polynomial kernels, Gaussian kernels, would degenerate to indefinite ones in some cases.
An intuitive example is that a linear combination of PD kernels with negative coefficients \cite{Cheng2005Learning}.
Polynomial kernels using $\ell_2$-normalization (i.e., distributed on the unit sphere) are not always PD \cite{pennington2015spherical}.
Gaussian kernels with some geodesic distances cannot be ensured positive definite \cite{Jayasumana2013Kernel,Feragen2015Geodesic}.
We refer to a survey \cite{Schleif2015Indefinite} for details.

Learning with indefinite similarity/dissimilarity functions is typically modeled in Reproducing Kernel Kre\u{\i}n Spaces (RKKS) \cite{Cheng2004Learning}, where the (reproducing) indefinite kernel can be decomposed into the difference of two PD kernels, a.k.a, positive decomposition \cite{bognar1974indefinite}.
A series of work \cite{roth2003optimal,Ga2016Learning,oglic18a,oglic2019scalable,saha2020learning,liu2020analysis} rely on the positive decomposition.
It is important to note that, indefinite kernel matrices can be decomposed in the difference of two positive semi-definite matrices by eigenvalue decomposition, but
{\emph{for a given indefinite kernel, does it admit a positive decomposition?}} is a long-lasting open question in machine learning community.
In fact, it appears non-trivial how to verify that an indefinite kernel can be associated with RKKS except for some intuitive examples, e.g., a linear combination of PD kernels with negative coefficients.
In the past, we usually assume that a (reproducing) indefinite kernel is associated with RKKS in practice while the theoretical gap cannot be ignored. 
In particular, the used eigenvalue decomposition in indefinite kernel based algorithms \cite{Ga2016Learning,oglic18a,oglic2019scalable} often incurs huge computational and space complexities and thus is infeasible to large-scale problems.

To answer the open question, we consider indefinite kernel in a distribution view.
Our model is based on the \emph{signed measure}, which generalizes Borel measure to be negative.
Accordingly, the positive decomposition can be transformed to measure decomposition and thus we provide a sufficient and necessary condition to answer this question.
Our distribution-based framework is simple but effective, which naturally allows us to devise unbiased random features based algorithm to scale indefinite kernel methods in large sample cases.
Formally, we make the following contributions:
\begin{itemize}
	\item In Section~\ref{sec:rfmsm}, by introducing the signed measure, we provide a sufficient and necessary condition to answer the above open question for indefinite kernels via the measure decomposition technique. Moreover, this condition also guides us how to find a specific positive decomposition in practice, and thus we can devise a unbiased estimator to obtain randomized feature maps.
	To the best of our knowledge, this is the first work to generate unbiased estimation for non-PD kernel approximation by random features.
	\item In Section~\ref{sec:dot}, we demonstrate that spherically dot-product kernels including polynomial kernels, arc-cosine kernels, and the popular NTK in two-layer ReLU network on the unit sphere\footnote{We use the $\ell_2$ normalization scheme to ensure the data on the unit sphere as suggested by \cite{pennington2015spherical}, which is different from directly using spherically i.i.d data.}, are radial and non-PD associated with RKKS. Then we demonstrate the feasibility of our random feature algorithm on several indefinite kernels admitting the positive decomposition. 
	\item In Section~\ref{sec:exp}, we evaluate various non-PD kernels on several typical benchmark datasets to validate the effectiveness of our algorithm.
\end{itemize}

\section{Preliminaries and Related Works}
In this section, we briefly sketch some basic ideas of RKKS \cite{bognar1974indefinite} and Bochner's theorem in random features \cite{rahimi2007random}, and then introduce related works on indefinite kernel approximation.

\subsection{Reproducing Kernel Kre\u{\i}n Spaces}

Here we briefly review on the Kre\u{\i}n spaces and the reproducing kernel Kre\u{\i}n space (RKKS).
Detailed expositions can be found in book \cite{bognar1974indefinite}.
Most of the readers would be familiar with Hilbert spaces.
Kre\u{\i}n spaces share some properties of Hilbert spaces but differ in some key aspects which we shall emphasize as follows.

Kre\u{\i}n spaces are indefinite inner product spaces endowed with a Hilbertian topology.
\begin{definition}\label{definiterkks}
	(Kre\u{\i}n space \cite{bognar1974indefinite}) An inner product space is a Kre\u{\i}n space $\mathcal{H_K}$ if there exist two Hilbert spaces $\mathcal{H}_+$ and $\mathcal{H}_-$ such that\\
	i) $\forall f \in \mathcal{H_K}$, it can be decomposed into $f=f_+ \oplus f_-$, where $f_+ \in \mathcal{H}_+$ and $f_- \in \mathcal{H}_-$, respectively.\\
	ii) $\forall f,g \in \mathcal{H_K}$, $\langle f,g\rangle_{\mathcal{H_K}}=\langle f_+,g_+\rangle_{\mathcal{H}_+} - \langle f_-,g_-\rangle_{\mathcal{H}_-}$.
\end{definition}
The Kre\u{\i}n space $\mathcal{H_K}$ can be decomposed into a direct sum $\mathcal{H_K} = \mathcal{H_+} \oplus \mathcal{H_-}$.
Besides, the inner product on $\mathcal{H_K}$ is non-degenrate, i.e., for $f \in \mathcal{H_K}$, if $\langle f,g\rangle_{\mathcal{H_K}} = 0$ for any $g \in \mathcal{H_K}$, we have $f = 0$.
From the definition, the decomposition $\mathcal{H_K} = \mathcal{H}_+ \oplus \mathcal{H}_-$ is not necessarily unique. For a fixed decomposition, the inner product $ \langle f, g \rangle_{\mathcal{H_K}}$ is given accordingly \cite{Ga2016Learning,oglic18a}.
The key difference from Hilbert spaces is that the inner products might be negative for Kre\u{\i}n spaces, i.e., there exists $f \in \mathcal{H_K}$ such that $\langle f,f\rangle_{\mathcal{H_K}} < 0$.
If $\mathcal{H}_+$ and $\mathcal{H}_-$ are two RKHSs, the Kre\u{\i}n space $\mathcal{H_K}$ is an RKKS associated with a unique indefinite reproducing kernel $k$ such that the reproducing property holds, i.e., $\forall f \in \mathcal{H_K},~f(x) = \langle f,k(x,\cdot) \rangle_{\mathcal{H_K}}$.
\begin{proposition}
	(positive decomposition \cite{bognar1974indefinite})
	Let $k: \mathbb{R}^d \times \mathbb{R}^d \rightarrow \mathbb{R}$ be a real-valued kernel function. Then there exists an associated RKKS identified with a reproducing kernel $k$ if and only if $k$ admits a positive decomposition $k = k_+ - k_-$, where $k_+$ and $k_-$ are two positive definite kernels.
\end{proposition}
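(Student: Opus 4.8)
The plan is to prove the two implications separately, using the Moore--Aronszajn theorem to build the space from a given decomposition, and the fundamental decomposition of a Kre\u{\i}n space to extract a decomposition from a given RKKS. The reproducing-property bookkeeping is elementary once the right kernel section is guessed; the care goes into the function-space construction.

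For the sufficiency direction ($\Leftarrow$), suppose $k = k_+ - k_-$ with $k_+,k_-$ positive definite. First I would invoke the Moore--Aronszajn theorem to associate to each $k_\pm$ its unique RKHS $\mathcal{H}_\pm$, in which $k_\pm(x,\cdot)$ is the reproducing kernel. I would then form the direct sum $\mathcal{H_K} = \mathcal{H}_+ \oplus \mathcal{H}_-$ and endow it with the indefinite inner product of Definition~\ref{definiterkks}(ii), $\langle f,g\rangle_{\mathcal{H_K}} = \langle f_+,g_+\rangle_{\mathcal{H}_+} - \langle f_-,g_-\rangle_{\mathcal{H}_-}$, which makes $\mathcal{H_K}$ a Kre\u{\i}n space with two RKHS components, hence an RKKS. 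The key verification is that $k$ reproduces: I would claim the kernel section is $k(x,\cdot) = k_+(x,\cdot) \oplus \big(-k_-(x,\cdot)\big)$, so that as a function it evaluates to $k_+(x,y) - k_-(x,y) = k(x,y)$, and for any $f = f_+ \oplus f_-$,
\[
\langle f, k(x,\cdot)\rangle_{\mathcal{H_K}} = \langle f_+, k_+(x,\cdot)\rangle_{\mathcal{H}_+} - \langle f_-, -k_-(x,\cdot)\rangle_{\mathcal{H}_-} = f_+(x) + f_-(x) = f(x),
\]
using the reproducing property in each RKHS.

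For the necessity direction ($\Rightarrow$), suppose $\mathcal{H_K}$ is an RKKS with reproducing kernel $k$. By definition it carries a fundamental decomposition $\mathcal{H_K} = \mathcal{H}_+ \oplus \mathcal{H}_-$ into two RKHSs; let $k_+,k_-$ denote their reproducing kernels, which are positive definite by construction. To identify $k$, I would decompose the section $k(x,\cdot) = [k(x,\cdot)]_+ \oplus [k(x,\cdot)]_-$ and test the reproducing property against pure components. Taking $f = f_+ \oplus 0$ gives $\langle f_+, [k(x,\cdot)]_+\rangle_{\mathcal{H}_+} = f_+(x) = \langle f_+, k_+(x,\cdot)\rangle_{\mathcal{H}_+}$ for all $f_+ \in \mathcal{H}_+$, forcing $[k(x,\cdot)]_+ = k_+(x,\cdot)$; taking $f = 0 \oplus f_-$ analogously forces $[k(x,\cdot)]_- = -k_-(x,\cdot)$. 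Evaluating the section at $y$ then yields $k(x,y) = k_+(x,y) - k_-(x,y)$, the desired positive decomposition.

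I expect the main obstacle to lie in the well-definedness and function-space subtleties of the direct-sum construction in the sufficiency direction, rather than in either chain of reproducing-property identities. Concretely, the positive decomposition of $k$ is not unique, and when $\mathcal{H}_+$ and $\mathcal{H}_-$ overlap as sets of functions the map $(f_+,f_-)\mapsto f_+ + f_-$ need not be injective, so the inner product must really be defined on the external direct sum of pairs and then transported to functions; one must then check that the reproducing kernel is genuinely the function $k$ and that the inner product is non-degenerate. Confirming that the resulting topology is Hilbertian and that the abstract decomposition of Definition~\ref{definiterkks} is respected is the step I would treat most carefully.
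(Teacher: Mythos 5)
The paper does not prove this proposition at all --- it is quoted as a classical fact with a citation to Bogn\'ar's book, so there is no in-paper argument to compare yours against. Judged on its own, your proof follows the standard route from the literature (Schwartz, Alpay): Moore--Aronszajn plus an indefinite inner product on a direct sum for sufficiency, and the fundamental decomposition plus uniqueness of reproducing kernels in each Hilbert component for necessity. The necessity direction as you write it is complete and correct, including the sign bookkeeping that forces $[k(x,\cdot)]_- = -k_-(x,\cdot)$.

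The one genuine gap is exactly the issue you flag at the end but do not close: in the sufficiency direction, when $\mathcal{H}_+\cap\mathcal{H}_-\neq\{0\}$ the indefinite form $\langle f_+,g_+\rangle_{\mathcal{H}_+}-\langle f_-,g_-\rangle_{\mathcal{H}_-}$ is not well defined on the set of functions $\{f_++f_-\}$, and on the external direct sum of pairs it is degenerate on the subspace $N=\{(h,-h): h\in\mathcal{H}_+\cap\mathcal{H}_-\}$, violating the non-degeneracy required of a Kre\u{\i}n space. Saying ``one must then check non-degeneracy'' is not enough, because that check fails for the naive construction; the standard fix is to pass to the orthogonal companion of $N$ (equivalently, to replace the given decomposition of $k$ by one whose component spaces intersect trivially, which is always possible), verify that all kernel sections $k_+(x,\cdot)\oplus(-k_-(x,\cdot))$ survive this reduction, and only then read off the reproducing property. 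That reduction is where the actual content of the classical theorem sits, so your proof should either carry it out or restrict the claim to decompositions with $\mathcal{H}_+\cap\mathcal{H}_-=\{0\}$ and cite the reduction step.
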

From the definition, this decomposition is not necessarily unique.
As mentioned before, not every indefinite kernel function admits a representation as a difference between two positive definite kernels.

\subsection{Bochner's theorem and random features}
A positive definite function corresponds to a nonnegative and finite Borel measure, i.e.,~a probability distribution, via Fourier transform by the following theorem. 
\begin{theorem}[Bochner's Theorem \cite{bochner2005harmonic}]
	\label{bochner}
	Let $k: \mathbb{R}^d \times \mathbb{R}^d \rightarrow \mathbb{R}$ be a bounded continuous function satisfying the stationary property, i.e., $k(\bm x, \bm x') = k(\bm x - \bm x')$. Then, $k$ is
	positive definite if and only if it is the (conjugate) Fourier transform of a nonnegative and finite Borel measure $\mu$ (rescale it to a probability measure by setting $k(\bm 0)=1$)
	\begin{equation*}
	k(\bm x - \bm x') \!=\! \int_{\mathbb{R}^d} \!\!  e^{   \mathrm{i} {\bm \omega}^{\!\top}( \bm x - \bm x')} \mu(\mathrm{d} {\bm \omega}) = \mathbb{E}_{\bm \omega \sim \mu} \big[ e^{\mathrm{i}{\bm \omega}^{\!\top}(\bm x - \bm x')}\big] \,.
	\end{equation*}
\end{theorem}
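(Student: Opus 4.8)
The plan is to prove the two implications separately. The sufficiency direction ($\Leftarrow$) is routine and I would dispose of it first; the necessity direction ($\Rightarrow$) carries all the analytic weight. For sufficiency, suppose $k(\bm x - \bm x') = \int_{\mathbb{R}^d} e^{\mathrm{i}\bm\omega^\top(\bm x - \bm x')}\mu(\mathrm{d}\bm\omega)$ for a nonnegative finite Borel measure $\mu$. Then for any $\bm x_1,\dots,\bm x_n$ and any $c_1,\dots,c_n \in \mathbb{C}$,
\[
\sum_{j,l} c_j \bar c_l\, k(\bm x_j - \bm x_l) = \int_{\mathbb{R}^d}\Big|\sum_{j} c_j\, e^{\mathrm{i}\bm\omega^\top\bm x_j}\Big|^2 \mu(\mathrm{d}\bm\omega) \ge 0 ,
\]
so $k$ is positive definite, while boundedness and continuity follow from dominated convergence because $\mu$ is finite.

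For necessity, assume $k$ is continuous, bounded, stationary, and positive definite; the plan is to recover $\mu$ as a weak limit of Gaussian-regularized inverse Fourier transforms. For each $\sigma>0$ I would set
\[
f_\sigma(\bm\omega) = \frac{1}{(2\pi)^d}\int_{\mathbb{R}^d} k(\bm t)\, e^{-\mathrm{i}\bm\omega^\top\bm t}\, e^{-\sigma^2\|\bm t\|^2/2}\,\mathrm{d}\bm t ,
\]
which is well defined since $k$ is bounded and the Gaussian is integrable. The first key step is nonnegativity of $f_\sigma$: approximating integrals by Riemann sums turns the discrete positive-definiteness inequality into its continuous form $\iint k(\bm x-\bm y)\,\phi(\bm x)\overline{\phi(\bm y)}\,\mathrm{d}\bm x\,\mathrm{d}\bm y \ge 0$ for every Schwartz $\phi$; choosing the Gaussian-modulated plane wave $\phi(\bm x)=e^{\mathrm{i}\bm\omega^\top\bm x}e^{-\sigma^2\|\bm x\|^2/4}$ and substituting $\bm u=\bm x-\bm y$ collapses the double integral (via the Gaussian autocorrelation identity) into a positive multiple of $f_\sigma(\bm\omega)$, up to a harmless rescaling of $\sigma$, forcing $f_\sigma\ge 0$. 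Setting $\mu_\sigma(\mathrm{d}\bm\omega)=f_\sigma(\bm\omega)\,\mathrm{d}\bm\omega$ then gives a nonnegative measure, and Fourier inversion records the two facts $\int e^{\mathrm{i}\bm\omega^\top\bm x}\mu_\sigma(\mathrm{d}\bm\omega)=k(\bm x)\,e^{-\sigma^2\|\bm x\|^2/2}$ and, at $\bm x=\bm 0$, total mass $\mu_\sigma(\mathbb{R}^d)=k(\bm 0)$ uniformly in $\sigma$.

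The hard part will be extracting a genuine limiting measure as $\sigma\to 0$ without losing mass at infinity. Since $\{\mu_\sigma\}$ has uniformly bounded total mass $k(\bm 0)$, weak-$*$ compactness yields a subsequence converging vaguely to some nonnegative $\mu$; the real obstacle is ruling out escape of mass, i.e.\ establishing tightness. This is exactly where continuity of $k$ at the origin enters, through the classical estimate
\[
\mu_\sigma\big(\{\|\bm\omega\|\ge 2/\delta\}\big) \le \frac{C_d}{\delta^d}\int_{\|\bm x\|\le\delta}\big(k(\bm 0)-\mathrm{Re}\,[\,k(\bm x)\,e^{-\sigma^2\|\bm x\|^2/2}]\big)\,\mathrm{d}\bm x ,
\]
whose right-hand side is controlled by the modulus of continuity of $k$ at $\bm 0$ plus an $O(\sigma^2\delta^2)$ term, hence tends to $0$ as $\delta\to 0$ uniformly over $\sigma\le 1$. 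Tightness plus bounded mass gives, by Prokhorov's theorem, weak convergence $\mu_{\sigma_n}\to\mu$ along a subsequence $\sigma_n\to 0$, with $\mu(\mathbb{R}^d)=k(\bm 0)$. Finally I would pass to the limit in the identity $\int e^{\mathrm{i}\bm\omega^\top\bm x}\mu_{\sigma_n}(\mathrm{d}\bm\omega)=k(\bm x)\,e^{-\sigma_n^2\|\bm x\|^2/2}$: the right side tends to $k(\bm x)$ pointwise, while weak convergence against the bounded continuous integrand $e^{\mathrm{i}\bm\omega^\top\bm x}$ gives $\int e^{\mathrm{i}\bm\omega^\top\bm x}\mu(\mathrm{d}\bm\omega)$, yielding the claimed representation. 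Since $k$ is real-valued and symmetric, $\mu$ is automatically symmetric, so the representation is consistent with $k$ being real.
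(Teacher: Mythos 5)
The paper does not prove this statement: it is quoted as a classical result with a citation to Bochner's monograph, so there is no in-paper argument to compare against. Judged on its own, your proposal is the standard textbook proof and is essentially correct. The sufficiency direction is exactly right. For necessity, your Gaussian-regularization scheme (nonnegativity of $f_\sigma$ via the continuous positive-definiteness inequality tested on Gaussian-modulated plane waves, uniform mass bound $\mu_\sigma(\mathbb{R}^d)=k(\bm 0)$, a L\'evy-type truncation inequality for tightness, Prokhorov, and passage to the limit in the Fourier identity) is the classical route. Two small points deserve care in a full write-up: (i) the identity $\mu_\sigma(\mathbb{R}^d)=k(\bm 0)$ requires knowing $f_\sigma\in L^1$, which does not follow from the inversion formula alone; the standard fix is to test $f_\sigma$ against Gaussians $e^{-\epsilon^2\|\bm\omega\|^2/2}$ and let $\epsilon\to 0$, using $f_\sigma\ge 0$ and monotone convergence to conclude $\int f_\sigma = k(\bm 0)<\infty$; (ii) once tightness gives weak convergence of the full net (not just vague convergence of a subsequence with possible mass loss), the limit $\mu$ is independent of the subsequence because its Fourier transform is pinned to $k$, so uniqueness of $\mu$ also comes for free. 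Neither point is a gap in the idea, only in the level of detail of the sketch.
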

Typically, the kernel in practical uses is real-valued and thus the imaginary part can be discarded, i.e., $k(\bm x - \bm x') = \mathbb{E}_{\bm \omega \sim \mu} \cos[\bm \omega^{\!\top} (\bm x - \bm x')]$.
Accordingly, we can use the Monte Carlo method to sample a series of random features $\{ \bm \omega_i \}_{i=1}^s$ from the distribution $\mu$ to approximate the PD kernel function $k$, a.k.a. Random Fourier features (RFF) \cite{rahimi2007random}.
It brings promising performance and solid theoretical guarantees on scaling up kernel methods in classification \cite{sun2018but}, nonlinear component analysis \cite{xie2015scale,lopez2014randomized}, and neural tangent kernel (NTK) \cite{jacot2018neural}.
Improvements on RFF mainly focus on variance reduction by advanced sampling methods, e.g., quasi-Monte Carlo sampling \cite{yang2014quasi}, Monte Carlo sampling with orthogonal constraints \cite{Yu2016Orthogonal,lyu2017spherical,choromanski2017unreasonable}, leverage-score sampling \cite{avron2017random,li2019towards}, and quadrature based methods \cite{dao2017gaussian,munkhoeva2018quadrature,liu2020towards}, see a survey \cite{liu2020survey} for details.

\subsection{Signed measure}
Let $\mu: \mathcal{A} \rightarrow [0, +\infty]$ be a measure on a set $\Omega$ satisfying $\mu(\emptyset)=0$ and $\sigma$-additivity (i.e., countably additive). We call $\mu$ a finite measure if $\mu(\Omega) < +\infty$. Specifically, $\mu$ is a probability measure if $\mu(\Omega)=1$, and the triple $(\Omega, \mathcal{A}, \mu)$ is referred as the corresponding probability space. Here we consider the signed measure, a generalized version of a measure allowing for negative values.

\begin{definition}\label{defsign} (Signed measure \cite{athreya2006measure})
	Let $\Omega$ be some set, $\mathcal{A}$ be a $\sigma$-algebra of subsets on $\Omega$. A \emph{signed measure} is a  function $\mu: \mathcal{A} \rightarrow [-\infty, +\infty)~\mbox{or}~(-\infty, +\infty]$ satisfying $\sigma$-additivity.
\end{definition}
Based on this definition, the following theorem shows that any signed measure can be represented by the difference of two nonnegative measures.
\begin{theorem} (Jordan decomposition \cite{athreya2006measure})
	\label{jorthem}
	Let $\mu$ be a signed measure defined on the $\sigma$-algebra $\mathcal{A}$ as given in Definition~\ref{defsign}. There exists two (nonnegative) measures $\mu_+$ and $\mu_-$ (one of them is a finite measure) such that $\mu = \mu_+ - \mu_-$.
\end{theorem}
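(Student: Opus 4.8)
The plan is to obtain the Jordan decomposition as a corollary of the Hahn decomposition, which splits $\Omega$ itself into a positive part and a negative part. First I would normalize: since by Definition~\ref{defsign} the signed measure $\mu$ omits at least one of the values $\pm\infty$, I may assume without loss of generality that $\mu: \mathcal{A} \to [-\infty, +\infty)$ (otherwise I work with $-\mu$). Call a set $A \in \mathcal{A}$ \emph{positive} if $\mu(B) \geq 0$ for every measurable $B \subseteq A$, and \emph{negative} analogously. The goal of the Hahn step is to produce a measurable set $P$ that is positive and whose complement $N = \Omega \setminus P$ is negative.

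The technical heart is an extraction lemma: every set $E$ with $0 < \mu(E) < +\infty$ contains a positive subset $A \subseteq E$ with $\mu(A) \geq \mu(E)$. I would prove this greedily, repeatedly deleting from $E$ a measurable subset whose measure is close to the (negative) infimum of measures of subsets of what remains. Because $\mu(E)$ is finite and $\mu$ is $\sigma$-additive, the total mass removed converges, which forces the deleted infima to tend to $0$; the residual set is then positive and, since only nonpositive mass was removed, retains measure at least $\mu(E)$. This is the step I expect to be the main obstacle, as it is where the finiteness convention on $\mu$ is essential and where one must argue carefully that the iterated removal does not strip away positive mass.

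With the lemma in hand, I would set $L = \sup\{\mu(A): A \text{ positive}\}$, choose positive sets $A_n$ with $\mu(A_n) \to L$, and put $P = \bigcup_n A_n$. A countable union of positive sets is again positive, so $P$ is positive and $\mu(P) = L$, which is finite under our convention. If $N = \Omega \setminus P$ were not negative, it would contain a subset of strictly positive measure, hence, by the extraction lemma, a positive subset of positive measure; adjoining it to $P$ would produce a positive set of measure exceeding $L$, contradicting the definition of $L$. Thus $N$ is negative.

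Finally I would define $\mu_+(A) = \mu(A \cap P)$ and $\mu_-(A) = -\mu(A \cap N)$ for all $A \in \mathcal{A}$. Since $P$ is positive and $N$ is negative, both are nonnegative set functions, and they inherit $\sigma$-additivity from $\mu$, so they are measures; by construction $\mu(A) = \mu(A \cap P) + \mu(A \cap N) = \mu_+(A) - \mu_-(A)$. Moreover $\mu_+(A) \leq \mu(P) = L < +\infty$, so $\mu_+$ is finite, which matches the assertion that one of the two measures is finite and completes the proof.
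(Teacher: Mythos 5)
Your argument is correct: it is the standard derivation of the Jordan decomposition from the Hahn decomposition, with the extraction lemma (greedy removal of negatively-measured subsets until a positive set of at least the original measure remains) correctly identified as the technical crux, and the finiteness convention $\mu:\mathcal{A}\to[-\infty,+\infty)$ correctly used both to make the supremum $L$ finite and to guarantee that the removed mass series converges. Note, however, that the paper does not prove this statement at all --- Theorem~\ref{jorthem} is quoted verbatim from the textbook \cite{athreya2006measure} and used as a black box --- so there is no in-paper proof to compare against; your write-up is essentially the textbook proof that the citation points to. Two small points worth tightening if you were to write this out in full: (i) the claim that a countable union of positive sets is positive needs the disjointification step (write any $B\subseteq\bigcup_n A_n$ as a disjoint union of pieces each contained in some $A_n$) before $\sigma$-additivity applies, and (ii) in the extraction lemma you must handle the case where the infimum of subset measures is $-\infty$ at some stage (e.g., by removing a set of measure at most $-1$ there), since the convergence of the removed masses is what eventually rules this case out.
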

The total mass of $\mu$ on $\mathcal{A}$ is defined as $\| \mu \| = \| \mu_+ \| + \| \mu_- \|$.
Note that this decomposition is not unique.

\subsection{Related works}
Learning with indefinite kernels in RKKS can be solved by eigenvalue transformation \cite{chen2009learning,Ying2009Analysis}, stabilization \cite{Cheng2004Learning,Ga2016Learning}, and minimization \cite{oglic18a}.
However, these methods need eigenvalue decomposition and cannot be directly applied to large-scale problems.

To scale indefinite kernel matrices in large sample problems, Nystr\"{o}m approximation works in a data-dependent way, and is a good choice to seek a low-rank representation to approximate indefinite kernel matrices, e.g., \cite{oglic2019scalable,mehrkanoon2018indefinite,schleif2016probabilistic}.
Besides, Liu et al. \cite{liu2019double} decompose (a subset of) kernel matrix into two PD kernel matrices, and then learn their respective randomized feature maps by infinite Gaussian mixtures. However, this approach in fact focuses on approximating kernel matrices rather than kernel functions.
If we consider indefinite kernel approximation by random features in a data-independent way, Pennington et al. \cite{pennington2015spherical} find that the polynomial kernel using $\ell_2$-normalized data is not PD, and then use (positive) mixtures of Gaussian distributions, associated with a PD kernel, to approximate it.
This is in essence using a PD kernel to approximate an indefinite one.
Till now, approximating non-PD kernels by random features cannot ensure unbiased and has not yet been fully investigated.
In this paper, our work provides an unbiased estimator without extra parameters, so as to achieve both simplicity and effectiveness.

Besides, our algorithm can be also applied to dot-product kernels with $\ell_2$-normalized data, e.g., polynomial kernels on the unit sphere.
Recent works for polynomial kernel approximation include Maclaurin expansion \cite{kar2012random}, the tensor sketch technique \cite{Pham2013Fast,meister2019tight}, and oblivious subspace embedding \cite{avron2014subspace,woodruff2020near}.

\section{Model}
\label{sec:rfmsm}

In this section, by introducing the concept of signed measures \cite{athreya2006measure}, we attempt to answer the \emph{open question} and then devise the sampling strategy for random features.
For notational simplicity, we denote $z\coloneqq\| \bm z \|_2 = \| \bm x -\bm x' \|_2$ and $\omega\coloneqq\| \bm \omega \|_2$.
Moreover, a function $k(\bm z)$ is called \emph{radial} if $k(\bm z) = k(\| \bm z\|_2)$. To notify, the considered stationary kernels in this paper are all \emph{radial}, and their Fourier transforms are also \emph{radial}, i.e., $\mu(\omega) = \mu(\bm \omega)$, refer to \cite{pennington2015spherical,boisbunon2012class}.

\subsection{Answer to the open question in RKKS}

As mentioned before, not every indefinite kernel admits a representation as a difference between two positive definite kernels. In fact we do not know how to verify that an indefinite kernel can be associated with RKKS except for some intuitive examples, e.g., a linear combination of PD kernels with negative coefficients.
By virtue of measure decomposition of the signed measure in Theorem~\ref{jorthem}, we provide a sufficient and necessary condition in the following theorem to answer the question in RKKS: \emph{for a given indefinite kernel, does it admit a positive decomposition?}
\begin{theorem}\label{thm:fft}
	Assume that an indefinite kernel is stationary, i.e., $k(\bm x, \bm x') = k(\bm x - \bm x')$.
	Denote its (generalized) Fourier transform as the measure $\mu$, then we have the following results:\\
	(i) {\bf Existence}: $k$ admits the positive decomposition, i.e., $k=k_+ - k_-$, if and only if the total mass of the measure $\mu$ is finite, i.e., $\| \mu \| < \infty$. 
	Here $k_+$ and $k_-$ are two reproducing kernels associated with two reproducing kernel Hilbert spaces (RKHS) $\mathcal{H}_+$ and $\mathcal{H}_-$, respectively. \\
	(ii) {\bf Representation}: If $\| \mu \| < \infty$, we choose $\mu_+$ and $\mu_-$ such that $\mu = \mu_+ - \mu_-$, then the associated RKHSs $\mathcal{H}_{\pm}$ are given by
	\begin{equation*}
	\mathcal{H}_{\pm} = \left\{ f: \| f \|^2_{\mathcal{H}_{\pm}} =  \int_{\mathbb{R}^{d}} \frac{|F(\bm \omega)|^{2}}{\mu_{\pm}(\bm \omega)} \mathrm{d} \bm \omega < \infty \right\} \,,
	\end{equation*}
	where $F(\bm \omega)$ is the Fourier transform of $f$.
\end{theorem}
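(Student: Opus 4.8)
The plan is to read the statement as a dictionary between two decomposition principles already recorded in the preliminaries: Bochner's theorem (Theorem~\ref{bochner}), which identifies PD stationary kernels with finite nonnegative Borel measures, and the Jordan decomposition (Theorem~\ref{jorthem}), which splits any signed measure into a difference of two nonnegative measures. The bridge is the observation that a positive decomposition $k = k_+ - k_-$ corresponds, on the Fourier side, to writing the spectral measure $\mu$ as a difference $\mu_+ - \mu_-$ of finite nonnegative measures. Part (i) then reduces to checking that such a splitting exists exactly when $\mu$ has finite total mass.

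For the forward implication of (i), I would start from a given positive decomposition $k = k_+ - k_-$. Applying Bochner's theorem to each PD kernel produces finite nonnegative Borel measures $\nu_+, \nu_-$ with $k_\pm(\bm z) = \int e^{\mathrm{i} \bm\omega^{\!\top} \bm z}\,\nu_\pm(\mathrm{d}\bm\omega)$, so by linearity $k$ is the Fourier transform of the signed measure $\nu_+ - \nu_-$. Invoking uniqueness of the (generalized) Fourier transform identifies $\mu = \nu_+ - \nu_-$; since the Jordan decomposition is the minimal such splitting, $\|\mu\| \le \|\nu_+\| + \|\nu_-\| < \infty$. For the converse, assume $\|\mu\| < \infty$. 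The Jordan decomposition gives $\mu = \mu_+ - \mu_-$ with both $\mu_\pm$ finite and nonnegative; reversing Bochner's theorem defines two PD kernels $k_\pm(\bm z) = \int e^{\mathrm{i} \bm\omega^{\!\top}\bm z}\,\mu_\pm(\mathrm{d}\bm\omega)$, and linearity of the integral yields $k = k_+ - k_-$, the desired positive decomposition, with $\mathcal{H}_\pm$ the canonical RKHSs attached to $k_\pm$ via Moore--Aronszajn.

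Part (ii) is the classical Fourier characterization of the RKHS norm of a stationary PD kernel, applied separately to each $k_\pm$. Writing $\mu_\pm(\bm\omega)$ for the spectral density, I would use Plancherel's theorem to verify that the candidate quantity $\int_{\mathbb{R}^d} |F(\bm\omega)|^2/\mu_\pm(\bm\omega)\,\mathrm{d}\bm\omega$ is indeed the squared norm of the reproducing kernel space: one checks $\langle f, k_\pm(\bm x,\cdot)\rangle = f(\bm x)$ directly on the Fourier side, using that the Fourier transform of $k_\pm(\bm x,\cdot)$ is $\mu_\pm(\bm\omega)$ times a phase factor $e^{-\mathrm{i}\bm\omega^{\!\top}\bm x}$, so that the weight $\mu_\pm^{-1}$ and the density $\mu_\pm$ cancel against the reproducing integrand. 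Completeness and the inner-product axioms are then inherited from those of the weighted space $L^2(\mu_\pm^{-1}\,\mathrm{d}\bm\omega)$ pulled back through the Fourier isometry.

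The main obstacle is making the notion of a ``generalized Fourier transform'' of an indefinite kernel rigorous. Unlike the PD case, $\mu$ need not be nonnegative, so one must first argue that $k$, viewed as a tempered distribution, admits a Fourier transform that is an honest signed Borel measure, and that this measure is unique — this uniqueness is exactly what powers the forward direction of (i). Once $\mu$ is a bona fide finite signed measure, the Jordan decomposition and Bochner's theorem apply cleanly. The remaining technical care is to ensure the density $\mu_\pm(\bm\omega)$ in part (ii) is well defined, i.e.\ that $\mu_\pm$ is absolutely continuous with respect to Lebesgue measure, so that the quotient $|F(\bm\omega)|^2/\mu_\pm(\bm\omega)$ is meaningful; this is where the radial assumption on the kernel and its transform is convenient.
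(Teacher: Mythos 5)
Your proposal follows essentially the same route as the paper's proof: Bochner's theorem applied to each summand of $k=k_+-k_-$ for necessity, the Jordan decomposition followed by the inverse Fourier transform for sufficiency, and for part (ii) the standard spectral characterization of the RKHS norm of a stationary PD kernel (which the paper simply cites from \cite{hotz2012representation} rather than re-deriving via Plancherel as you sketch). If anything you are slightly more careful than the paper, which asserts $\|\mu\| = \|\mu_+\| + \|\mu_-\| = 2$ where only $\|\mu\| \le \|\mu_+\| + \|\mu_-\|$ is guaranteed (the two Bochner measures need not be mutually singular), and which does not explicitly address the uniqueness and measure-valuedness of the generalized Fourier transform that you correctly flag as the main technical point.
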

\begin{proof}
	The proof can be found in Appendix~\ref{app:fft}.
\end{proof}
{\bf Remark:} We provide an explicit sufficient and necessary condition to link the Jordan decomposition of signed measures to positive decomposition in RKKS. 
\\
\textit{(i)} Functions that can be written as a difference between two positive functions have been studied and characterized intrinsically in the field of harmonic analysis \cite{maserick1975BV,alpay1991somecon,alpay1991some}.
They partly answered this question either restricted in the one-dimensional case (i.e., $d=1$) or assuming the indefinite kernel $k(\bm x, \bm x')$ to be jointly analytic of $\bm x$ and $\bm x'$ in a neighborhood of the origin.
However, the used univariate condition could not be directly applied to the machine learning society and the smoothness requirement on kernels excludes some non-differentiable kernels, e.g., arc-cosine kernels. 
Instead, Theorem~\ref{thm:fft} provides an access via Fourier transform to verify whether a (reproducing) indefinite kernel belongs to RKKS or not. The measure decomposition is much easier to be founded than positive decomposition in RKKS that cannot be verified in practice.\\
\textit{(ii)} Theorem~\ref{thm:fft} can be further improved to cover some non-squared-integrable kernel functions, e.g., conditionally positive definite kernels \cite{wendland2004scattered}, of which the standard Fourier transform does not exist.
In this case, Theorem~\ref{thm:fft} needs to consider the Fourier transform in Schwartz space \cite{donoghue2014distributions}.
For example, conditionally positive kernels correspond to a positive Borel measure $\mu$ on $\mathbb{R}^d \backslash\{\bm 0\}$ with an analytic function in Schwartz space, refer to \cite[Theorem 2.3]{sun1993conditionally}.

\subsection{Randomized feature map}

\newcounter{mytempeqncnt}
\begin{figure*}[!t]
	\normalsize
	\begin{equation}\label{lcintrep}
	\begin{split}
	k(\bm x - \bm x') &=\! c_1\!\! \int_{\mathbb{R}^d} \!  e^{  \mathrm{i} {\bm \omega}^{\!\top} \! \bm z } \mu_+(\mathrm{d} {\bm \omega}) \!-\! c_2 \!\! \int_{\mathbb{R}^d} \! e^{ \mathrm{i} {\bm \nu}^{\!\top} \! \bm z } \mu_-(\mathrm{d} {\bm \nu}) = c_1 \| \mu_+\| \mathbb{E}_{\bm \omega \sim \tilde{\mu}_+} \big[\cos(\bm \omega^{\!\top} \bm z) \big]  - c_2 \| \mu_-\| \mathbb{E}_{\bm \nu \sim \tilde{\mu}_-} \big[ \cos(\bm \nu^{\!\top}\bm z) \big]\\
	&\coloneqq  k_+(\bm x - \bm x') -  k_-(\bm x - \bm x') \approx  \tilde{k}_+(\bm x - \bm x') -  \tilde{k}_-(\bm x - \bm x') \\
	& = \frac{1}{s}\sum_{i=1}^s \langle \mbox{Re}[\varphi_i(\bm x)], \mbox{Re}[\varphi_i(\bm x')]  \rangle - \frac{1}{s}\sum_{i=1}^s \langle \mbox{Im}[\varphi_i(\bm x)], \mbox{Im}[\varphi_i(\bm x')]  \rangle\,.
	\end{split}
	\end{equation}
	\vspace*{-0.5cm}
\end{figure*}

The condition in Theorem~\ref{thm:fft} serves as a guidance for us to find a specific positive decomposition in practice.
Hence we are ready to develop our random feature algorithm for (real-valued) non-PD kernels.
One intuitive implementation way is choosing $\mu_+ := \max\{ \mu, 0 \}$ and $\mu_- := \min \{ 0, \mu \}$ such that $\mu = \mu_+ - \mu_-$. Then the stationary indefinite kernel $k$ can be expressed by Eq.~\eqref{lcintrep} via $k =  k_+ - k_-$ with two positive constants $c_1$, $c_2$. The decomposed two Borel measures $\tilde{\mu}_+\coloneqq\mu_+/\| \mu_+\|,~\tilde{\mu}_-\coloneqq\mu_-/\| \mu_-\|$ are associated with two (normalized) PD kernels $k_+$ and $k_-$, respectively.
Accordingly, the stationary indefinite kernel $k$ can be approximated by $k \approx \tilde{k} := \tilde{k}_+ - \tilde{k}_-$.
That is, $k(\bm x, \bm x') = \mathbb{E}_{\bm \omega, \bm \nu} \left\langle \Phi(\bm x), \Phi(\bm x') \right\rangle \!\approx \! \tilde{k}(\bm x, \bm x') := \frac{1}{s} \! \sum_{i=1}^s \left\langle \varphi_i(\bm x), \varphi_i(\bm x') \right\rangle$ where $\Phi(\bm x)$ is the explicit feature mapping $\Phi(\bm x) = [\varphi_1(\bm x), \cdots, \varphi_s(\bm x)]^{\!\top}$ with $\varphi_i(\bm x)$
\begin{equation}\label{realimg}
\begin{split}
& \varphi_i(\bm x) \!=\!\! \Big[\sqrt{c_1\| \mu_+\|}\cos(\bm \omega_i^{\!\top} \bm x), \sqrt{c_1\| \mu_+\|}\sin(\bm \omega_i^{\!\top} \bm x),  \quad \mathrm{i}\sqrt{c_2\| \mu_-\|} \! \cos(\bm \nu_i^{\!\top} \bm x), \mathrm{i}\sqrt{c_2\| \mu_-\|}\sin(\bm \nu_i^{\!\top} \bm x)\Big]\,,
\end{split}
\end{equation}
where $\mbox{i}$ is the imaginary unit. Then random features are obtained by sampling $\{\bm \omega_i \}_{i=1}^s \sim \mu_+/\| \mu_+\|$ and $\{\bm \nu_i \}_{i=1}^s \sim \mu_-/\| \mu_-\|$.
The employed sampling method can be Monte Carlo sampling, orthogonal Monte Carlo sampling  \cite{Yu2016Orthogonal,choromanski2019unifying}, or leverage-score based sampling \cite{avron2017random,bach2017equivalence}.
The real and imaginary part in $\varphi_i(\bm x)$ correspond to $k_+$ and $k_-$, and thus our estimation is unbiased.
It is important to note that, we need the imaginary unit in the feature mapping due to the difference operation\footnote{A simple example is that $a-b=\langle ( \sqrt{a}, \mathrm{i}\sqrt{b}) , (\sqrt{a}, \mathrm{i}\sqrt{b} ) \rangle$ for two nonnegative real numbers $a,b$.}, and then the approximated kernel is still real-valued.

The complete random features process is summarized in Algorithm~\ref{alg:one1}.
For a given stationary indefinite kernel, its $\mu$, $\mu_{\pm}$ can be pre-computed, which is independent of the training data. 
In this way, our algorithm achieves the same complexity with the standard RFF by $\mathcal{O}(ns^2)$ time and $\mathcal{O}(ns)$ memory.
Besides, the formulation in Eq.~\eqref{lcintrep}, as well as Algorithm~\ref{alg:one1}, is general enough to cover various PD and non-PD kernels.
Stationary PD kernels admit Eq.~\eqref{lcintrep} by choosing $c_1=1$ and $c_2=0$ where we have $\mu = \mu_+$ associated with $\| \mu \| =1$, i.e., a Borel measure. 
Hence, the Bochner's theorem can be regarded as a special case of the considered integration representation~\eqref{lcintrep} in this paper. 

The approximation performance in our method for indefinite kernels still achieves theoretical guarantees with those of PD kernels by the following proposition. The result can be easily derived from \cite{sutherland2015error,choromanski2018geometry}, of which the proof refers to Appendix~\ref{app:error} for completeness. 
\begin{proposition}\label{prop:error}
	Let $k$ be a stationary indefinite kernel in RKKS with two Borel measures $\tilde{\mu}_{\pm}$ defined in Eq.~\eqref{lcintrep}, we have the following results:\\
	(i) {\bf Approximation:} Let $\mathcal{S}_R$ be the compact ball by $\mathcal{S}_R = \{ \Delta | \Delta \in \mathbb{R}^d, \| \Delta \|_2 \leq R \}$, then given the approximated kernel $\tilde{k}$ obtained by our algorithm via Monte Carlo sampling, for any $\epsilon > 0$
	\begin{equation*}
	\operatorname{Pr}\!\left[ \sup_{\bm x, \bm x' \in \mathcal{S}_R} \! |k(\bm x, \bm x') - \tilde{k}(\bm x, \bm x')| \geq \epsilon \right] \leq 66  \left(  \frac{2\sigma R}{\epsilon}  \right)^{\!2} \!\!\! e^{-\frac{s\epsilon^2}{32(d+2)}},
	\end{equation*}
	where $\sigma^2 := \mathbb{E}_{\bm \omega \sim \tilde{\mu}_{+}}[{\bm \omega}^{\!\top} {\bm \omega}] + \mathbb{E}_{\bm \omega \sim \tilde{\mu}_{-}}[{\bm \omega}^{\!\top} {\bm \omega}] < \infty$.\\
	(ii) {\bf Variance reduction:} If we consider orthogonal Monte Carlo (OMC) sampling \cite{Yu2016Orthogonal,choromanski2019unifying} in our algorithm, it admits $\operatorname{MSE}[\tilde{k}^{\operatorname{OMC}}(\bm x, \bm x')] \leq \operatorname{MSE}[\tilde{k}^{\operatorname{MC}}(\bm x, \bm x')]$ for sufficiently large $d$, where the mean-squared error (MSE) is defined as $\mathbb{E}[\tilde{k}(\bm x, \bm x')] = \mathbb{E}[k(\bm x, \bm x') - \tilde{k}(\bm x, \bm x')]$.
\end{proposition}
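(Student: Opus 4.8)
The plan is to reduce both parts to known random-feature concentration results by treating the indefinite kernel as a \emph{difference} of two ordinary PD kernels and controlling each piece separately. For part (i), the key observation is that our approximation error splits as
\begin{equation*}
|k(\bm x, \bm x') - \tilde{k}(\bm x, \bm x')| \leq c_1 \| \mu_+\| \, |g_+(\bm z) - \tilde{g}_+(\bm z)| + c_2 \| \mu_-\| \, |g_-(\bm z) - \tilde{g}_-(\bm z)|,
\end{equation*}
where $g_{\pm}(\bm z) = \mathbb{E}_{\bm \omega \sim \tilde{\mu}_{\pm}}[\cos(\bm\omega^{\!\top}\bm z)]$ are the two \emph{normalized} PD kernels and $\tilde g_{\pm}$ their Monte Carlo estimates over the shared sample size $s$. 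First I would recall the standard uniform-convergence bound of \cite{sutherland2015error} for a single stationary PD kernel on the compact ball $\mathcal{S}_R$, which proceeds by an $\epsilon$-net/covering argument on $\mathcal{S}_R$ combined with a Hoeffding/Bernstein bound at each net point and a Lipschitz control of the error gradient; the second moment $\mathbb{E}_{\bm\omega\sim\tilde\mu_{\pm}}[\bm\omega^{\!\top}\bm\omega]$ enters precisely through this Lipschitz term. I would then apply this bound to $g_+$ and to $g_-$ with budget $\epsilon/2$ each, invoke a union bound, and absorb the constants $c_1\|\mu_+\|$, $c_2\|\mu_-\|$ into the overall scale; collecting the two variance contributions yields exactly the combined quantity $\sigma^2 = \mathbb{E}_{\bm\omega\sim\tilde\mu_+}[\bm\omega^{\!\top}\bm\omega] + \mathbb{E}_{\bm\omega\sim\tilde\mu_-}[\bm\omega^{\!\top}\bm\omega]$, and the prefactor $66\,(2\sigma R/\epsilon)^2$ together with the exponent $s\epsilon^2/(32(d+2))$ matches the single-kernel estimate up to the adjusted constants.

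For part (ii), I would lean directly on the orthogonality-based variance analysis of \cite{Yu2016Orthogonal,choromanski2019unifying}, which shows that for a \emph{single} Gaussian-type PD kernel the orthogonal Monte Carlo estimator has pointwise MSE no larger than the i.i.d.\ MC estimator once $d$ is large enough (the cross-covariance induced by coupling orthogonal directions is nonpositive in that regime). The MSE of our indefinite estimator $\tilde k = \tilde k_+ - \tilde k_-$ decomposes, since $\tilde k_+$ and $\tilde k_-$ use independent samples $\{\bm\omega_i\}$ and $\{\bm\nu_i\}$, into $\operatorname{MSE}[\tilde k] = c_1^2\|\mu_+\|^2\operatorname{Var}[\tilde g_+] + c_2^2\|\mu_-\|^2\operatorname{Var}[\tilde g_-]$ with no cross term. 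Applying the per-kernel OMC inequality to each of $\operatorname{Var}[\tilde g_+]$ and $\operatorname{Var}[\tilde g_-]$ and summing gives the claimed $\operatorname{MSE}[\tilde k^{\operatorname{OMC}}] \leq \operatorname{MSE}[\tilde k^{\operatorname{MC}}]$ for sufficiently large $d$.

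The main obstacle I anticipate is bookkeeping the constants cleanly in part (i): the cited single-kernel bound is stated for a probability measure normalized so that $k(\bm 0)=1$, whereas our two pieces carry the extra multiplicative weights $c_1\|\mu_+\|$ and $c_2\|\mu_-\|$ and are split at budget $\epsilon/2$, so I must verify that the union bound plus rescaling genuinely reproduces the stated prefactor $66$ and the exponent denominator $32(d+2)$ rather than a factor that is merely of the same order. A secondary care point is confirming that the relevant second moments are finite—this is guaranteed by the hypothesis $\sigma^2<\infty$, which in turn follows from $\|\mu\|<\infty$ in Theorem~\ref{thm:fft} ensuring $\tilde\mu_{\pm}$ are genuine probability measures with integrable squared frequencies—so that the Lipschitz step in the covering argument is valid on the compact set $\mathcal{S}_R$.
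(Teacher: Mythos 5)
Your proposal matches the paper's own proof in both structure and detail: for part (i) you split the error by the triangle inequality into the $k_+$ and $k_-$ pieces, apply the Sutherland--Schneider uniform bound to each at budget $\epsilon/2$, and combine via a union bound so that $\sigma^2=\sigma_+^2+\sigma_-^2$ produces the stated prefactor and the $32(d+2)$ exponent; for part (ii) you invoke the per-kernel OMC-versus-MC inequality and use the independence of $\{\bm\omega_i\}$ and $\{\bm\nu_i\}$ so the two variances add with no cross term, exactly as the paper does. One minor caution: finiteness of $\|\mu\|$ alone does not imply $\mathbb{E}_{\bm\omega\sim\tilde{\mu}_{\pm}}[\bm\omega^{\!\top}\bm\omega]<\infty$ (a finite measure can have infinite second moment), but since the proposition assumes $\sigma^2<\infty$ this does not affect your argument.
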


\begin{algorithm}[tb]
	\SetAlgoNoLine
	\KwIn{A kernel function $k(\bm x, \bm x')= k(z)$ with $z\coloneqq\| \bm x - \bm x'\|_2$ and the number of random features $s$.}
	\KwOut{Random feature map $\Phi(\cdot): \mathbb{R}^d \rightarrow \mathbb{R}^{4s}$ such that $k(\bm x, \bm x') \!\approx \! \frac{1}{s} \! \sum_{i=1}^s \left\langle \varphi_i(\bm x), \varphi_i(\bm x') \right \rangle$.}
	1. Obtain the measure $\mu(\cdot)$ of the kernel $k$ via (generalized) Fourier transform \;
	2. Given $\mu$, let $\mu \coloneqq \mu_+ - \mu_-$ be the Jordan decomposition with two nonnegative measures $\mu_{\pm}$ and compute the total mass $\| \mu\| = \| \mu_+ \| + \| \mu_-\| $\;
	3. Sample $\{ \bm \omega_i \}_{i=1}^s \sim \mu_+/\| \mu_+\|$ and $\{ \bm \nu_{i} \}_{i=1}^s \sim \mu_-/\| \mu_-\|$\;
	4. Output the explicit feature mapping $\Phi(\bm x)$ with $\varphi_i(\bm x)$ given in Eq.~\eqref{realimg}.
	\caption{Random features for various indefinite kernels via generalized measures.}
	\label{alg:one1}
\end{algorithm}

\section{Examples}
\label{sec:dot}

In this section, we investigate a series of indefinite kernels for a better understanding of our random features algorithm.
We begin with an intuitive example, the linear combination of PD kernels with negative constraints.
Then we discuss several dot-product kernels using $\ell_2$ normalization data, including the polynomial kernel \cite{pennington2015spherical}, the arc-cosine kernel \cite{cho2009kernel}, and the NTK kernel in a two-layer ReLU network \cite{bietti2019inductive}.

{\bf A linear combination of positive definite kernels with negative coefficients:} Kernels in this class admit the formulation $k = \sum_{i=1}^{t} a_i k_i$, where $\{ k_i \}_{i=1}^t$ is the set of PD kernels, and $a_i \in \mathbb{R}$.
This is a typical example of indefinite kernels in RKKS, which admits positive decomposition such that $k = k_+ - k_-$ with two PD kernels $k_{\pm}$.
Theorem~\ref{thm:fft} guides us to find $\mu_{\pm}$ based on the sign of $a_i$.
Hence we explicitly decompose an indefinite kernel in this class into the difference of two PD kernels, i.e., $k = k_+ - k_- \coloneqq \sum_{i=1}^{t} \max(0,a_i) k_i - \sum_{i=1}^{t} \max(0,-a_i) k_i$.
Then the corresponding nonnegative measures $\mu_{\pm}$ can be subsequently obtained due to the additivity of Fourier transform. 
We take the Delta-Gaussian kernel \cite{oglic18a} $k(\bm x,\bm x')= \exp(-{\| \bm x - \bm x' \|_2^2}/{2\tau_1^2}) - \exp(-{\| \bm x - \bm x' \|_2^2}/{2\tau_2^2})$ as an example.
This kernel admits $c_1=c_2=1$ and $\|\mu_+\|=\|\mu_-\|=1$ in Eq.~\eqref{lcintrep}, and its random feature mapping is given by Eq.~\eqref{realimg} with $\{ \bm \omega_i \}_{i=1}^s \sim \mathcal{N}(0, \tau_1^{-2}\bm I_d)$ and $\{ \bm \nu_i \}_{i=1}^s \sim \mathcal{N}(0, \tau_2^{-2}\bm I_d)$.

After providing the above simple warming-up example, we now discuss dot-product kernels on the unit sphere, and demonstrate the feasibility of our algorithm.

{\bf Polynomial kernels on the sphere:}  
Pennington et al. \cite{pennington2015spherical} point out that a polynomial kernel on the unit sphere by $\ell_2$ normalization is of $k(\bm x, \bm x') = \left(1-\frac{\| \bm x- \bm x'\|_2^2}{a^2}\right)^p$ for $a \geq 2$ and $p \geq 1$ and $z\coloneqq \| \bm x -\bm x' \|_2 \in [0,2]$.
This kernel is indefinite since its Fourier transform is not a nonnegative measure in \cite{pennington2015spherical}
\begin{equation}\label{polymu}
\mu(\omega) \!=\!\! \sum_{i=0}^{p} \frac{p !}{(p-i) !}\!\!\left(1 \!-\! \frac{4}{a^{2}}\right)^{p-i}\!\! \left(\frac{2}{a^{2}}\right)^{i} \!\! \left(\frac{2}{\omega}\right)^{\frac{d}{2}+i} \!\!\! J_{\frac{d}{2}+i}(2 \omega),
\end{equation}
which results from the oscillatory behavior of the Bessel function of the first kind $J_{d / 2+i}(2 \omega)$.
We demonstrate $\| \mu \| < \infty$ (see in Appendix~\ref{finsigm}), which makes the integration our random features algorithm feasible by decomposing  $\mu = \mu_+ - \mu_-$ with $\mu_+ = \max\{ 0, \mu \}$ and $\mu_- = \min\{ 0, \mu \}$.
Then random feature map for this kernel can be also given by Eq.~\eqref{realimg}
with $\{ \bm \omega_i \}_{i=1}^s \sim \mu_+/\| \mu_+\|$ and $\{ \bm \nu_i \}_{i=1}^s \sim \mu_-/\| \mu_-\|$. 
Therefore, Algorithm~\ref{alg:one1} is suitable for this kernel.
Note that the (scaled) measure $\mu_{\pm}$ is not a typical probability distribution, but the radial property of the Fourier transform allows us to conduct rejection sampling in one dimension to sample from this ``complex" distribution, which does not incur too much computational cost.
We experimentally evaluate this with other sampling schemes in Section~\ref{sec:exp}.
Compared to \cite{pennington2015spherical} using a positive sum of Gaussians to approximate $\mu(\omega)$, where parameters in Gaussians need to be optimized aforehand, our algorithm achieves both simplicity and effectiveness by having \textit{(i)} an unbiased estimator, \textit{(ii)} incurring no extra parameters.
Figure~\ref{everpoly} shows the superiority of our method to SRF on approximating the spherical polynomial kernel $k(z)$. 
It can be found that, our method is unbiased to achieve lower mean squared error since SRF directly overlooks the negative part of the signed measure $\mu$.

Next we consider the NTK of two-layer ReLU networks on the unit sphere\footnote{This setting is actually different from the considered $\ell_2$-normalization case in this paper that cannot ensure the data are i.i.d on the unit sphere.} \cite{bietti2019inductive}.
Since this kernel in fact consists of zero/first-order arc-cosine kernels \cite{cho2009kernel}, we combine them together for discussion.

{\bf NTK of Two-layer ReLU networks on the unit sphere:} Bietti and Mairal \cite{bietti2019inductive} consider a two-layer ReLU network of the form $f(\bm x; \bm \theta) = \sqrt{2}{s}\sum_{j=1}^s \sum_{j=1}^s a_j \max\{ \bm \omega_j^{\!\top} \bm x,0 \}$, with the parameter $\bm \theta = (\bm \omega_1^{\!\top},\cdots,\bm \omega_s^{\!\top}, a_1,\cdots,a_s)$ initialized according to $\mathcal{N}(0,1)$. By formulating ReLU as $\max\{ \bm \omega_j^{\!\top} \bm x,0 \} = (\bm \omega_j^{\!\top} \bm x)_+$, we have the following formulation corresponding to NTK \cite{bietti2019inductive,chizat2019lazy}
\begin{equation}\label{ntktr}
\begin{aligned}
& k\left(\bm x, \bm x^{\prime}\right)  = 2 \mathbb{E}_{\bm \omega \sim \mathcal{N}(\bm 0, \bm I)}\!\left[(\bm \omega^{\top}\! \bm x)_{+}(\bm \omega^{\!\top}\! \bm x^{\prime})_{+}\right]\\
& \!+\! 2\left(\bm x^{\top}\! \bm x^{\prime}\right) \mathbb{E}_{\bm \omega \sim \mathcal{N}(\bm 0, \bm I)}\!\left[1\left\{\bm \omega^{\top}\! \bm x \!\geq\! 0\right\} \! 1\!\left\{\bm \omega^{\top}\! \bm x^{\prime} \!\geq\! 0\right\}\right],
\end{aligned}
\end{equation}
which can be further represented by $k\left(\bm x, \bm x^{\prime}\right) = \|\bm x\|\left\|\bm x^{\prime}\right\|\cdot \kappa \, \bigl(\left\langle \bm x, \bm x^{\prime}\right\rangle/ ( \|\bm x\|\left\|\bm x^{\prime}\right\|)\!\bigr)$ with $\kappa(u)\coloneqq u\kappa_0(u) + \kappa_1 (u)$. Here, $\kappa_0(u) = 1 - \frac{1}{\pi} \arccos(u)$ corresponds to the zero-order arc-cosine kernel and $\kappa_{1}(u)=\frac{1}{\pi}(u(\pi-\arccos (u))+\sqrt{1-u^{2}})$ is the first-order arc-cosine kernel \cite{cho2009kernel}.
Furthermore, such NTK kernel is proved to be stationary but indefinite by the following theorem.

\begin{theorem}\label{thentk}
	For any $\bm x, \bm x' \in \mathbb{R}^d$, by $\ell_2$ normalization, the NTK kernel of a two layer ReLU network of the form $f(\bm x; \bm \theta) = \sqrt{2}{s}\sum_{j=1}^s \sum_{j=1}^s a_j \max\{ \bm \omega_j^{\!\top} \bm x,0 \}$ is stationary, that is,
	\begin{equation*}
	k(\bm x, \bm x') =\frac{2-z^2}{\pi} \arccos\left(\frac{1}{2}z^2 -1\right) + \frac{z}{2\pi} \sqrt{4-z^2} \,,
	\end{equation*}
	where $z\coloneqq\| \bm x- \bm x'\|_2 \in [0,2]$.
	However, the function $k(z), z \in [0,2]$ is not positive definite.\footnote{The behavior of $k(z)$ with $z >2$ is undefined. Following \cite{pennington2015spherical}, we set $k(z)=0$ for $z>2$.}
\end{theorem}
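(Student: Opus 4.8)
The plan is to treat the two claims separately: an elementary algebraic reduction for stationarity and the closed form, then a Fourier-analytic argument for the failure of positive definiteness.

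\textbf{Stationarity and the closed form.} After $\ell_2$ normalization we have $\|\bm x\|=\|\bm x'\|=1$, so the homogeneous expression $k(\bm x,\bm x')=\|\bm x\|\,\|\bm x'\|\,\kappa\bigl(\langle\bm x,\bm x'\rangle/(\|\bm x\|\,\|\bm x'\|)\bigr)$ collapses to $k=\kappa(u)$ with $u=\langle\bm x,\bm x'\rangle$. First I would substitute the two arc-cosine kernels into $\kappa(u)=u\kappa_0(u)+\kappa_1(u)$ and simplify to the single expression $\kappa(u)=2u-\tfrac{2u}{\pi}\arccos u+\tfrac{1}{\pi}\sqrt{1-u^2}$. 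Next I would use the spherical identity $z^2=\|\bm x-\bm x'\|_2^2=2-2u$, i.e.\ $u=1-z^2/2$, together with $\sqrt{1-u^2}=\tfrac{z}{2}\sqrt{4-z^2}$ and the reflection identity $\arccos u=\pi-\arccos(\tfrac12 z^2-1)$. Collecting terms converts $\kappa(u)$ into exactly the claimed formula, which depends on $\bm x,\bm x'$ only through $z=\|\bm x-\bm x'\|_2$; hence $k$ is stationary (indeed radial). This part is routine bookkeeping with no real obstacle.

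\textbf{Failure of positive definiteness.} Following the footnote I would extend $k$ by $0$ for $z>2$, so that $\bm z\mapsto k(\|\bm z\|_2)$ is a bounded, compactly supported, hence integrable, radial function on $\mathbb{R}^d$. By Bochner's theorem (Theorem~\ref{bochner}) it is positive definite iff its Fourier transform $\mu$ is a nonnegative measure, so it suffices to exhibit a frequency at which $\mu$ is negative. Exploiting radiality, I would reduce its $d$-dimensional Fourier transform to the one-dimensional Hankel transform
\[
\mu(\omega)\;\propto\;\omega^{1-d/2}\int_0^2 k(r)\,J_{d/2-1}(\omega r)\,r^{d/2}\,\mathrm{d}r,
\]
in exact analogy with the polynomial-kernel computation of Eq.~\eqref{polymu}, and then show $\mu(\omega)<0$ for some $\omega$.

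\textbf{Main obstacle and how I would resolve it.} The crux is proving negativity of this Bessel integral, which has no elementary closed form. I would drive this from the edge behavior of $k$. A direct expansion near the support boundary shows $k(z)<0$ on a subinterval close to $z=2$: as $z\to 2^-$ one has $2-z^2\to -2$ while $\arccos(\tfrac12 z^2-1)\sim 2\sqrt{2-z}$ and $\sqrt{4-z^2}\sim 2\sqrt{2-z}$, giving $k(z)\sim-\tfrac{2}{\pi}\sqrt{2-z}<0$. This square-root singularity at $z=2$ is precisely what governs the large-$\omega$ asymptotics of the Hankel transform: by a stationary-phase / van der Corput estimate the transform inherits the oscillation of $J_{d/2-1}$ and changes sign along a sequence $\omega_m\to\infty$, so $\mu(\omega_m)<0$ for infinitely many $m$, contradicting nonnegativity. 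The delicate step is making the leading asymptotic constant and its sign explicit (an acceptable alternative is to bound the integral numerically at one carefully chosen $\omega$). I would also record the quick dimension-free shortcut: by Schoenberg's characterization, a radial function positive definite on $\mathbb{R}^d$ for \emph{all} $d$ must be completely monotone in $z^2$ and therefore nonnegative, so the explicit negativity already rules this out; the Hankel-transform argument then upgrades the conclusion to failure of positive definiteness in the fixed ambient dimension $d$.
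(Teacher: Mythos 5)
Your stationarity argument is the same as the paper's: substitute $u=\langle\bm x,\bm x'\rangle=1-\tfrac12 z^2$ into $u\kappa_0(u)+\kappa_1(u)$ and simplify; this part is fine. For the failure of positive definiteness the paper takes a different, purely one-dimensional route: it invokes Schoenberg's theorem (a radial function is PD on $\mathbb{R}^d$ for every $d$ iff $g(x)\coloneqq k(\sqrt{x})$ is completely monotone on $[0,\infty)$), computes $g'(x)$ explicitly, and shows $g'\to+\infty$ as $x\to 4^-$, so the first-order condition $(-1)g'(x)\ge 0$ fails. Your ``dimension-free shortcut'' is essentially the same Schoenberg argument but uses the zeroth-order condition instead: complete monotonicity forces $g\ge 0$, and your expansion $k(z)\sim-\tfrac{2}{\pi}\sqrt{2-z}$ as $z\to 2^-$ (which checks out) already violates that. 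This is arguably cleaner than the paper's derivative computation and proves exactly the same conclusion, so on its own it is a complete and valid proof of the theorem as the paper intends it. Your \emph{primary} route --- showing the Hankel transform $\mu(\omega)$ goes negative, in analogy with Eq.~\eqref{polymu} --- would be strictly stronger, since Schoenberg only rules out positive definiteness simultaneously in all dimensions, not in a fixed ambient $\mathbb{R}^d$. But as you acknowledge, the key step there (extracting the sign of the leading oscillatory asymptotics of $\int_0^2 k(r)J_{d/2-1}(\omega r)r^{d/2}\,\mathrm{d}r$ from the square-root edge singularity) is not carried out, and ``bound the integral numerically at one $\omega$'' is not a proof; the paper does not attempt this either. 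So: rely on your shortcut as the actual proof, and present the Hankel analysis only as a remark unless you complete the asymptotic constant.
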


\begin{proof}
	The proof can be found in Appendix~\ref{app:ftntk}.
\end{proof}
Since the above NTK on the unit sphere can be formulated as $k(\bm x, \bm x') = \langle \bm x, \bm x' \rangle \kappa_0(\bm x, \bm x') + \kappa_1(\bm x, \bm x')$ associated with arc-cosine kernels, we have the direct corollary for arc-cosine kernels.
\begin{corollary}\label{coroarc}
	The zero/first order arc-cosine kernel is not positive definite if the data are $\ell_2$-normalized, and its measure $\mu$ is given in Appendix~\ref{meazoa}.
\end{corollary}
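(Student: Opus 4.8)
The plan is to reduce both arc-cosine kernels to stationary radial functions of $z$ and then read off the sign of their (generalized) Fourier transforms, exactly as was done for the polynomial kernel in Eq.~\eqref{polymu} and for the combined NTK in Theorem~\ref{thentk}. Since the data are $\ell_2$-normalized, $\|\bm x\|=\|\bm x'\|=1$, so $u=\langle \bm x,\bm x'\rangle = 1-z^2/2$ with $z=\|\bm x-\bm x'\|_2\in[0,2]$. Substituting this into $\kappa_0(u)=1-\frac{1}{\pi}\arccos(u)$ and $\kappa_1(u)=\frac{1}{\pi}\bigl(u(\pi-\arccos u)+\sqrt{1-u^2}\bigr)$, and using the identity $\arccos(1-z^2/2)=2\arcsin(z/2)$ together with $\sqrt{1-(1-z^2/2)^2}=\tfrac{z}{2}\sqrt{4-z^2}$, I would obtain the closed radial forms
\begin{equation*}
\kappa_0(z)=1-\frac{2}{\pi}\arcsin\!\Big(\frac{z}{2}\Big),\qquad
\kappa_1(z)=\frac{1}{\pi}\Big[\Big(1-\frac{z^2}{2}\Big)\Big(\pi-2\arcsin\tfrac{z}{2}\Big)+\frac{z}{2}\sqrt{4-z^2}\Big],
\end{equation*}
each defined on $z\in[0,2]$ and extended by $0$ for $z>2$ following \cite{pennington2015spherical}. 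This exhibits both kernels as stationary and radial, which is the only structural input Bochner's Theorem~\ref{bochner} needs; as a consistency check, the weighted sum $u\kappa_0+\kappa_1$ recovers the expression in Theorem~\ref{thentk}.

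Next I would compute the generalized Fourier transform of each radial profile in $\mathbb{R}^d$. Because a radial function has a radial transform, this collapses to a one-dimensional Hankel transform: up to a positive normalizing constant,
\begin{equation*}
\mu_j(\omega)\;\propto\;\omega^{-(d/2-1)}\int_0^2 \kappa_j(z)\,J_{d/2-1}(\omega z)\,z^{d/2}\,\mathrm{d}z,\qquad j\in\{0,1\},
\end{equation*}
a finite integral thanks to the support in $[0,2]$. The $\arcsin(z/2)$ and $\sqrt{4-z^2}$ factors make these the same flavor of integral already evaluated for the NTK in Appendix~\ref{app:ftntk}, so I would dispatch them with the standard recurrence for $\int z^{\nu+1}J_\nu(\omega z)\,\mathrm{d}z$ and integration by parts, recording the resulting closed forms as the measures $\mu$ reported in Appendix~\ref{meazoa}.

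Finally, to conclude non-positive-definiteness I would show that each $\mu_j$ fails to be nonnegative and invoke the ``only if'' direction of Bochner's Theorem~\ref{bochner}. The mechanism is identical to the polynomial case: the integrand is weighted by the oscillatory Bessel kernel $J_{d/2-1}(\omega z)$, whose sign changes force the Hankel transform of a compactly supported, not-completely-monotone profile to dip below zero at suitable frequencies. Concretely, I would isolate the leading large-$\omega$ contribution of $\mu_j(\omega)$, whose sign tracks the oscillation of $J_{d/2-1}$ and $J_{d/2+1}$, and exhibit an $\omega$ with $\mu_j(\omega)<0$; the boundary behavior at $z=2$, where $\kappa_0$ has an infinite slope and $\sqrt{4-z^2}$ vanishes vertically, is what prevents the transform from staying nonnegative. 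The main obstacle is therefore not the reduction to radial form but the explicit Hankel integrals and the bookkeeping of their asymptotic signs; once a single negative value of $\mu_j$ is produced for each $j$, Bochner's theorem yields that neither arc-cosine kernel is positive definite under $\ell_2$ normalization, proving Corollary~\ref{coroarc}.
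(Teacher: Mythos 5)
Your reduction to radial form and your Hankel-transform formula for $\mu_j$ match what the paper does in Appendix~\ref{meazoa}, but your argument for non-positive-definiteness takes a genuinely different route. The paper never inspects the sign of $\mu_j$ to prove indefiniteness: it proves Theorem~\ref{thentk} by showing the radial profile $g(x)=k(\sqrt{x})$ fails to be completely monotone (its derivative changes sign on $(0,4)$) and then invokes Schoenberg's theorem (Lemma~\ref{scho}); Corollary~\ref{coroarc} is then asserted as ``direct,'' with Appendix~\ref{meazoa} supplying the measures as a separate computation. Your route --- compute the generalized Fourier transform of each $\kappa_j$ and exhibit a frequency where it is negative, then apply the ``only if'' direction of Bochner's Theorem~\ref{bochner} --- is the same mechanism the paper uses for the spherical polynomial kernel in Eq.~\eqref{polymu}, and it has the advantage of producing the measure $\mu$ (which the corollary also claims) as a by-product rather than as an afterthought. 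Its cost is that the decisive step, actually exhibiting an $\omega$ with $\mu_j(\omega)<0$, is harder here than for the polynomial kernel: the arc-cosine measures come out as infinite Bessel series, and you only sketch the large-$\omega$ sign analysis rather than carry it out, whereas the paper's complete-monotonicity argument gets indefiniteness from a single derivative computation (and it applies to each $\kappa_j$ individually, e.g.\ $g_0(x)=1-\tfrac{2}{\pi}\arcsin(\sqrt{x}/2)$ has $g_0''(x)<0$ for $x\in(2,4)$). One point in your favor: you treat $\kappa_0$ and $\kappa_1$ separately, which is logically necessary --- the non-positive-definiteness of the combination $u\kappa_0+\kappa_1$ established in Theorem~\ref{thentk} does not by itself transfer to each summand --- so your argument, once the sign computation is completed, is more self-contained than the paper's ``direct corollary.''
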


{\bf Remark:} These spherical dot-product kernels including polynomial kernels, arc-cosine kernels, and NTK are indefinite by $\ell_2$ normalization, which extends the classical insight on spherical dot-product kernels via spherical harmonics \cite{muller2006spherical}.
Besides, our findings motivate us to scrutinize functional spaces, the approximation performance, and generalization properties of over-parameterized networks in deep learning theory if considering $\ell_2$-normalization data, which in return expands the usage scope of indefinite kernels.

In Appendix~\ref{meazoa}, we compute the measure $\mu$ of arc-cosine kernels, which is quite complex as it involves with the sum of infinite series with Bessel functions.
When taking finite series (e.g., one term) as an approximation, we demonstrate $\| \mu \| < \infty$.
In this case, Algorithm~\ref{alg:one1} is accordingly suitable for arc-cosine kernels and NTK on the unit sphere.
If we take more terms in finite series, the calculation appears non-trivial.
And specifically, there exists a gap between the original $\mu$ and its approximation by finite series, so we do not include these two kernels in our experiments.

\section{Experiments}
\label{sec:exp}
We evaluate the proposed method  on four representative benchmark datasets including \emph{letter}\footnote{\url{https://archive.ics.uci.edu/ml/datasets.html.}}, \emph{ijcnn1}\footnote{\url{https://www.csie.ntu.edu.tw/~cjlin/libsvmtools/datasets/}\label{web}}, \emph{covtype}\textsuperscript{\ref {web}}, and \emph{cod-RNA}\textsuperscript{\ref {web}}, see in Table~\ref{tablarge}.
The datasets are normalized to $[0, 1]^d$ by an $\ell_2$-norm scaling scheme and have been given with training/test partition except for \emph{covtype}. 
Hence, we randomly split the \emph{covtype} dataset into the training and test sets by half.
In our experiment, the used indefinite kernels are the spherical polynomial kernel $k(\bm x, \bm x') = \left(1-{\| \bm x- \bm x'\|_2^2}/{a^2}\right)^p$ with $a=2,~p=2$ in \cite{pennington2015spherical}, and the Delta-Gaussian kernel $k(\bm x,\bm x')= \exp(-{\| \bm x - \bm x' \|^2}/{2\tau_1^2}) - \exp(-{\| \bm x - \bm x' \|^2}/{2\tau_2^2})$ with $\tau_1=1$ and $\tau_2 = 10$ in \cite{oglic18a}.
The compared algorithms include SRF (Spherical Random Features) \cite{pennington2015spherical}, DIGMM (Double-Infinite Gaussian Mixtures Model) \cite{liu2019double}, and Nystr\"{o}m with leverage score \cite{oglic2019scalable}.
Moreover, we also include Random Maclaurin (RM) \cite{kar2012random}, Tensor Sketch (TS) \cite{Pham2013Fast}, and Tensorized Random Projection (TRP) \cite{meister2019tight} for polynomial kernel approximation. Note that the related error bars and standard deviations are obtained by running the experiments for $10$ times.
All experiments are implemented in MATLAB and carried out on a PC with Intel$^\circledR$ i7-8700K CPU (3.70 GHz) and 64 GB RAM. The source code of our implementation can be found in \url{http://www.lfhsgre.org}.

\begin{figure}[t] 
	\centering 
	\subfigure[graph of kernels]{
		\includegraphics[width=0.43\textwidth]{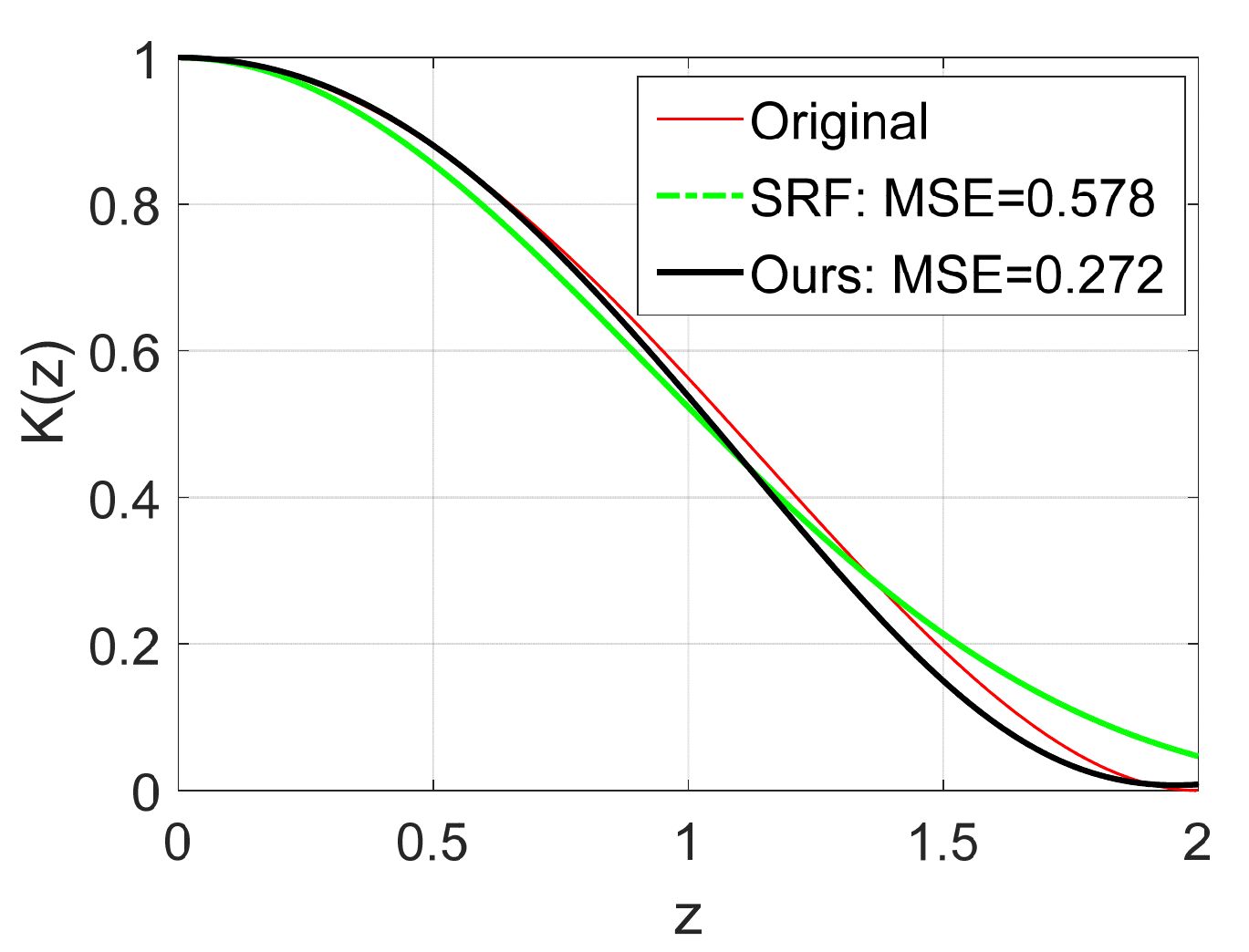}}
	\subfigure[signed measure $\mu$]{\label{everpolymu}
		\includegraphics[width=0.41\textwidth]{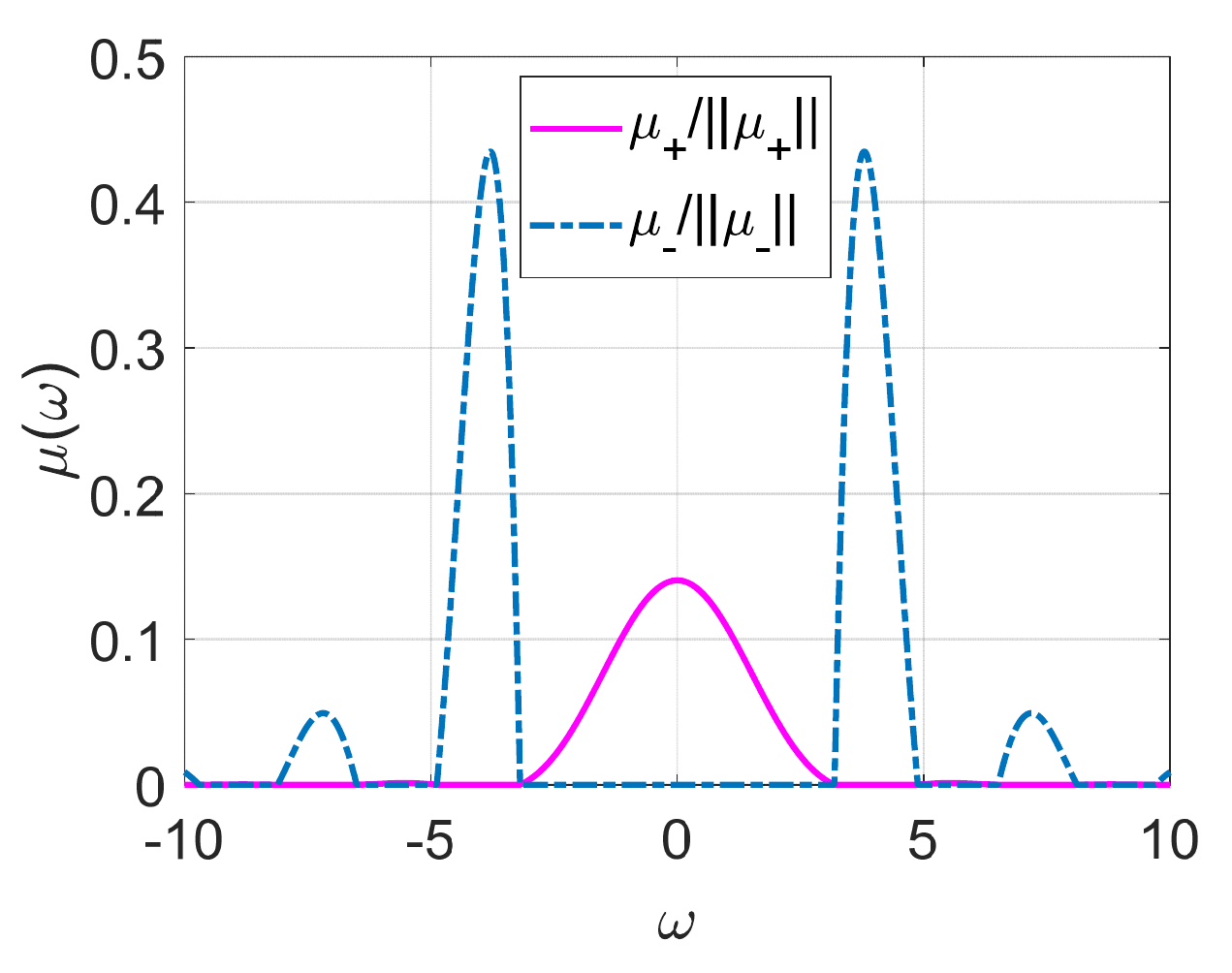}}
	\caption{Approximation of the spherical polynomial kernel with $a=p=2$.}\label{everpoly}
\end{figure}

\begin{table}
	\centering
	\begin{threeparttable}
		\caption{Benchmark datasets.} 
		\label{tablarge}
		\begin{tabular}{cccccccccccccc}
			{\bf Datasets}  & $d$  & \#training & \#test \\
			\midrule
			\emph{letter} & 16 & 12,000 & 6,000 \\
			\emph{ijcnn1} &22 &49,990 &91,701 \\
			\emph{covtype}  &54 &290,506 &290,506
			\\
			\emph{cod-RNA}  &8 &59,535 &157,413 \\
			\bottomrule
		\end{tabular}
	\end{threeparttable}
\end{table}

\begin{figure*}[!htb]
	\centering
	\subfigure[\emph{letter}]{
		\includegraphics[width=0.23\textwidth]{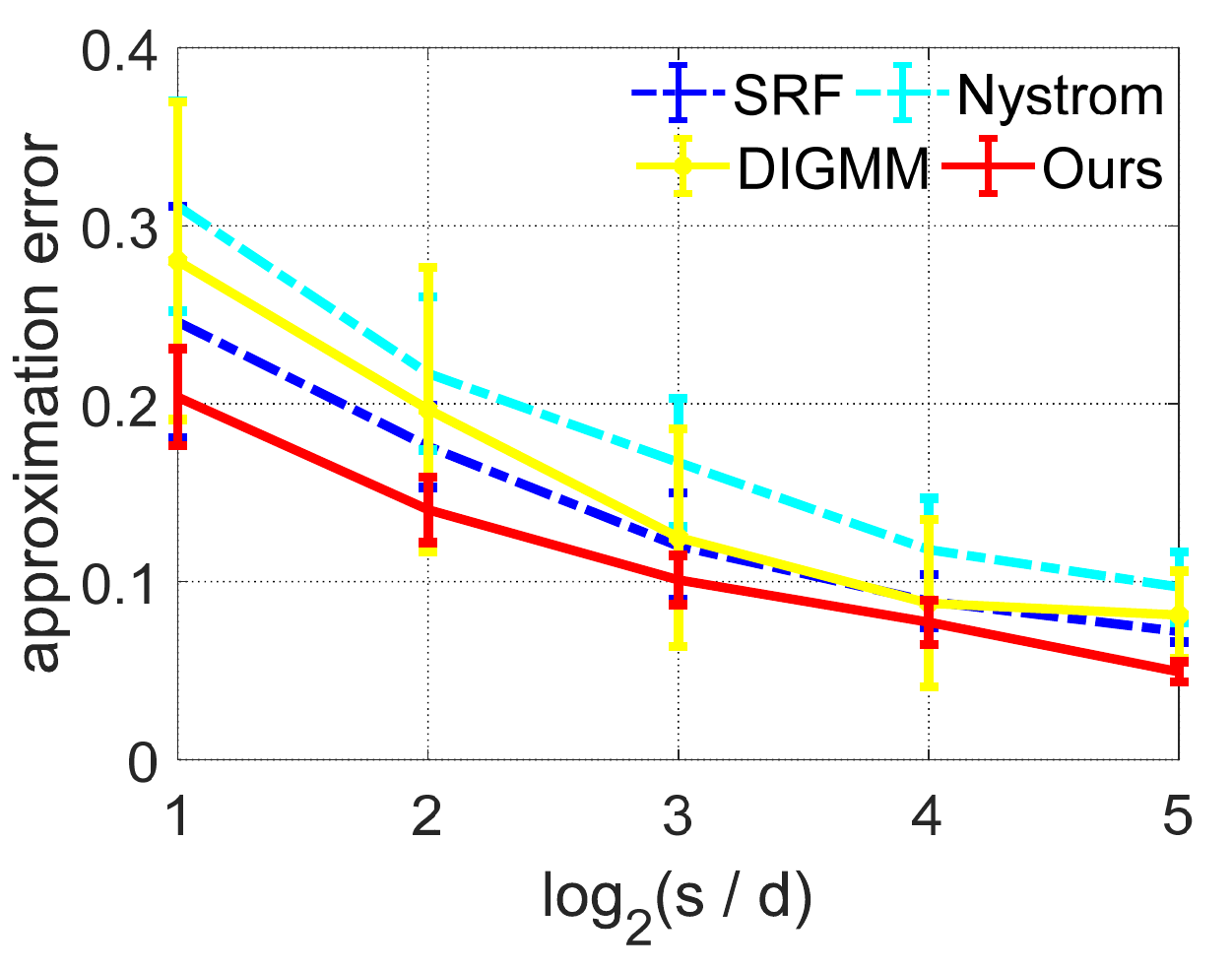}}
	\subfigure[\emph{ijcnn1}]{
		\includegraphics[width=0.23\textwidth]{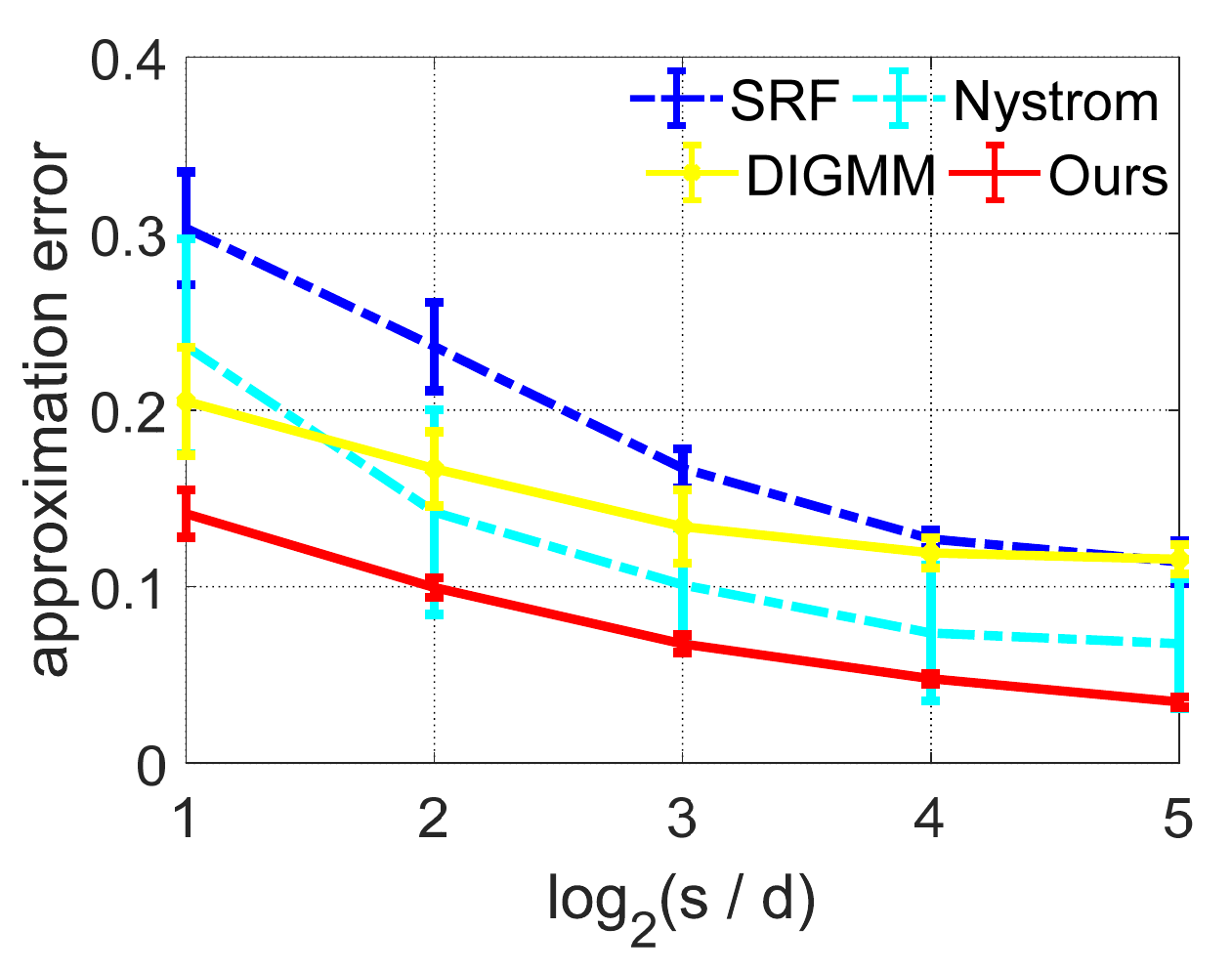}}
	\subfigure[\emph{covtype}]{
		\includegraphics[width=0.232\textwidth]{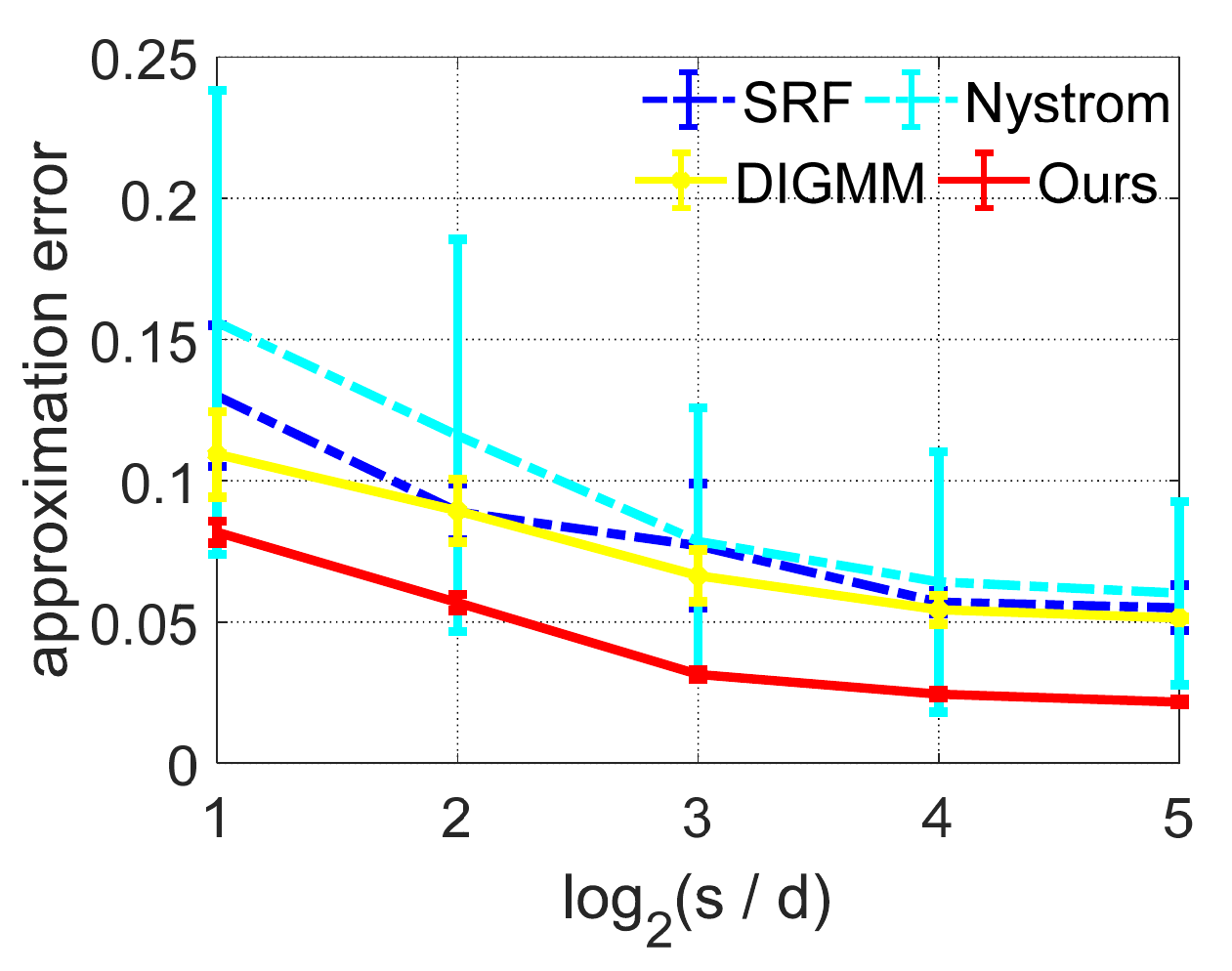}}
	\subfigure[\emph{cod-RNA}]{
		\includegraphics[width=0.232\textwidth]{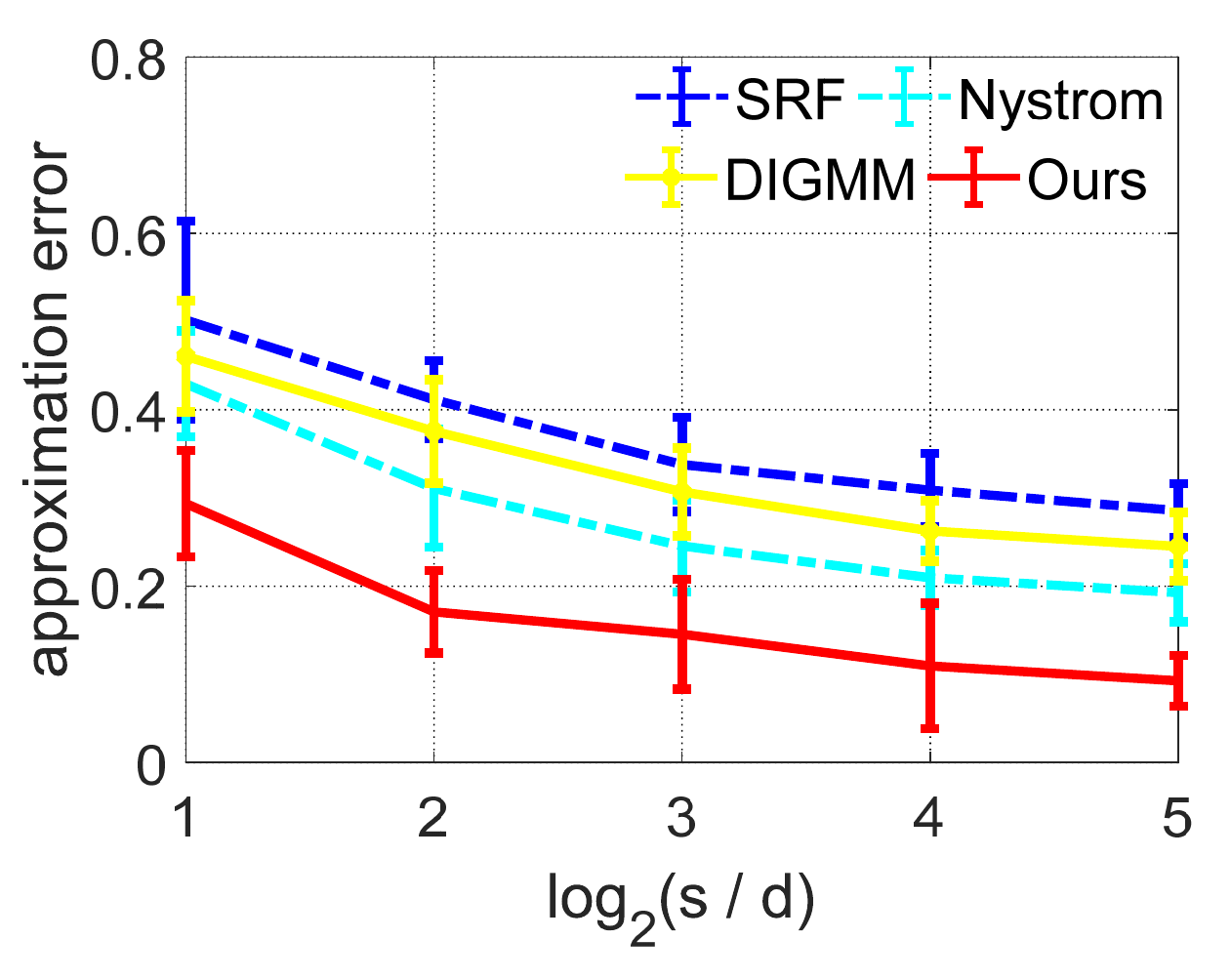}}

	\centering
	\subfigure{
		\includegraphics[width=0.23\textwidth]{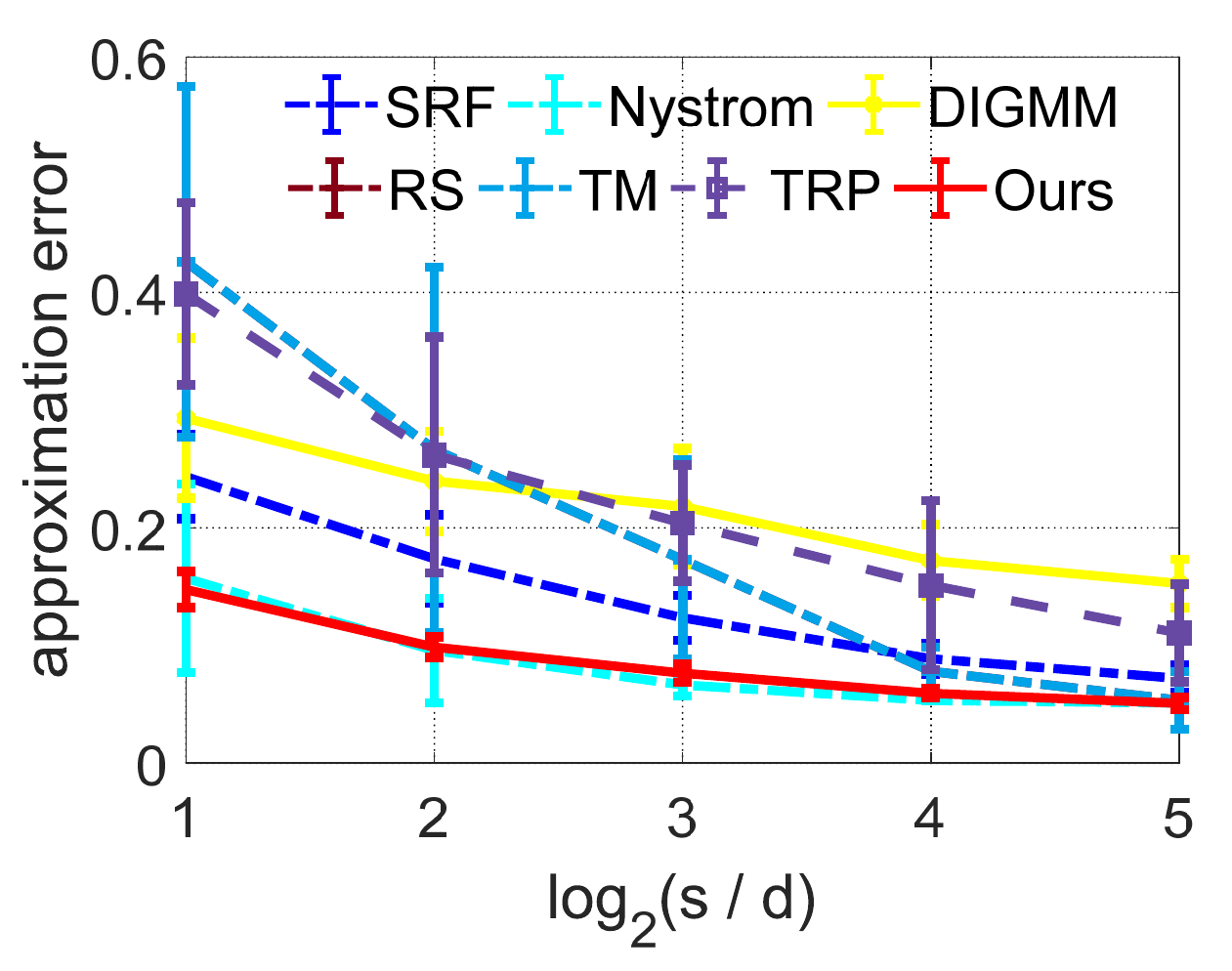}}
	\subfigure{
		\includegraphics[width=0.23\textwidth]{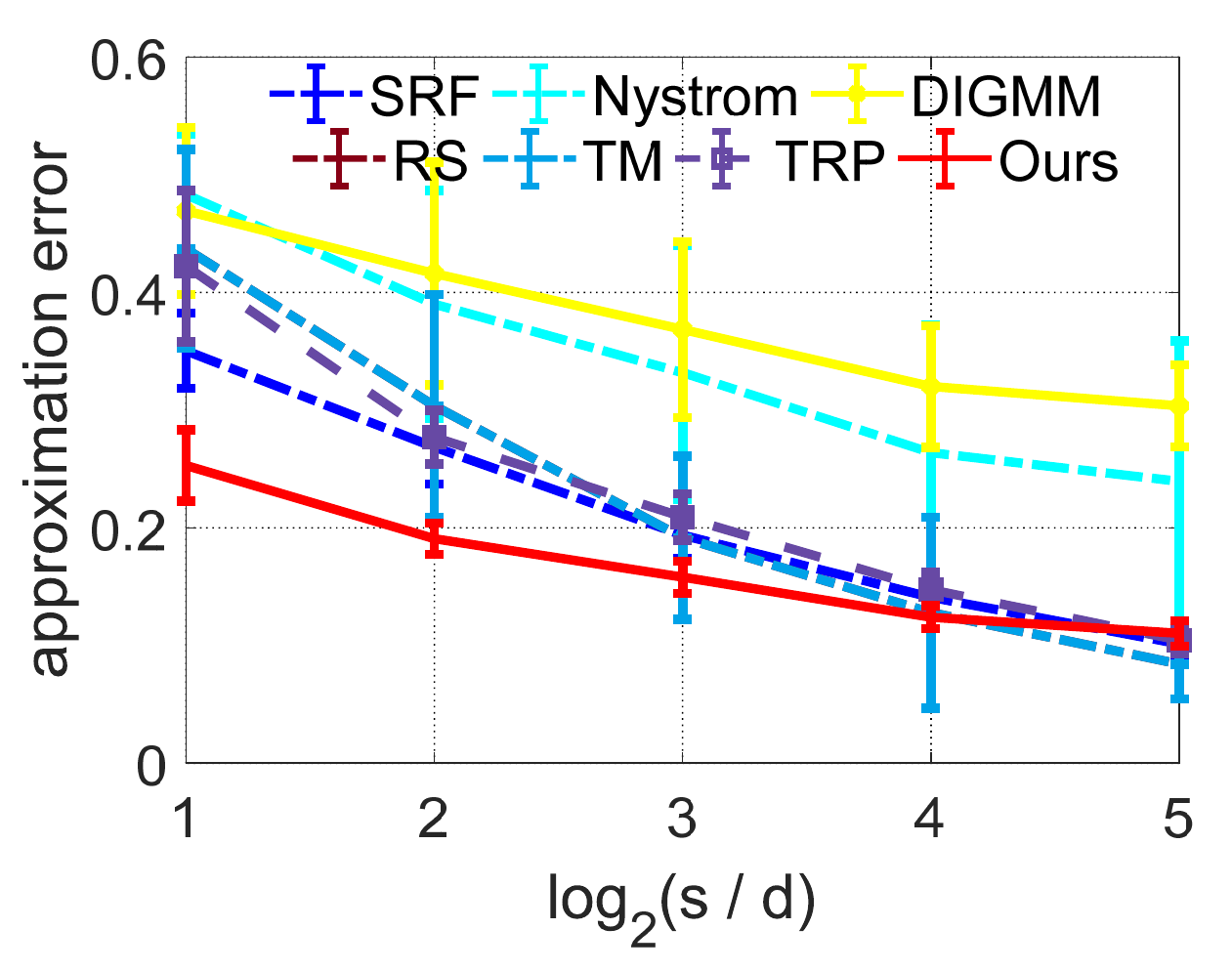}}
	\subfigure{
		\includegraphics[width=0.232\textwidth]{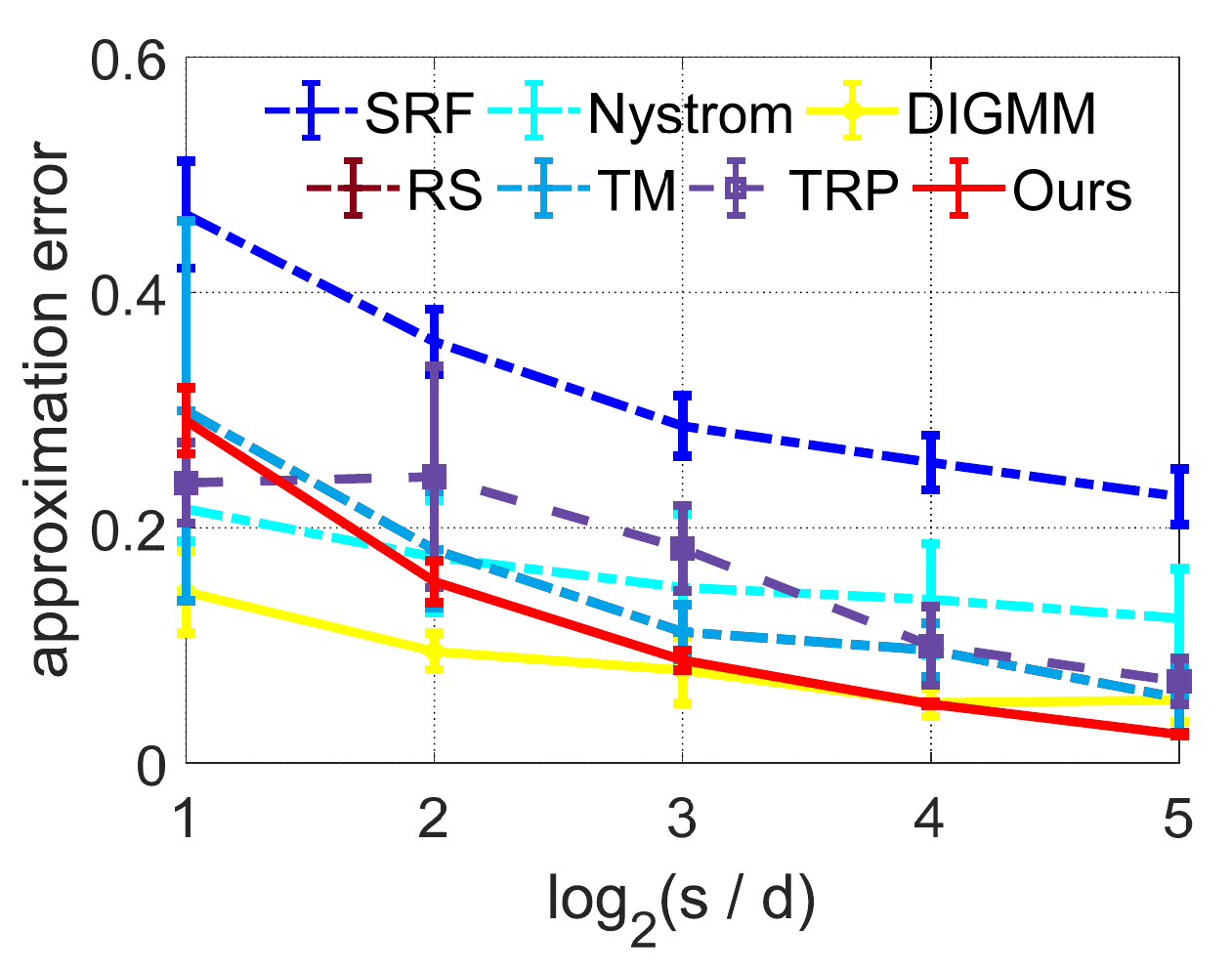}}
	\subfigure{
		\includegraphics[width=0.232\textwidth]{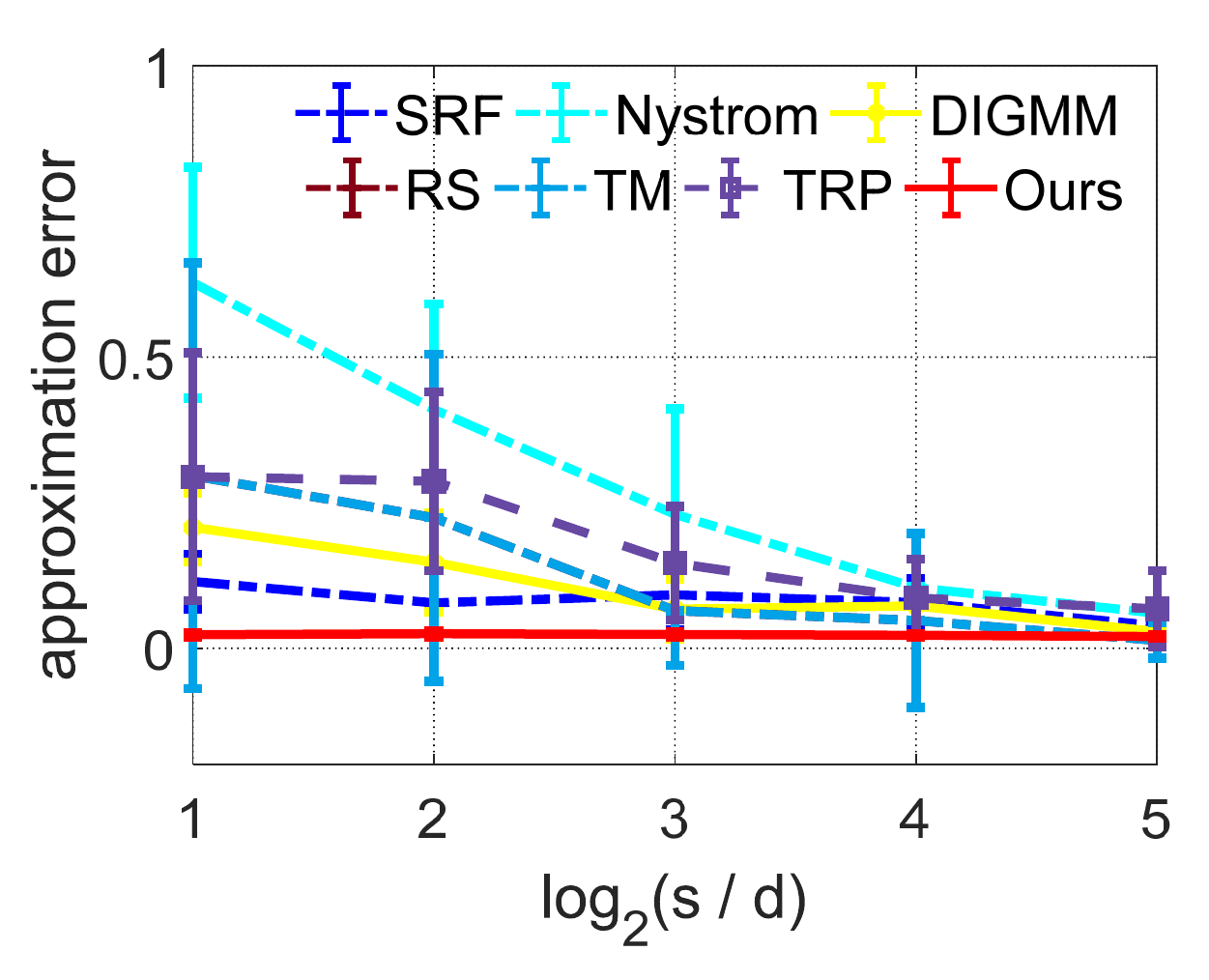}}
	
	\caption{Comparisons of various algorithms for approximation error across the Delta-Gaussian kernel (top) and the spherical polynomial kernel (bottom) on four datasets.}	\label{figapp}
	\vspace{-0.15cm}
\end{figure*}

\begin{figure*}[!htb]
	
	\centering
	\subfigure[\emph{letter}]{
		\includegraphics[width=0.234\textwidth]{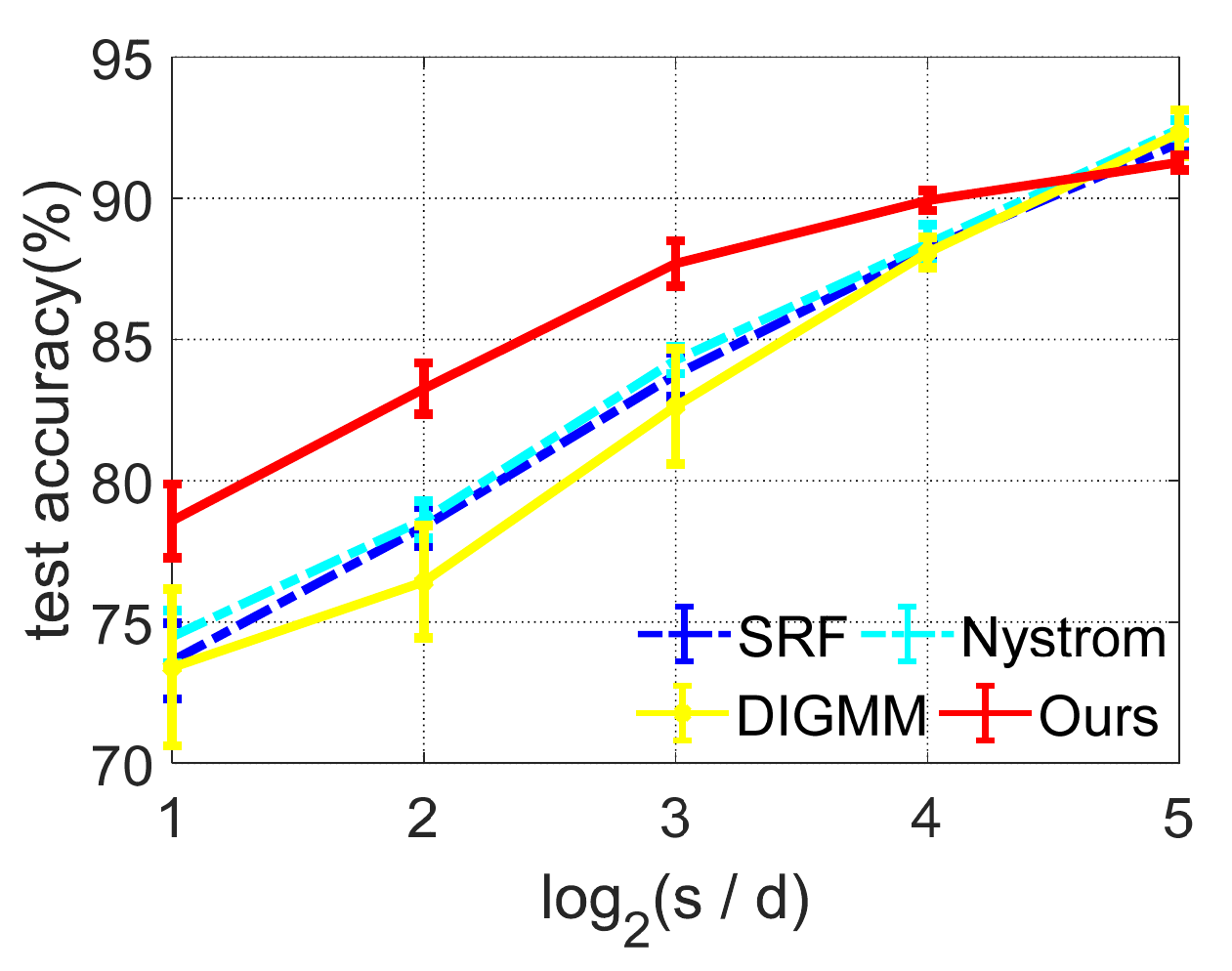}}
	\subfigure[\emph{ijcnn1}]{
		\includegraphics[width=0.234\textwidth]{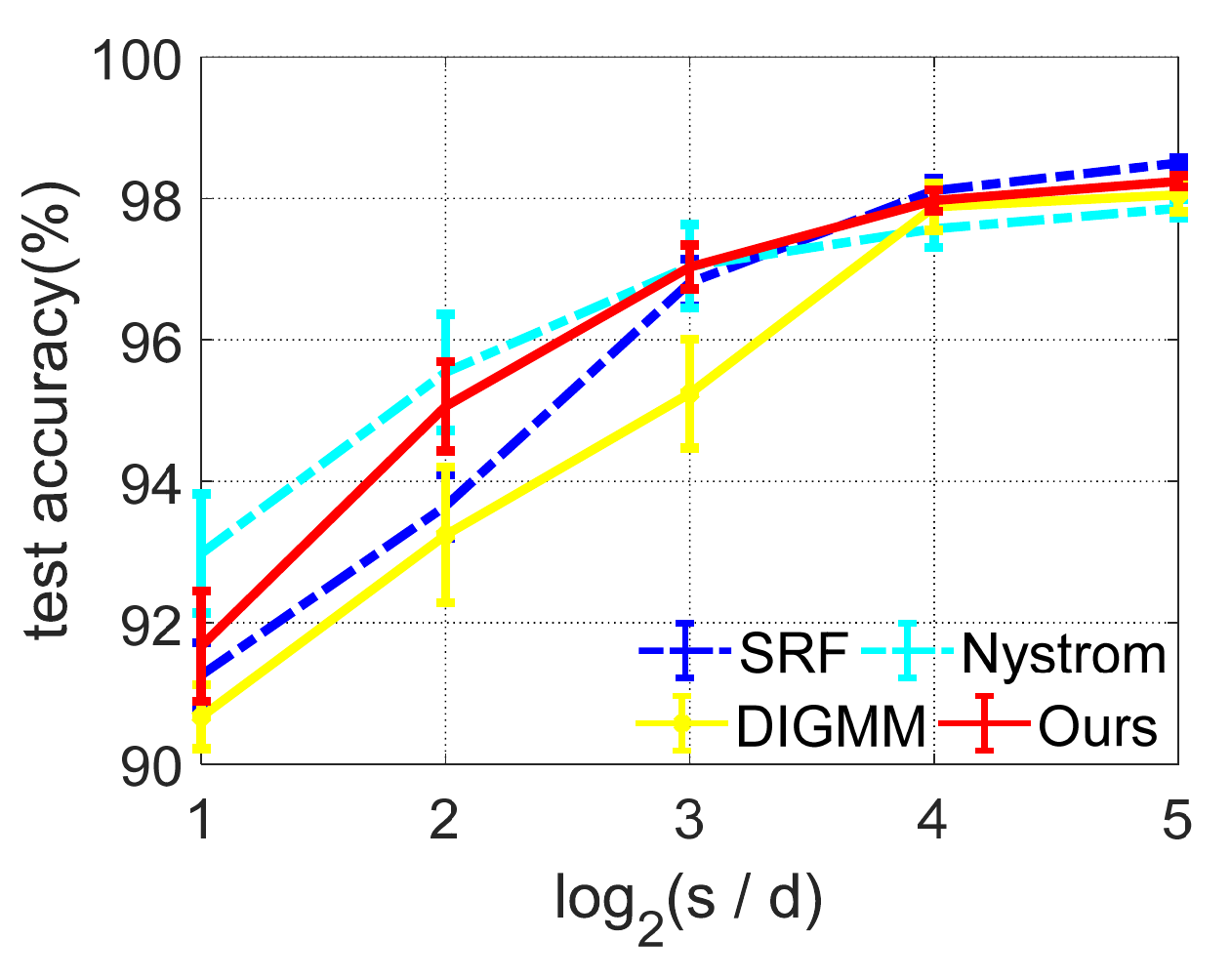}}
	\subfigure[\emph{covtype}]{
		\includegraphics[width=0.234\textwidth]{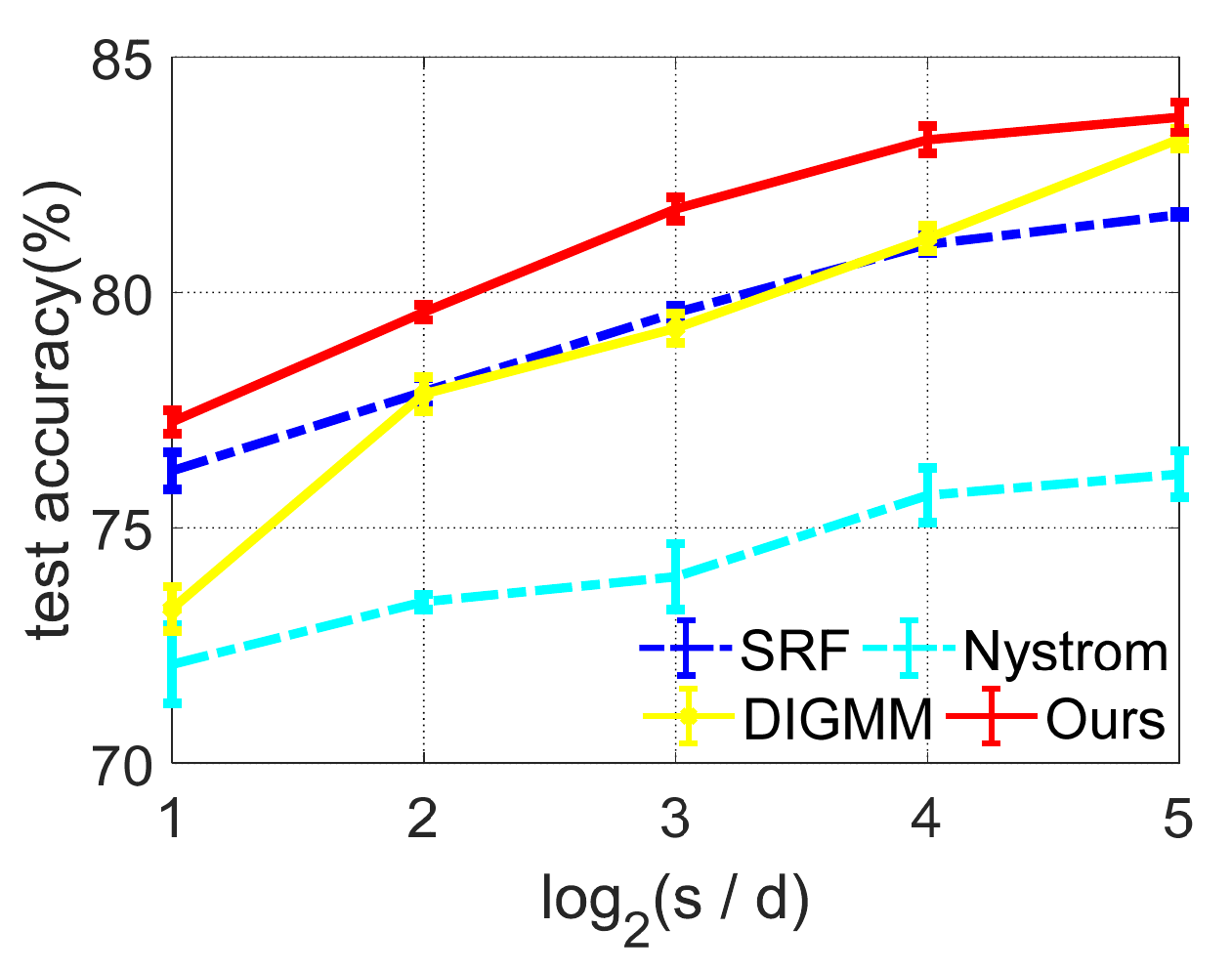}}
	\subfigure[\emph{cod-RNA}]{
		\includegraphics[width=0.236\textwidth]{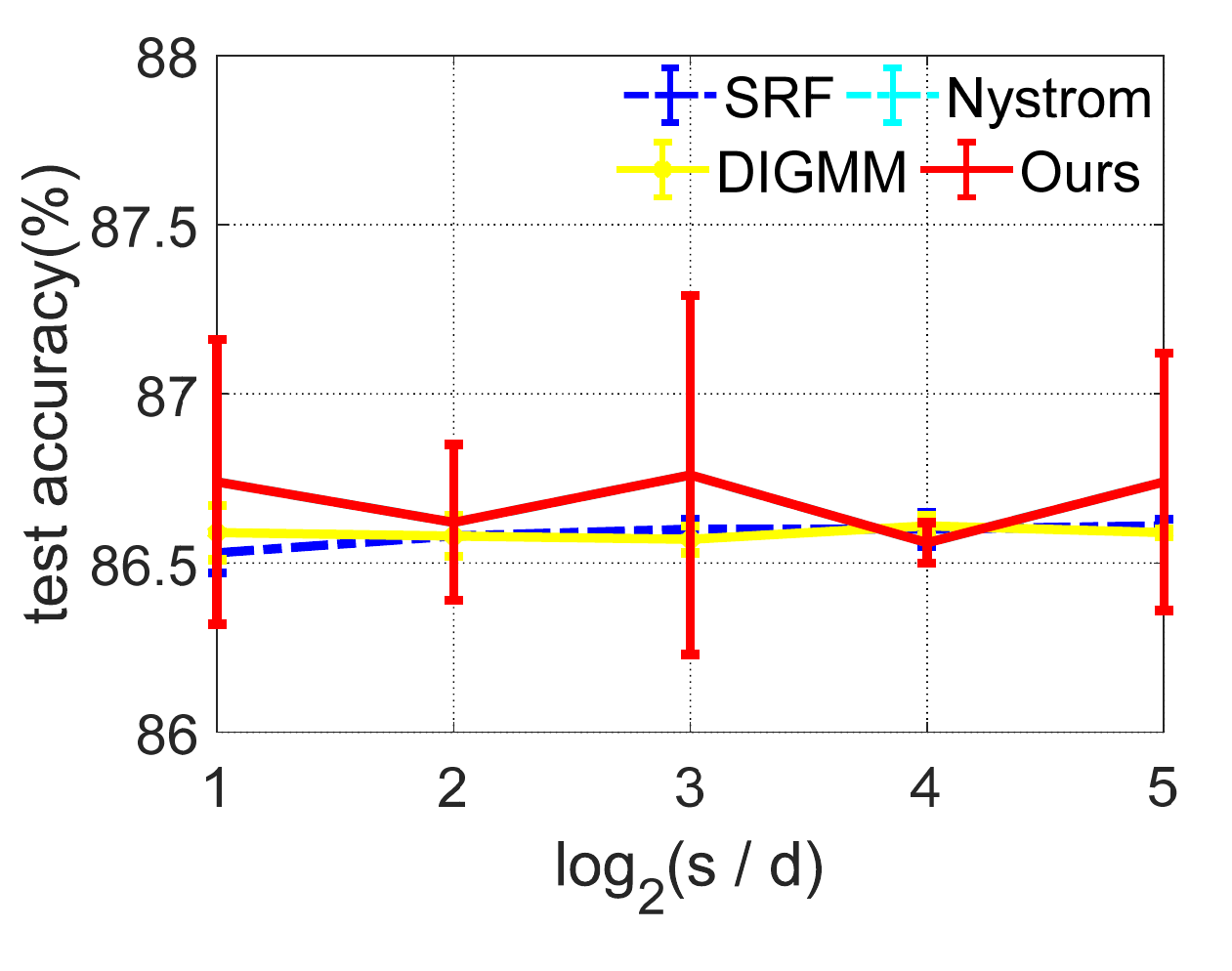}}
	
	\centering
	\subfigure{
		\includegraphics[width=0.234\textwidth]{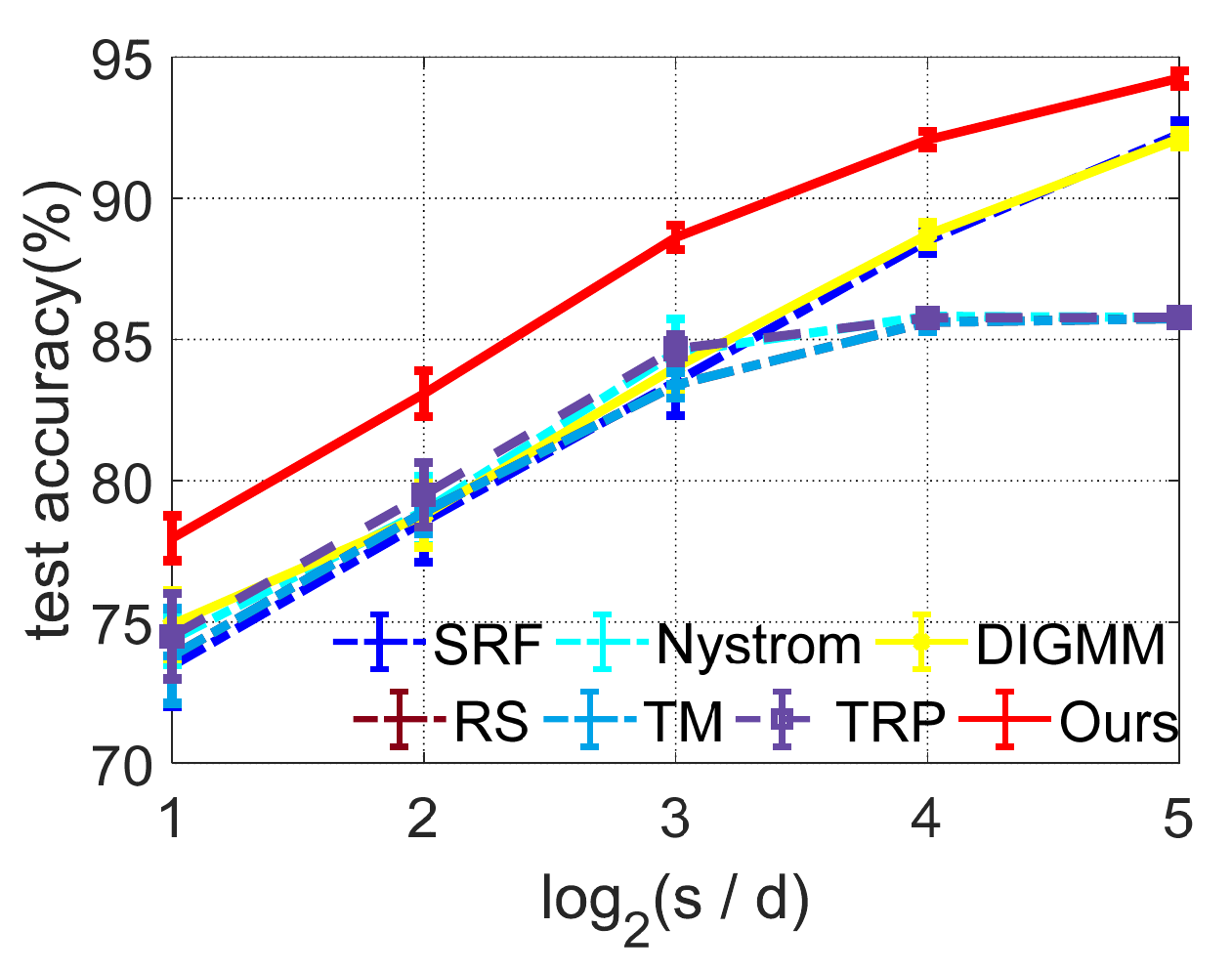}}
	\subfigure{
		\includegraphics[width=0.234\textwidth]{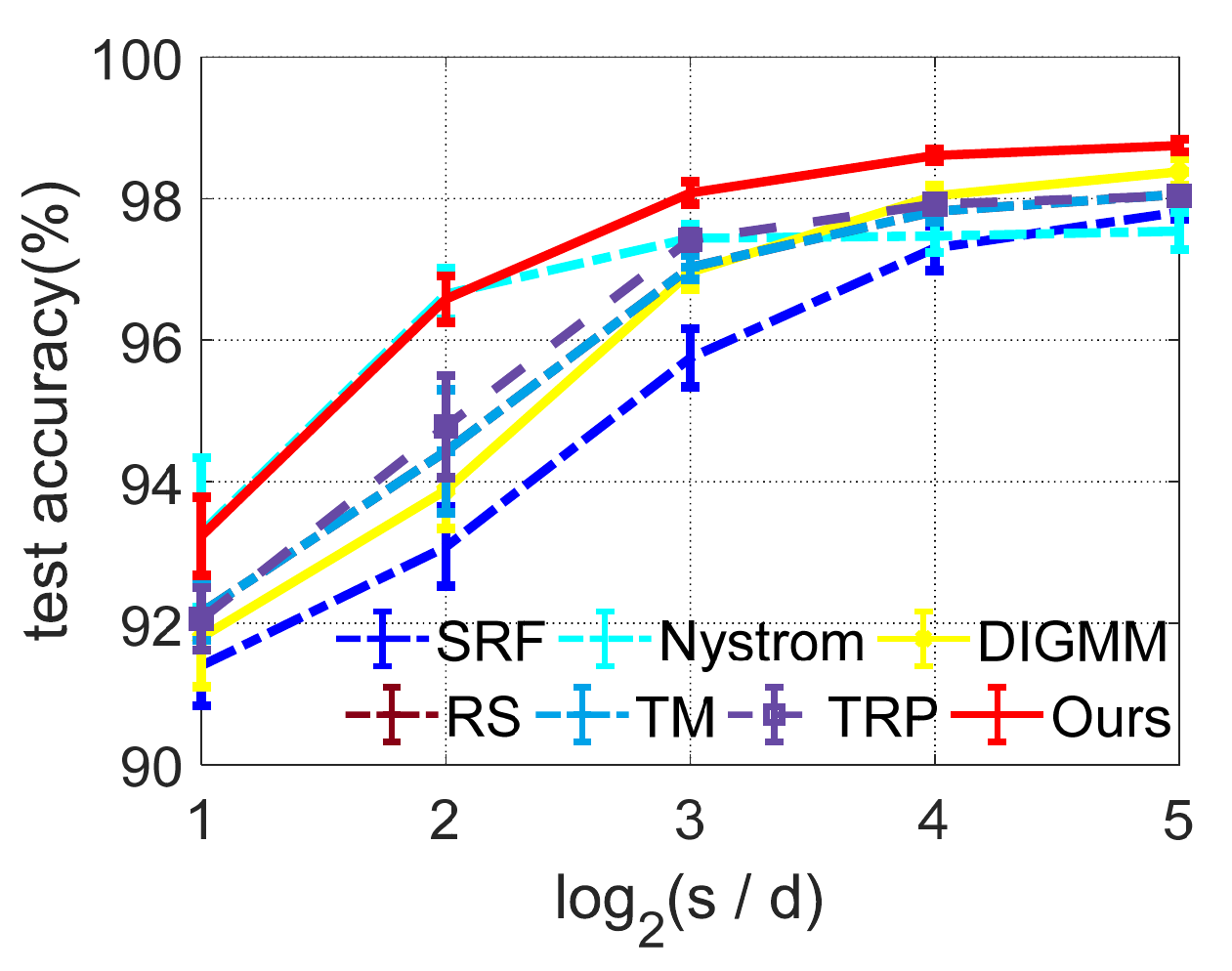}}
	\subfigure{
		\includegraphics[width=0.234\textwidth]{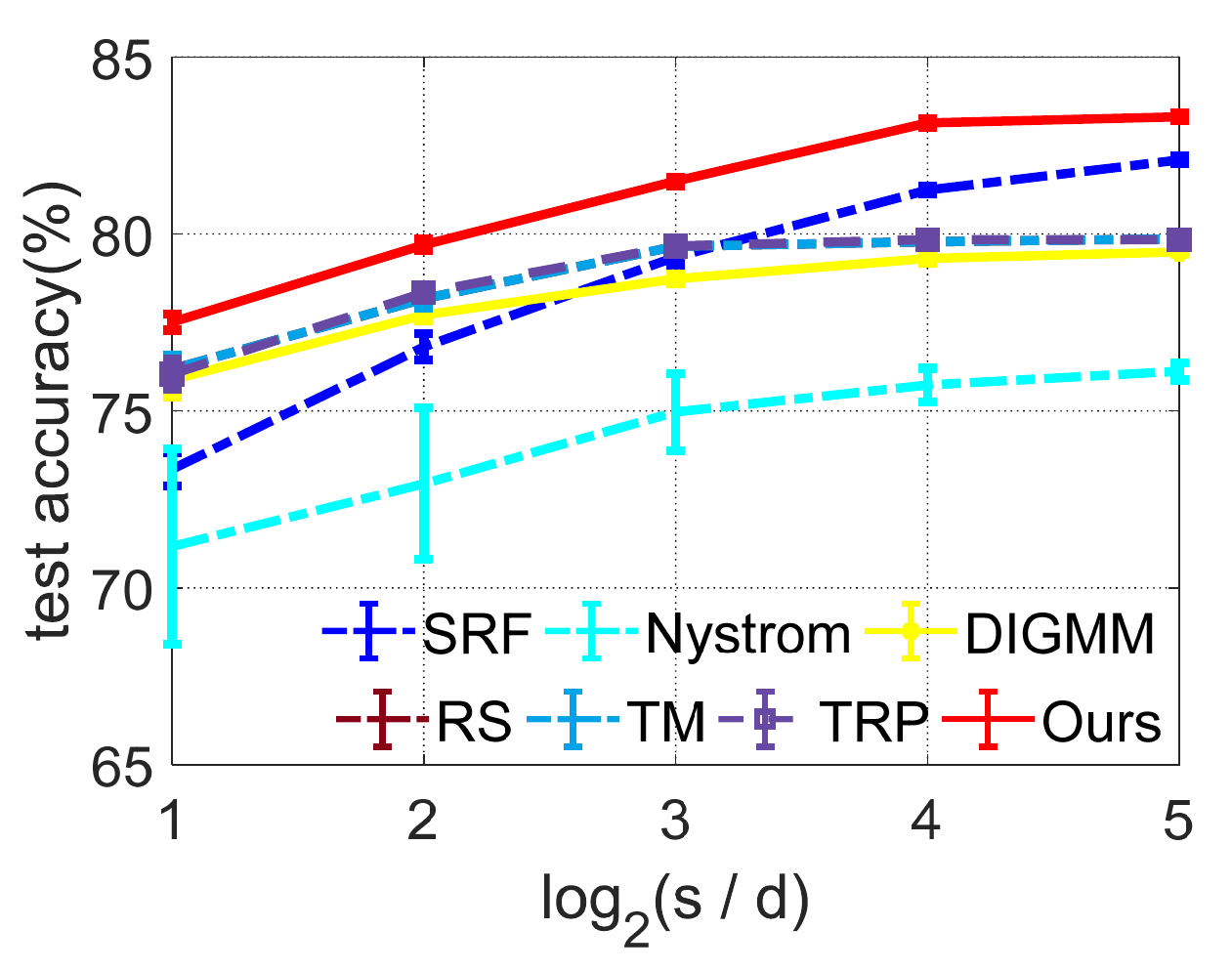}}
	\subfigure{
		\includegraphics[width=0.234\textwidth]{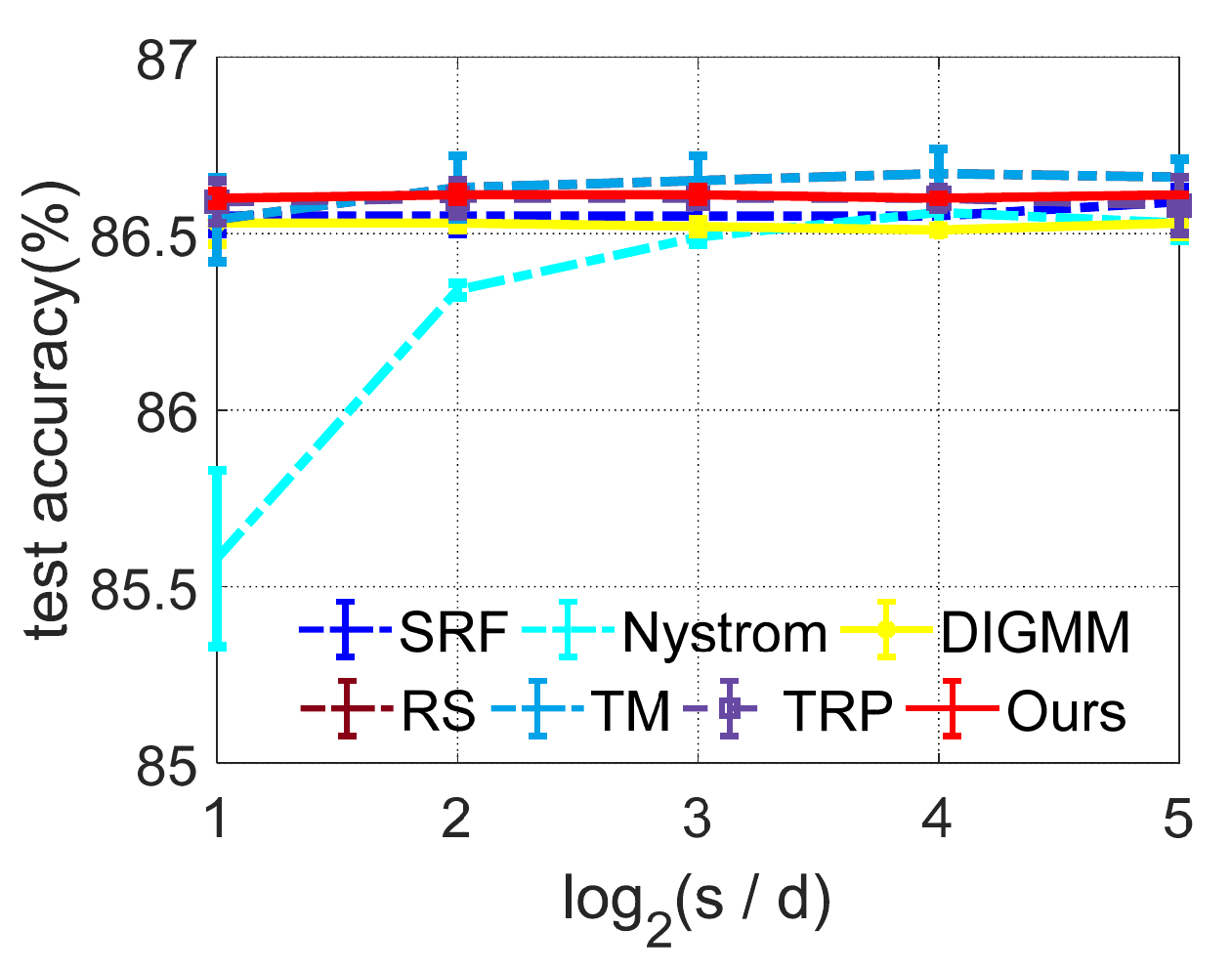}}
	
	\caption{Comparisons of various algorithms for classification accuracy with LibLinear across the Delta-Gaussian kernel (top) and the spherical polynomial kernel (down) on four datasets.}	\label{figacc}
	\vspace{-0.25cm}
\end{figure*}

{\bf Kernel approximation:}
The relative error $\| \bm K - \widetilde{\bm K} \|_\mathrm{F}/\| \bm K \|_{\mathrm{F}}$ is chosen to measure the approximation quality where $\bm K$ and $\widetilde{\bm K}$ denote the exact kernel matrix on 1,000 random selected samples and its approximated kernel matrix, respectively.
Figure~\ref{figapp} shows the approximation error under two indefinite kernels as a function of \#random features $s$.
Our method achieves lower approximation error than the other algorithms across such two kernels on these datasets in most cases.
A clear look at the case of Delta-Gaussian kernel approximation will find that our approach significantly improves the approximation quality compared to random features based algorithms: SRF and DIGMM.
There always exists a gap in SRF that uses a PD kernel to approximate an indefinite one since the negative part is overlooked. DIGMM only focuses on approximating a subset of the kernel matrix. Different from these two, our method directly approximates the indefinite kernel function by an unbiased estimator, which incurs no extra loss for kernel approximation.
Besides, when compared with several representative algorithms for polynomial kernels, e.g., RM, TS, and TRP, our method still performs well, which extends the application of our model.

\begin{table*}[t]
	\fontsize{8}{8}\selectfont
	\begin{threeparttable}
		\centering
		\caption{Time cost (sec.) for generating feature matrices of various algorithms.} 
		\label{tabtime}
		\begin{tabular}{ccccccccc|ccccccccccc}
			\toprule[1pt]
			\multirow{2}{1cm}{Datasets}&\multirow{2}{0.5cm}{\centering{$s$}}&\multicolumn{7}{c|}{Spherical polynomial kernel} &\multicolumn{4}{c}{Delta-Gaussian kernel}  \cr
			\cmidrule(lr){3-13}
			& &SRF\tnote{1} & Nystr\"{o}m & DIGMM & RM & TS & TRP & Ours &SRF\tnote{1} & Nystr\"{o}m & DIGMM & Ours \\
			\midrule
			\multirow{3}{1cm}{\emph{letter}} &$2d$ & {\bf 17.7}+0.02 & 0.12 & 0.21 & 0.04 & 0.07 & 0.01 & 0.08 & {\bf 17.2}+0.03 & 0.14 & 0.32 & 0.07 \\
			&$8d$ & 0.09 & 0.24 & 0.41 & 0.12 & 0.10 & 0.21& 0.23  & 0.10 & 0.39 & 1.19 & 0.23 \\
			&$32d$  & 0.33 & 1.27 & 2.69 & 0.38 & 0.31 & 0.89 & 0.85 & 0.30 & 1.69 & 4.56 & 0.92  \\
			\midrule
			\multirow{3}{1cm}{\emph{ijcnn1}} &$2d$ & {\bf 10.3}+0.24 & 0.83 & 0.42 & 0.33 & 0.53& 0.73 & 0.70 & {\bf 20.3}+0.23 & 1.23 & 0.61 & 0.41 \\
			&$8d$ & 0.89 & 2.78 & 1.08 & 1.20 & 0.90& 2.74 & 1.87 & 0.86 & 4.38 & 1.96 & 1.44 \\
			&$32d$  & 3.30 & 16.44 & 8.36 & 4.50 & 2.64& 10.50 & 7.31 & 3.42 & 22.64 & 6.86 & 5.67  \\
			\bottomrule
		\end{tabular}
		\begin{tablenotes}
			\footnotesize
			\item[1] On each dataset, SRF obtains parameters in GMM by an off-line grid search scheme in advance, of which this extra time cost is reported in {\bf bold}.
		\end{tablenotes}
	\end{threeparttable}
\end{table*}

{\bf Classification with linear SVM:} We train a linear classifier: LibLinear \cite{fan2008liblinear} with the obtained randomized feature map.
The balanced parameter in linear SVM is tuned by five-fold cross validation on a grid of points: $C = [0.01,0.1,1,10,100]$.
The test accuracy of various algorithms are shown in Figure~\ref{figacc}.
As we expected, higher-dimensional randomized feature map outputs higher classification accuracy except the \emph{cod-RNA} dataset.
On this dataset, all of algorithms achieve the similar classification accuracy under various $s$.
Apart from this dataset, our method performs best in most cases.

{\bf Computational time:}  
Table~\ref{tabtime} reports the time cost on generating randomized feature map with various dimensions $s$ on two datasets. 
Our method achieves the same complexity with the standard RFF with $\mathcal{O}(ns^2)$ time and $\mathcal{O}(ns)$ memory. In practice, as reported by Table~\ref{tabtime}, our method takes a little more time than SRF to generate randomized feature maps due to the introduced extra imaginary part.
Nevertheless, on each dataset, SRF requires extra time to obtain parameters of a sum of Gaussians in advance.

\begin{figure}[t]
	\centering
	\subfigure[approximation error]{
		\includegraphics[width=0.43\textwidth]{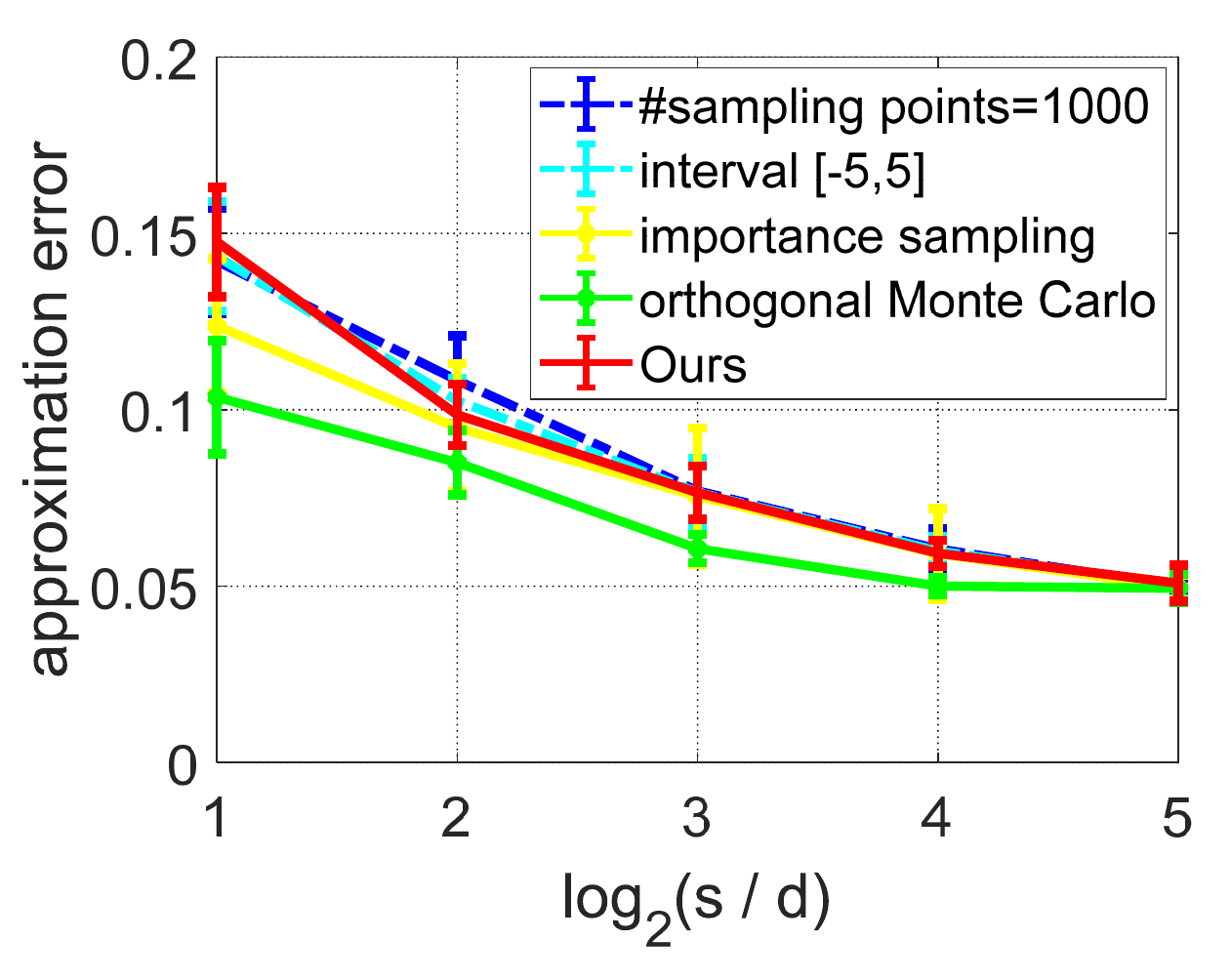}}
	\subfigure[time cost]{
		\includegraphics[width=0.43\textwidth]{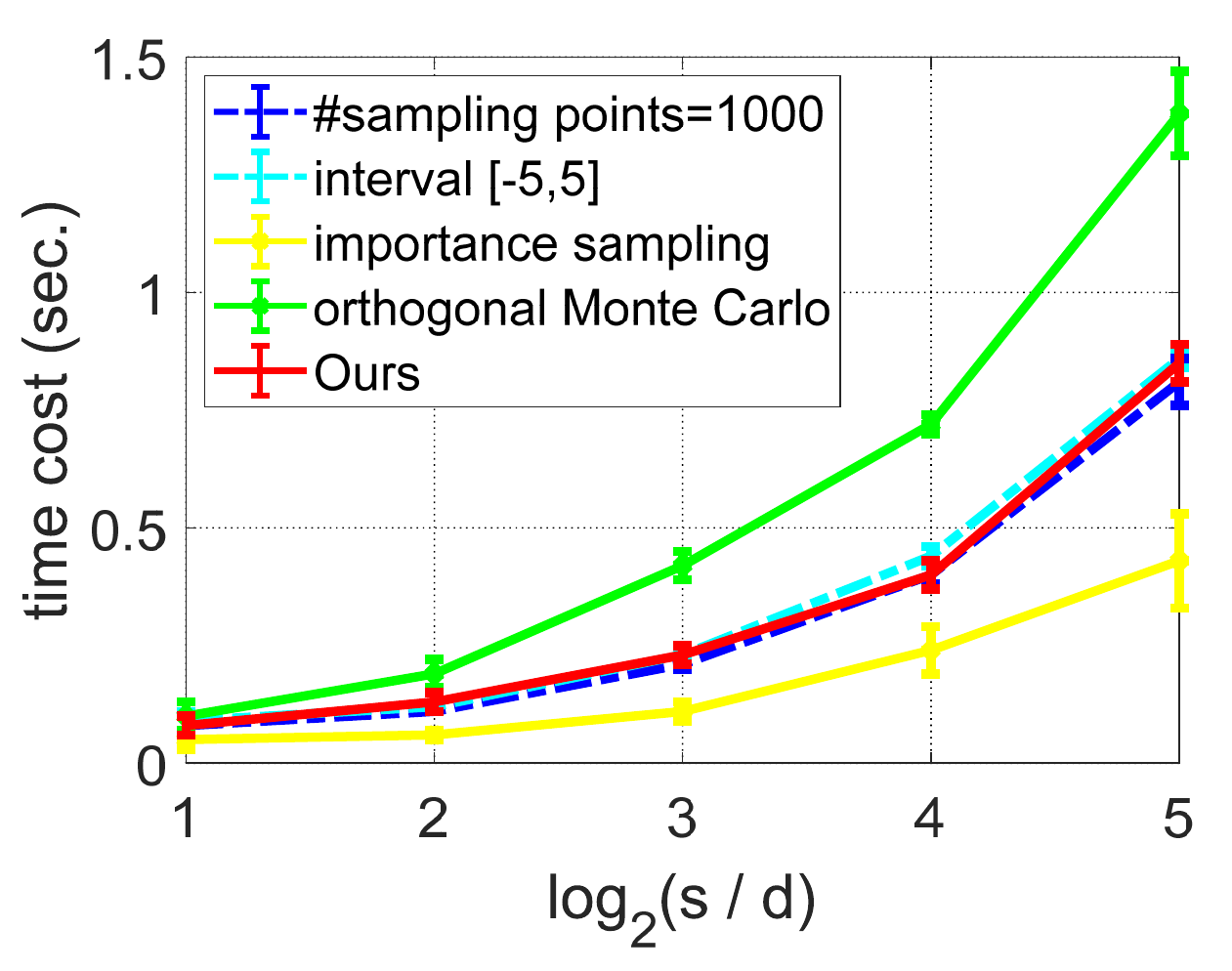}}
	\caption{Comparisons of various sampling schemes across the spherical polynomial kernel on \emph{letter}.}	\label{figsampling}
	\vspace{-0.25cm}
\end{figure}

{\bf Sampling schemes in spherical polynomial kernel:} 
The measures $\mu_+/\|\mu_+\|$ and $\mu_-/\| \mu_-\|$ associated with the spherical polynomial kernel are not typical distributions, so we conduct rejection sampling to acquire them by generating a set of uniformly 10,000 samples in a range of $[-10,10]$.
Here we compare various sampling schemes in our method for spherical polynomial kernel approximation, including sampling with 1,000 points, sampling in a sub-interval $[-5,5]$, importance sampling, and orthogonal Monte Carlo.
Here the \emph{surrogate} distribution in importance sampling is chosen as the Gaussian distribution.
The applied orthogonal Monte Carlo, followed by \cite{Yu2016Orthogonal}, aims to obtain orthogonal random features.

Figure~\ref{figsampling} shows the approximation error and time cost of various sampling schemes.
It can be found that, orthogonal random features achieve lower approximation error but require more computational cost, as suggested by \cite{Yu2016Orthogonal,choromanski2019unifying}.
Instead, the applied importance sampling decreases the time cost with a slight improvement on the approximation performance.
If we choose the ridge leverage function in importance sampling, our model works with the leverage score based sampling, refer to \cite{avron2017random} for details.

\begin{figure}[t]
	\centering
	\subfigure[order $p=1$]{
		\includegraphics[width=0.43\textwidth]{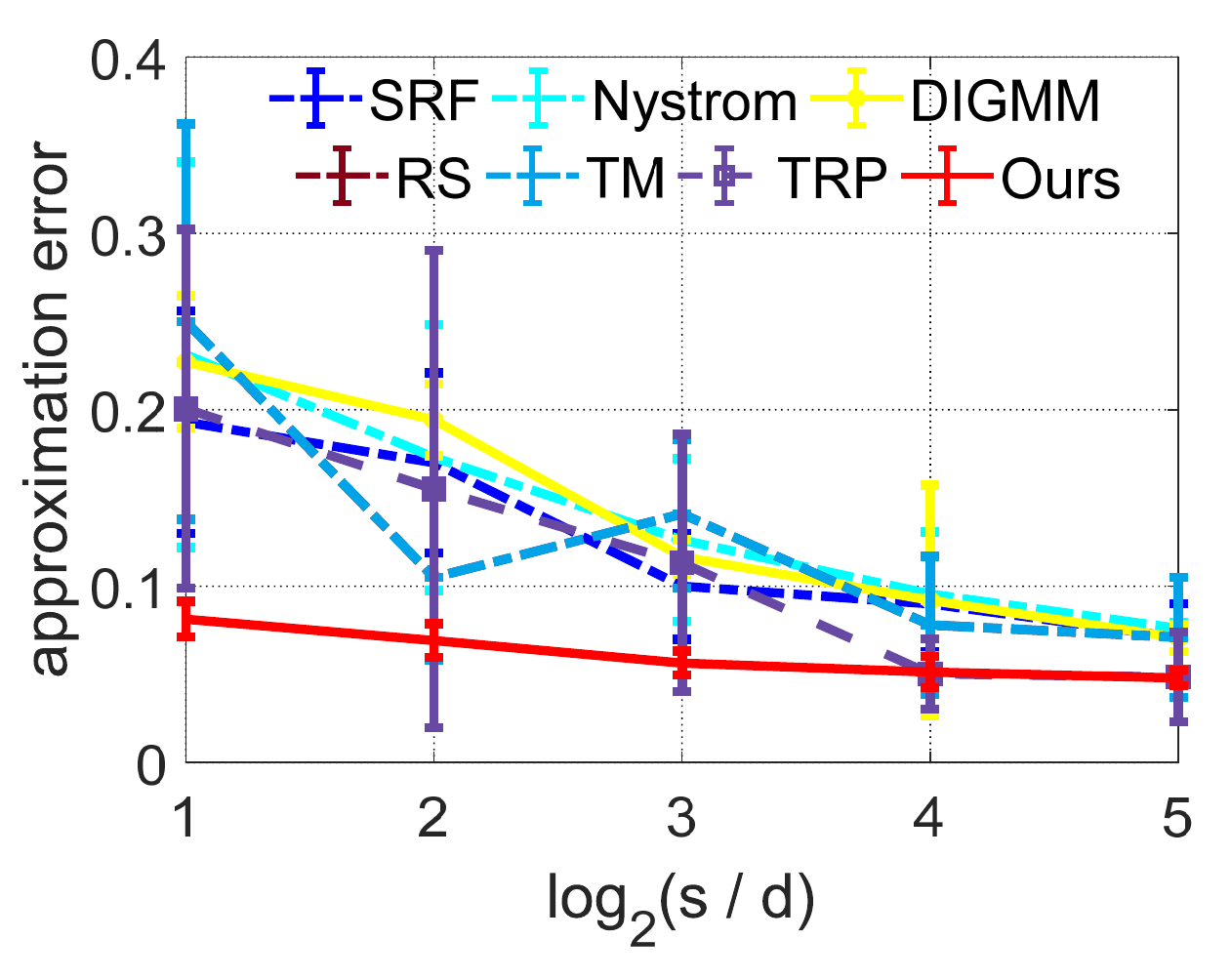}}
	\subfigure[order $p=3$]{
		\includegraphics[width=0.43\textwidth]{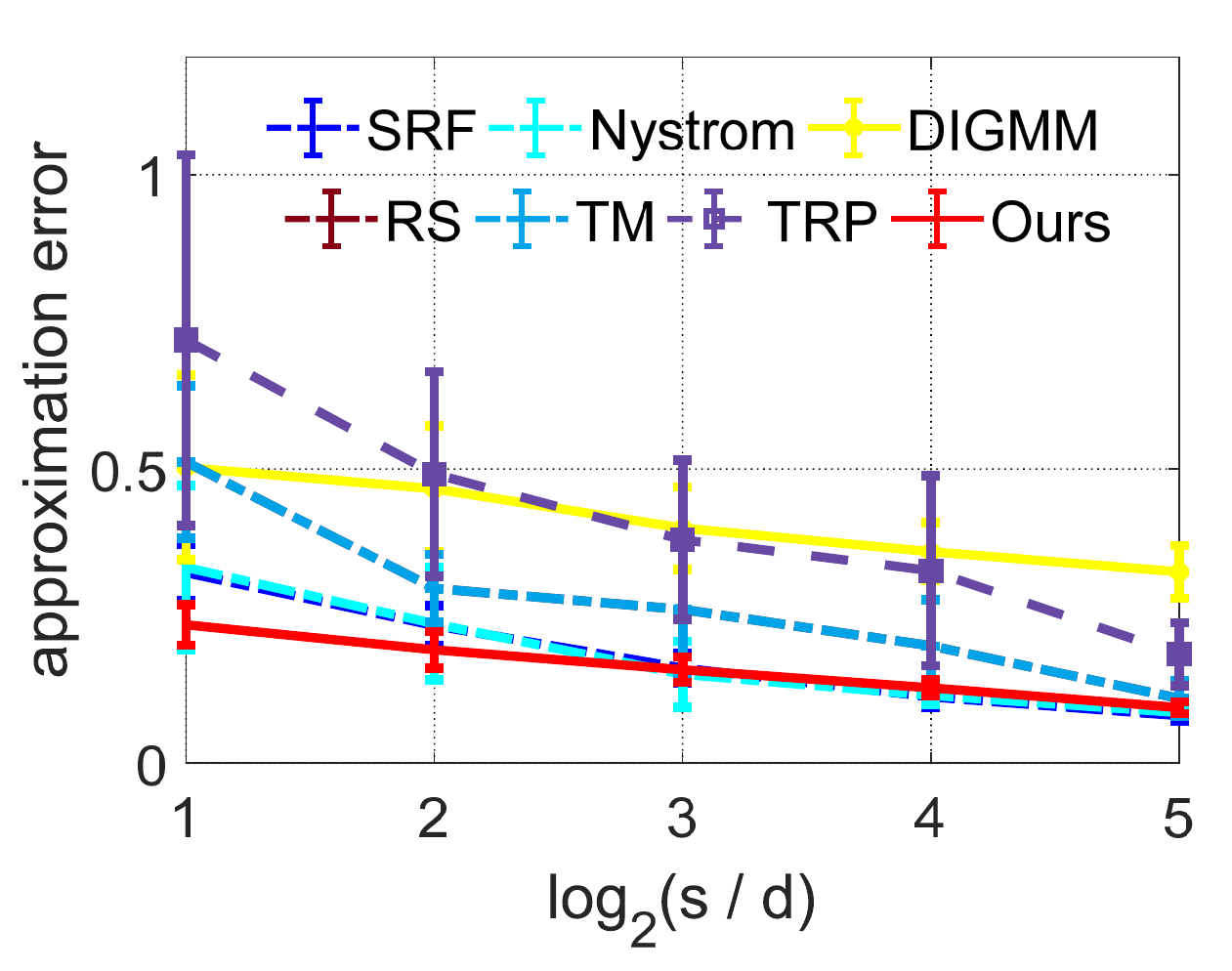}}
	\caption{Approximation error of various algorithms across the spherical polynomial kernel with different orders on the \emph{letter} dataset.}	\label{figorder}
	\vspace{-0.05cm}
\end{figure}

{\bf Different orders in spherical polynomial kernel:}
Apart from the used $p=2$ in the spherical polynomial kernel in our experiment, we evaluate our model on spherical polynomial kernels with various orders, e.g., $p=1$ and $p=3$. 
Kernel approximation results in Figure~\ref{figorder} show that, under different orders, our method performs better than other algorithms in terms of the approximation error.

\vspace{-0.3cm}
\section{Conclusion}
\vspace{-0.3cm}
We answer the open question of indefinite kernels in machine learning community by the introduced measure decomposition technique, which motivates us to develop a general random features algorithm across various kernels that are stationary indefinite kernels.
Albeit simple, our algorithm is effective to output unbiased estimates for indefinite kernel approximation.
Besides, our findings on the indefiniteness of NTK on the unit sphere (by $\ell_2$ normalization) encourages us to better understand the approximation performance, functional spaces, and generalization properties in over-parameterized networks in the future.

\section*{Acknowledgements}
The research leading to these results has received funding from the European Research Council under the European Union's Horizon 2020 research and innovation program / ERC Advanced Grant E-DUALITY (787960). This paper reflects only the authors' views and the Union is not liable for any use that may be made of the contained information.
This work was supported in part by Research Council KU Leuven: Optimization frameworks for deep kernel machines C14/18/068; Flemish Government: FWO projects: GOA4917N (Deep Restricted Kernel Machines: Methods and Foundations), PhD/Postdoc grant. This research received funding from the Flemish Government (AI Research Program). 
This work was supported in part by Ford KU Leuven Research Alliance Project KUL0076 (Stability analysis and performance improvement of deep reinforcement learning algorithms), EU H2020 ICT-48 Network TAILOR (Foundations of Trustworthy AI - Integrating Reasoning, Learning and Optimization), Leuven.AI Institute; and in part by the National Natural Science Foundation of China 61977046.


\begin{thebibliography}{10}

\bibitem{alpay1991some}
Daniel Alpay.
\newblock Some remarks on reproducing kernel \protect{Kre\u{\i}n} spaces.
\newblock {\em The Rocky Mountain Journal of Mathematics}, pages 1189--1205,
  1991.

\bibitem{alpay1991somecon}
Daniel Alpay.
\newblock Some reproducing kernel spaces of continuous functions.
\newblock {\em Journal of Mathematical Analysis and Applications},
  160(2):424--433, 1991.

\bibitem{athreya2006measure}
Krishna~B Athreya and Soumendra~N Lahiri.
\newblock {\em Measure theory and probability theory}.
\newblock Springer Science \& Business Media, 2006.

\bibitem{avron2017random}
Haim Avron, Michael Kapralov, Cameron Musco, Christopher Musco, Ameya
  Velingker, and Amir Zandieh.
\newblock Random \protect{F}ourier features for kernel ridge regression:
  Approximation bounds and statistical guarantees.
\newblock In {\em the 34th International Conference on Machine Learning}, pages
  253--262, 2017.

\bibitem{avron2014subspace}
Haim Avron, Huy Nguyen, and David Woodruff.
\newblock Subspace embeddings for the polynomial kernel.
\newblock In {\em Advances in neural information processing systems}, pages
  2258--2266, 2014.

\bibitem{bach2017equivalence}
Francis Bach.
\newblock On the equivalence between kernel quadrature rules and random feature
  expansions.
\newblock {\em Journal of Machine Learning Research}, 18(1):714--751, 2017.

\bibitem{bietti2019inductive}
Alberto Bietti and Julien Mairal.
\newblock On the inductive bias of neural tangent kernels.
\newblock In {\em Proceedingso of Advances in Neural Information Processing
  Systems}, pages 12873--12884, 2019.

\bibitem{bochner2005harmonic}
Salomon Bochner.
\newblock {\em Harmonic Analysis and the Theory of Probability}.
\newblock Courier Corporation, 2005.

\bibitem{bognar1974indefinite}
J{\'a}nos Bogn{\'a}r.
\newblock {\em Indefinite inner product spaces}.
\newblock Springer, 1974.

\bibitem{boisbunon2012class}
Aur{\'e}lie Boisbunon.
\newblock The class of multivariate spherically symmetric distributions.
\newblock {\em Universit{\'e} de Rouen, Technical Report,\# 2012-005}, 2012.

\bibitem{chen2009learning}
Yihua Chen, Maya~R Gupta, and Benjamin Recht.
\newblock Learning kernels from indefinite similarities.
\newblock In {\em the International Conference on Machine Learning}, pages
  145--152. ACM, 2009.

\bibitem{chizat2019lazy}
Lenaic Chizat, Edouard Oyallon, and Francis Bach.
\newblock On lazy training in differentiable programming.
\newblock In {\em Advances in Neural Information Processing Systems}, pages
  2933--2943, 2019.

\bibitem{cho2019large}
Hyunghoon Cho, Benjamin DeMeo, Jian Peng, and Bonnie Berger.
\newblock Large-margin classification in hyperbolic space.
\newblock In {\em International Conference on Artificial Intelligence and
  Statistics}, pages 1832--1840. PMLR, 2019.

\bibitem{cho2009kernel}
Youngmin Cho and Lawrence~K Saul.
\newblock Kernel methods for deep learning.
\newblock In {\em Advances in Neural Information Processing Systems}, pages
  342--350, 2009.

\bibitem{choromanski2019unifying}
Krzysztof Choromanski, Mark Rowland, Wenyu Chen, and Adrian Weller.
\newblock Unifying orthogonal \protect{M}onte \protect{C}arlo methods.
\newblock In {\em International Conference on Machine Learning}, pages
  1203--1212, 2019.

\bibitem{choromanski2018geometry}
Krzysztof Choromanski, Mark Rowland, Tam{\'a}s Sarl{\'o}s, Vikas Sindhwani,
  Richard Turner, and Adrian Weller.
\newblock The geometry of random features.
\newblock In {\em International Conference on Artificial Intelligence and
  Statistics}, pages 1--9, 2018.

\bibitem{choromanski2017unreasonable}
Krzysztof~M. Choromanski, Mark Rowland, and Adrian Weller.
\newblock The unreasonable effectiveness of structured random orthogonal
  embeddings.
\newblock In {\em Advances in Neural Information Processing Systems}, pages
  219--228, 2017.

\bibitem{dao2017gaussian}
Tri Dao, Christopher~M. De~Sa, and Christopher R{\'e}.
\newblock Gaussian quadrature for kernel features.
\newblock In {\em Advances in neural information processing systems}, pages
  6107--6117, 2017.

\bibitem{donoghue2014distributions}
William~F Donoghue.
\newblock {\em Distributions and Fourier transforms}.
\newblock Academic Press, 2014.

\bibitem{fan2008liblinear}
Rong-En Fan, Kai-Wei Chang, Cho-Jui Hsieh, Xiang-Rui Wang, and Chih-Jen Lin.
\newblock \protect{LIBLINEAR}: a library for large linear classification.
\newblock {\em Journal of Machine Learning Research}, 9:1871--1874, 2008.

\bibitem{Feragen2015Geodesic}
Aasa Feragen, Fran{\c{c}}ois Lauze, and S{\o}ren Hauberg.
\newblock Geodesic exponential kernels: when curvature and linearity conflict.
\newblock In {\em the IEEE Conference on Computer Vision and Pattern
  Recognition}, pages 3032--3042, 2015.

\bibitem{hotz2012representation}
Thomas Hotz and Fabian~JE Telschow.
\newblock Representation by integrating reproducing kernels.
\newblock {\em arXiv preprint arXiv:1202.4443}, 2012.

\bibitem{huang2017classification}
Xiaolin Huang, Johan~A.K. Suykens, Shuning Wang, Joachim Hornegger, and Andreas
  Maier.
\newblock Classification with truncated $\ell_1$ distance kernel.
\newblock {\em IEEE Transactions on Neural Networks and Learning Systems},
  29(5):2025 -- 2030, 2018.

\bibitem{jacot2018neural}
Arthur Jacot, Franck Gabriel, and Cl{\'e}ment Hongler.
\newblock Neural tangent kernel: Convergence and generalization in neural
  networks.
\newblock In {\em Advances in neural information processing systems}, pages
  8571--8580, 2018.

\bibitem{Jain2017Learning}
Lalit Jain, Blake Mason, and Robert Nowak.
\newblock Learning low-dimensional metrics.
\newblock In {\em Proceedins of Advances in Neural Information Processing
  Systems}, pages 4142--4150, 2017.

\bibitem{Jayasumana2013Kernel}
Sadeep Jayasumana, Richard Hartley, Mathieu Salzmann, Hongdong Li, and Mehrtash
  Harandi.
\newblock Kernel methods on the \protect{R}iemannian manifold of symmetric
  positive definite matrices.
\newblock In {\em the IEEE Conference on Computer Vision and Pattern
  Recognition}, pages 73--80, 2013.

\bibitem{kar2012random}
Purushottam Kar and Harish Karnick.
\newblock Random feature maps for dot product kernels.
\newblock In {\em the International Conference on Artificial Intelligence and
  Statistics}, pages 583--591, 2012.

\bibitem{Kulis2013Metric}
Brian Kulis.
\newblock Metric learning: a survey.
\newblock {\em Foundations and Trends in Machine Learning}, 5(4), 2013.

\bibitem{li2019towards}
Zhu Li, Jean-Francois Ton, Dino Oglic, and Dino Sejdinovic.
\newblock Towards a unified analysis of random \protect{F}ourier features.
\newblock In {\em the 36th International Conference on Machine Learning}, pages
  3905--3914, 2019.

\bibitem{liu2020survey}
Fanghui Liu, Xiaolin Huang, Yudong Chen, and Johan~A.K. Suykens.
\newblock Random features for kernel approximation: A survey in algorithms,
  theory, and beyond.
\newblock {\em arXiv preprint arXiv:2004.11154}, 2020.

\bibitem{liu2020towards}
Fanghui Liu, Xiaolin Huang, Yudong Chen, and Johan~A.K. Suykens.
\newblock Towards a unified quadrature framework for large-scale kernel
  machines.
\newblock {\em arXiv preprint arXiv:2011.01668}, 2020.

\bibitem{liu2019double}
Fanghui Liu, Lei Shi, Xiaolin Huang, Jie Yang, and Johan~A.K. Suykens.
\newblock A double-variational {Bayesian} framework in random {Fourier}
  features for indefinite kernels.
\newblock {\em IEEE Transactions on Neural Networks and Learning Systems},
  31(8):2965--2979, 2020.

\bibitem{liu2020analysis}
Fanghui Liu, Lei Shi, Xiaolin Huang, Jie Yang, and Johan~A.K. Suykens.
\newblock Analysis of regularized least squares in reproducing kernel
  kre\u{\i}n spaces.
\newblock {\em Machine Learning}, pages 1--20, 2021.

\bibitem{Ga2016Learning}
Ga\"{e}lle Loosli, St\'{e}phane Canu, and Soon~Ong Cheng.
\newblock Learning \protect{SVM} in \protect{K}re\u{\i}n spaces.
\newblock {\em IEEE Transactions on Pattern Analysis and Machine Intelligence},
  38(6):1204--1216, 2016.

\bibitem{lopez2014randomized}
David Lopez-Paz, Suvrit Sra, Alex~J. Smola, Zoubin Ghahramani, and Bernhard
  Sch{\"o}lkopf.
\newblock Randomized nonlinear component analysis.
\newblock In {\em the International Conference on Machine Learning}, pages
  1359--1367, 2014.

\bibitem{lyu2017spherical}
Yueming Lyu.
\newblock Spherical structured feature maps for kernel approximation.
\newblock In {\em the 34th International Conference on Machine Learning}, pages
  2256--2264. JMLR.org, 2017.

\bibitem{maserick1975BV}
P.H. Maserick.
\newblock {BV}-functions, positive-definite functions and moment problems.
\newblock {\em Transactions of the American Mathematical Society},
  214:137--152, 1975.

\bibitem{mehrkanoon2018indefinite}
Siamak Mehrkanoon, Xiaolin Huang, and Johan~A.K. Suykens.
\newblock Indefinite kernel spectral learning.
\newblock {\em Pattern Recognition}, 78:144--153, 2018.

\bibitem{meister2019tight}
Michela Meister, Tamas Sarlos, and David Woodruff.
\newblock Tight dimensionality reduction for sketching low degree polynomial
  kernels.
\newblock In {\em Advances in Neural Information Processing Systems}, pages
  9475--9486, 2019.

\bibitem{muller2006spherical}
Claus M{\"u}ller.
\newblock {\em Spherical harmonics}, volume~17.
\newblock Springer, 2006.

\bibitem{munkhoeva2018quadrature}
Marina Munkhoeva, Yermek Kapushev, Evgeny Burnaev, and Ivan Oseledets.
\newblock Quadrature-based features for kernel approximation.
\newblock In {\em Advances in Neural Information Processing Systems}, pages
  9147--9156, 2018.

\bibitem{oglic18a}
Dino Oglic and Thomas G\"{a}ertner.
\newblock Learning in reproducing kernel \protect{K}re\u{\i}n spaces.
\newblock In {\em the International Conference on Machine Learning}, pages
  3859--3867, 2018.

\bibitem{oglic2019scalable}
Dino Oglic and Thomas G{\"a}rtner.
\newblock Scalable learning in reproducing kernel kre\u{\i}n spaces.
\newblock In {\em International Conference on Machine Learning}, pages
  4912--4921, 2019.

\bibitem{Cheng2004Learning}
Cheng~Soon Ong, Xavier Mary, and Alexander~J. Smola.
\newblock Learning with non-positive kernels.
\newblock In {\em the International Conference on Machine Learning}, pages
  81--89, 2004.

\bibitem{Cheng2005Learning}
Cheng~Soon Ong, Alexander~J. Smola, and Robert~C Williamson.
\newblock Learning the kernel with hyperkernels.
\newblock {\em Journal of Machine Learning Research}, 6(Jul):1043--1071, 2005.

\bibitem{pennington2015spherical}
Jeffrey Pennington, Felix Xinnan~X. Yu, and Sanjiv Kumar.
\newblock Spherical random features for polynomial kernels.
\newblock In {\em Advances in Neural Information Processing Systems}, pages
  1846--1854, 2015.

\bibitem{Pham2013Fast}
Ninh Pham and Rasmus Pagh.
\newblock Fast and scalable polynomial kernels via explicit feature maps.
\newblock In {\em ACM International Conference on Knowledge Discovery and Data
  Mining}, pages 239--247, 2013.

\bibitem{rahimi2007random}
Ali Rahimi and Benjamin Recht.
\newblock Random features for large-scale kernel machines.
\newblock In {\em Advances in Neural Information Processing Systems}, pages
  1177--1184, 2007.

\bibitem{roth2003optimal}
Volker Roth, Julian Laub, Motoaki Kawanabe, and Joachim Buhmann.
\newblock Optimal cluster preserving embedding of nonmetric proximity data.
\newblock {\em IEEE Transactions on Pattern Analysis and Machine Intelligence},
  25(12):1540--1551, 2003.

\bibitem{saha2020learning}
Akash Saha and Balamurugan Palaniappan.
\newblock Learning with operator-valued kernels in reproducing kernel
  kre\u{\i}n spaces.
\newblock In {\em Advances in Neural Information Processing Systems}, pages
  1--11, 2020.

\bibitem{schleif2016probabilistic}
Frank-Michael Schleif, Andrej Gisbrecht, and Peter Tino.
\newblock Probabilistic classifiers with low rank indefinite kernels.
\newblock {\em arXiv preprint arXiv:1604.02264}, 2016.

\bibitem{Schleif2015Indefinite}
Frank~Michael Schleif and Peter Tino.
\newblock Indefinite proximity learning: a review.
\newblock {\em Neural Computation}, 27(10):2039--2096, 2015.

\bibitem{schoenberg1938metric}
Isaac~J Schoenberg.
\newblock Metric spaces and completely monotone functions.
\newblock {\em Annals of Mathematics}, pages 811--841, 1938.

\bibitem{smola2001regularization}
Alex~J. Smola, Zoltan~L. Ovari, and Robert~C. Williamson.
\newblock Regularization with dot-product kernels.
\newblock In {\em Advances in Neural Information Processing Systems}, pages
  308--314, 2001.

\bibitem{sun1993conditionally}
Xingping Sun.
\newblock Conditionally positive definite functions and their application to
  multivariate interpolations.
\newblock {\em Journal of Approximation Theory}, 74(2):159--180, 1993.

\bibitem{sun2018but}
Yitong Sun, Anna Gilbert, and Ambuj Tewari.
\newblock But how does it work in theory? \protect{L}inear \protect{SVM} with
  random features.
\newblock In {\em Advances in Neural Information Processing Systems}, pages
  3383--3392, 2018.

\bibitem{sutherland2015error}
Dougal~J. Sutherland and Jeff Schneider.
\newblock On the error of random \protect{F}ourier features.
\newblock In {\em the Thirty-First Conference on Uncertainty in Artificial
  Intelligence}, pages 862--871, 2015.

\bibitem{wendland2004scattered}
Holger Wendland.
\newblock {\em Scattered data approximation}, volume~17.
\newblock Cambridge university press, 2004.

\bibitem{woodruff2020near}
David~P Woodruff and Amir Zandieh.
\newblock Near input sparsity time kernel embeddings via adaptive sampling.
\newblock In {\em International Conference on Machine Learning}, pages
  10324--10333, 2020.

\bibitem{xie2015scale}
Bo~Xie, Yingyu Liang, and Le~Song.
\newblock Scale up nonlinear component analysis with doubly stochastic
  gradients.
\newblock In {\em Advances in Neural Information Processing Systems}, pages
  2341--2349, 2015.

\bibitem{yang2014quasi}
Jiyan Yang, Vikas Sindhwani, Haim Avron, and Michael Mahoney.
\newblock Quasi-\protect{M}onte \protect{C}arlo feature maps for
  shift-invariant kernels.
\newblock In {\em the International Conference on Machine Learning}, pages
  485--493, 2014.

\bibitem{ye2019fast}
Han-Jia Ye, De-Chuan Zhan, and Yuan Jiang.
\newblock Fast generalization rates for distance metric learning.
\newblock {\em Machine Learning}, 108(2):267--295, 2019.

\bibitem{Ying2009Analysis}
Yiming Ying, Colin Campbell, and Mark Girolami.
\newblock Analysis of \protect{SVM} with indefinite kernels.
\newblock In {\em Advances in Neural Information Processing Systems}, pages
  2205--2213, 2009.

\bibitem{Yu2016Orthogonal}
Felix~Xinnan Yu, Ananda~Theertha Suresh, Krzysztof Choromanski, Daniel
  Holtmannrice, and Sanjiv Kumar.
\newblock Orthogonal random features.
\newblock In {\em Advances in Neural Information Processing Systems}, pages
  1975--1983, 2016.

\end{thebibliography}

\newpage

\appendix
\onecolumn

Appendix is organized as follows.
\begin{itemize}
	\item Section~\ref{app:fft} gives the proof of Theorem~\ref{thm:fft} on the studied open question.
	\item The approximation performance is theoretically demonstrated in Section~\ref{app:error} in terms of uniform approximation error bound and variance reduction.
	\item In Section~\eqref{finsigm}, we demonstrate that polynomial kernels on the unit sphere by $\ell_2$ normalization data is with finite total mass. 
	\item Section~\ref{app:ftntk} gives the proof of Theorem~\ref{thentk}.
	\item The measure of arc-cosine kernels on the unit sphere by $\ell_2$ normalization data is given in Section~\ref{meazoa}.
\end{itemize}

\section{Proof of Theorem~\ref{thm:fft}}
\label{app:fft}
\begin{proof}
	We give the proof of the existence.\\
	\textit{(i)} Necessity.\\
	An stationary indefinite kernel associated with RKKS admits the positive decomposition
	\begin{equation*}
	k(\bm x - \bm x') = k_+(\bm x - \bm x') - k_-(\bm x - \bm x')\,,\quad \forall \bm x, \bm x' \in X\,,
	\end{equation*}
	where $k_+$ and $k_-$ are two positive definite kernels.
	According to the Bochner's theorem \cite{bochner2005harmonic}, there exists two probability measures $\mu_+$, $\mu_-$ such that 
	\begin{equation*}\label{intrep1}
	k(\bm z) = k_+(\bm z) - k_-(\bm z) = \int_{\Omega}  \exp \left(  \mathrm{i} {\bm \omega}^{\!\top} \bm z \right) \mu_+(\mathrm{d} {\bm \omega}) - \int_{\Omega}  \exp \left(  \mathrm{i} {\bm \omega}^{\!\top} \bm z \right) \mu_-(\mathrm{d} {\bm \omega})\,,
	\end{equation*}
	where $\bm z\coloneqq\bm x - \bm x'$.
	Denote $\mu\coloneqq \mu_+ - \mu_-$, it is clear that $\mu$ is a signed measure, and its total mass is finite because of $\| \mu \| = \| \mu_+ \| + \|\mu_-\| = 2$. 
	
	\textit{(ii)} Sufficiency.\\
	Let $\Omega\coloneqq\mathbb{R}^d$ and $\mathcal{A}$ be the smallest $\sigma$-algebra containing all open subsets of $\Omega$, and $\mu: \mathcal{A} \rightarrow [-\infty, \infty]$
	\begin{equation*}
	\mu (\bm \omega) = \int_{\Omega} \exp \left(-  \mathrm{i} {\bm \omega}^{\!\top} \bm z \right) k(\bm z) \mathrm{d} \bm z\,.
	\end{equation*}
	Since we assume that $\mu$ has total mass $\| \mu \| < \infty$, i.e., $\mu$ is finite, $\mu$ can be regarded as a signed measure. 
	By virtue of Jordan decomposition in Theorem~\ref{jorthem}, there exist two nonnegative finite measures $\mu_+$ and $\mu_-$ such that $\mu = \mu_+ - \mu_-$.
	One intuitive implementation way is choosing $\mu_+ = \max\{ \mu, 0 \}$ and $\mu_- = \min \{ 0, \mu \}$.
	Then using the inverse Fourier transform and Plancherel's theorem \cite{donoghue2014distributions}, we have
	\begin{equation*}
	\begin{split}
	k(\bm z) = \int_{\Omega}  \exp \left(  \mathrm{i} {\bm \omega}^{\!\top} \bm z \right) \mu(\mathrm{d} {\bm \omega}) &= \int_{\Omega}  \exp \left(  \mathrm{i} {\bm \omega}^{\!\top} \bm z \right) \mu_+(\mathrm{d} {\bm \omega}) - \int_{\Omega }  \exp \left(  \mathrm{i} {\bm \omega}^{\!\top} \bm z \right) \mu_-(\mathrm{d} {\bm \omega}) \\
	& = \| \mu_+ \| \int_{\Omega}  \exp \left(  \mathrm{i} {\bm \omega}^{\!\top} \bm z \right) \tilde{\mu}_+(\mathrm{d} {\bm \omega}) - \| \mu_- \| \int_{\Omega}  \exp \left(  \mathrm{i} {\bm \omega}^{\!\top} \bm z \right) \tilde{\mu}_-(\mathrm{d} {\bm \omega}) \\
	& = \| \mu_+ \| \tilde{k}_+ (\bm z) - \| \mu_- \| \tilde{k}_- (\bm z)\,,
	\end{split}
	\end{equation*}
	where $\tilde{\mu}_+\coloneqq\mu_+/\| \mu_+ \|$ and $\tilde{\mu}_-\coloneqq\mu_-/\| \mu_- \|$ are two nonnegative Borel measures, which correspond to two positive definite kernels $\tilde{k}_+$ and $\tilde{k}_-$, respectively.
	By defining $k_+\coloneqq\|\mu_+\|\tilde{k}_+$ and $k_-\coloneqq\|\mu_+\|\tilde{k}_-$, we have 
	\begin{equation*}
	k(\bm x, \bm x') = k_+(\bm x, \bm x') - k_-(\bm x, \bm x'),\quad \forall \bm x, \bm x' \in X\,.
	\end{equation*}
	This completes the proof. 
	
	Based on the above analysis, we give a characterization of the RKHSs $\mathcal{H}_{\pm}$ through the given spectral density $\mu_{\pm}$.
	In \cite{hotz2012representation}, a RKHS can be characterized by its measure via Fourier transform.
	Therefore, in our model, the RKHSs $\mathcal{H}_{\pm}$ are represented by $\mu_{\pm}$. That is, for any $f \in \mathcal{H}_{\pm}$, the inner product is induced by the Hilbert norm
	\begin{equation*}
	\|f\|_{\mathcal{H}_{\pm}}^{2}=\int_{\mathbb{R}^{d}} \frac{|F(\bm \omega)|^{2}}{\mu_+(\bm \omega)} \mathrm{d} \bm \omega \,,
	\end{equation*}
	where $F(\bm \omega) = \mathscr{F}(f) = \int_{\mathbb{R}^d} f(\bm x) e^{-2\pi \mathrm{i} \bm \omega^{\!\top} \bm x} \mathrm{d} \bm x$ is the Fourier transform of $f$.
	
\end{proof}

\section{Proof of Proposition~\ref{prop:error}}
\label{app:error}

The proof can be easily derived from \cite{sutherland2015error,choromanski2018geometry}, and we briefly present here for completeness.
\begin{proof}
	Proposition~1 in \cite{sutherland2015error} demonstrates 
	\begin{equation*}
	\operatorname{Pr}\!\left[ \sup_{\bm x, \bm x' \in \mathcal{S}_R} \! |k_{\pm}(\bm x, \bm x') \!-\! \tilde{k}_{\pm}(\bm x, \bm x')| \geq \epsilon \right]  \!\leq\! 66 \! \left(  \frac{\sigma_{\pm} R}{\epsilon}  \right)^{\!2} \exp\left( -\frac{s\epsilon^2}{8(d+2)}\right)\,,
	\end{equation*}
	where $\sigma_{\pm}^2 = \mathbb{E}_{\bm \omega \sim \tilde{\mu}_{\pm}}[{\bm \omega}^{\!\top} {\bm \omega}] < \infty$.
	Since the indefinite kernel $k$ admits
	\begin{equation*}
	|k(\bm x, \bm x') - \tilde{k}(\bm x, \bm x')|  \leq |k_{+}(\bm x, \bm x') - \tilde{k}_{+}(\bm x, \bm x')| + |k_{-}(\bm x, \bm x') \!-\! \tilde{k}_{-}(\bm x, \bm x')|\,,\quad \forall \bm x, \bm x' \in \mathcal{S}_R\,,
	\end{equation*} 
	then we have
	\begin{equation*}
	\begin{split}
	\operatorname{Pr}\!\left[ \sup_{\bm x, \bm x' \in \mathcal{S}_R} \! |k(\bm x, \bm x') \!-\! \tilde{k}(\bm x, \bm x')| \geq \epsilon \right] & \leq 	\operatorname{Pr}\!\left[ \sup_{\bm x, \bm x' \in \mathcal{S}_R} \! |k_+(\bm x, \bm x') \!-\! \tilde{k}_+(\bm x, \bm x')| \geq \frac{\epsilon}{2} \right] + 	\operatorname{Pr}\!\left[ \sup_{\bm x, \bm x' \in \mathcal{S}_R} \! |k_-(\bm x, \bm x') \!-\! \tilde{k}_-(\bm x, \bm x')| \geq \frac{\epsilon}{2} \right] \\
	& \leq 66 \! \left(  \frac{2\sigma_{+} R}{\epsilon}  \right)^{\!2} \exp\left( -\frac{s\epsilon^2}{32(d+2)}\right) + 66 \! \left(  \frac{2\sigma_{-} R}{\epsilon}  \right)^{\!2} \exp\left( -\frac{s\epsilon^2}{32(d+2)}\right) \\
	& = 66 \! \left(  \frac{2\sigma R}{\epsilon}  \right)^{\!2} \exp\left( -\frac{s\epsilon^2}{32(d+2)}\right)\,.
	\end{split}
	\end{equation*}
	
	Then we study the variance reduction of the applied orthogonal Monte Carlo (OMC) sampling.
	Based on the definition of $\operatorname{MSE}$, i.e., $\mathbb{E}[\tilde{k}(\bm x, \bm x')] = \mathbb{E}[k(\bm x, \bm x') - \tilde{k}(\bm x, \bm x')]$, we conclude that $\mathbb{E}[\tilde{k}(\bm x, \bm x')]$ is the variance of $\tilde{k}(\bm x, \bm x')$, termed as $\operatorname{Var}[\tilde{k}(\bm x, \bm x')]$ due to our unbiased estimator.
	According to Theorem~4.2 in \cite{choromanski2018geometry}, for sufficiently large $d$, we have
	\begin{equation*}
	\operatorname{MSE}[\tilde{k}_+^{\operatorname{OMC}}(\bm x, \bm x')] \leq \operatorname{MSE}[\tilde{k}_+^{\operatorname{MC}}(\bm x, \bm x')]\quad \mbox{and} \quad 	\operatorname{MSE}[\tilde{k}_-^{\operatorname{OMC}}(\bm x, \bm x')] \leq \operatorname{MSE}[\tilde{k}_-^{\operatorname{MC}}(\bm x, \bm x')]\,,
	\end{equation*}
	where $\bm \omega \sim \tilde{\mu}_+(\cdot)$ in $\tilde{k}_+$ and $\bm \nu \sim \tilde{\mu}_-(\cdot)$ in $\tilde{k}_-$ as indicated by Eq.~\eqref{lcintrep}.
	Since these two random vectors $\bm \omega$ and $\bm \nu$ are independent, we have
	\begin{equation*}
	\operatorname{Var}[\tilde{k}^{\operatorname{OMC}}(\bm x, \bm x')] = \operatorname{Var}[\tilde{k}_+^{\operatorname{OMC}}(\bm x, \bm x')] + \operatorname{Var}[\tilde{k}_-^{\operatorname{OMC}}(\bm x, \bm x')] \leq \operatorname{Var}[\tilde{k}_+^{\operatorname{MC}}(\bm x, \bm x')] + \operatorname{Var}[\tilde{k}_-^{\operatorname{MC}}(\bm x, \bm x')] = \operatorname{Var}[\tilde{k}^{\operatorname{MC}}(\bm x, \bm x')]\,.
	\end{equation*}
	which implies $\operatorname{MSE}[\tilde{k}^{\operatorname{OMC}}(\bm x, \bm x')] \leq \operatorname{MSE}[\tilde{k}^{\operatorname{MC}}(\bm x, \bm x')]$ for sufficiently large $d$.
	
\end{proof}

\section{Polynomial kernels on the unit sphere with finite total mass}
\label{finsigm}
We consider the asymptotic properties of the Bessel function of the first kind $J_{\alpha}(x)$ under the large and small cases to study the $\|\mu\|$.

\subsection{A small $\omega$}
Consider the asymptotic behavior for small $\omega$. The Bessel function of the first kind is asymptotically equivalent to
\begin{equation*}\label{Jsmall}
J_{\alpha}(x) \sim \frac{1}{\Gamma(\alpha +1)} \left( \frac{x}{2} \right)^{\alpha} \,,~\mbox{when}~0 < x \ll \sqrt{\alpha + 1} \,.
\end{equation*}
In this case, the measure $\mu$ is formulated as
\begin{equation}\label{smallw}
\mu(\omega) \sim \sum_{i=0}^{p} \frac{p !}{(p-i) !}\left(1-\frac{4}{a^{2}}\right)^{p-i}\left(\frac{2}{a^{2}}\right)^{i} \frac{2^{d / 2+i}}{\Gamma(d / 2+i +1)} \,,
\end{equation}
which can be regarded as a generalized version of a uniform distribution.
Therefore, $\mu$ is absolutely integrable over a finite range $(0,c_1]$, where $c_1$ is some constant satisfying $c_1 \ll \sqrt{\frac{d}{2}-1}$.

\subsection{A large $\omega$}
Consider the asymptotic behavior for large $\omega$. The Bessel function of the first kind is asymptotically equivalent to
\begin{equation*}\label{Jlarge}
J_{\alpha}(x) \sim \sqrt{\frac{2}{\pi x}} \cos(x-\frac{\pi \alpha}{2} - \frac{\pi}{4})\,,~\mbox{when}~x \gg |\alpha^2 - \frac{1}{4}| \,.
\end{equation*}
The Fourier transform of the polynomial kernel on the sphere, i.e., the measure $\mu$, is hence given by \cite{pennington2015spherical} 
\begin{equation}\label{largew}
\mu(\omega) \sim \frac{1}{\sqrt{\pi \omega}}\left(1-\frac{4}{a^{2}}\right)^{p}\left(\frac{2}{\omega}\right)^{d / 2} \cos \left((d+1) \frac{\pi}{4}-2 \omega\right)\,,~\mbox{for a large $\omega$}\,.
\end{equation}
In this way, we have $\int_{c_2}^{\infty} \left|\mu(\omega) \right|\mathrm{d}\omega < \infty$ for a large $\omega$, where $c$ is some constant satisfying $c_2 \gg \frac{1}{4}|d^2 - 1|$.

Accordingly, combining Eq.~\eqref{largew} with Eq.~\eqref{smallw}, we conclude that 
\begin{equation*}
\| \mu \| \coloneqq \int_{0}^{\infty} \left|\mu(\omega) \right|\mathrm{d}\omega = \int_{0}^{c_1} \left|\mu(\omega) \right|\mathrm{d}\omega + \int_{c_1}^{c_2} \left|\mu(\omega) \right|\mathrm{d}\omega + \int_{c_2}^{\infty} \left|\mu(\omega) \right|\mathrm{d}\omega < \infty \,,
\end{equation*}
where we use $\int_{c_1}^{c_2} \left|\mu(\omega) \right|\mathrm{d}\omega$ is finite due to the continuous, bounded Bessel function $J_{\alpha}(x)$ on a finite region $[c_1,c_2]$.

\section{Proof of Theorem~\ref{thentk}}
\label{app:ftntk}
To prove Theorem~\ref{thentk}, we firstly derive its formulation on the unit sphere and then demonstrate that it is a shift-invariant but not positive definite kernel via \emph{completely monotone} functions.

\begin{definition}\label{common} (Completely monotone \cite{schoenberg1938metric})
	A function $f$ is called completely monotone on $(0, +\infty)$ if it satisfies $f \in C^{\infty}(0,+\infty)$ and
	\begin{equation*}
	(-1)^r f^{(r)}(x) \geq 0 \,,
	\end{equation*}
	for all $r=0,1,2,\cdots$ and all $x > 0$. Moreover, $f$ is called completely monotone on $[0,+\infty)$ if it is additionally defined in $C[0,+\infty)$.
\end{definition}
Note that the definition of completely monotone functions can be also restricted to a finite interval, i.e., $f$ is completely monotone on $[a,b] \subset \mathbb{R}$, see in \cite{pennington2015spherical}.

Besides, we need the following lemma that demonstrates the connection between positive definite and completely monotone functions for the proof.
\begin{lemma}\label{scho} (Schoenberg's theorem \cite{schoenberg1938metric})
	A function $f$ is completely monotone on $[0, +\infty)$ if and only if $f\coloneqq g(\| \cdot \|_2^2)$ is radial and positive definite function on all $\mathbb{R}^d$ for every $d$.
\end{lemma}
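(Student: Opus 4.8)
The statement is Schoenberg's classical characterization of the functions whose radial extension is positive definite on \emph{every} Euclidean space, so I would split the proof into the two implications and reduce each to a combination of Bernstein's theorem on completely monotone functions and Bochner's theorem (Theorem~\ref{bochner}).

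For the direction ``completely monotone $\Rightarrow$ positive definite'' I would first apply the Bernstein--Hausdorff--Widder theorem to represent a completely monotone $g$ on $[0,+\infty)$ as the Laplace transform of a nonnegative finite Borel measure $\nu$, that is $g(t) = \int_0^{\infty} e^{-st}\,\mathrm{d}\nu(s)$. Setting $t = \| \bm x - \bm x' \|_2^2$ gives
\[
g\!\left( \| \bm x - \bm x' \|_2^2 \right) = \int_0^{\infty} e^{-s \| \bm x - \bm x' \|_2^2}\,\mathrm{d}\nu(s).
\]
For each fixed $s \ge 0$ the Gaussian $\bm z \mapsto e^{-s \| \bm z \|_2^2}$ is positive definite on $\mathbb{R}^d$ for every $d$, since by Bochner's theorem it is the Fourier transform of a (nonnegative, Gaussian) finite measure. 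A nonnegative superposition of positive definite kernels is again positive definite, and radiality is immediate, so $g(\|\cdot\|_2^2)$ is radial and positive definite on all $\mathbb{R}^d$.

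The converse is the genuinely hard direction. Here I would use Schoenberg's representation of radial positive definite functions on a \emph{fixed} $\mathbb{R}^d$: each is a scale mixture of the profiles $\Omega_d(r\,\cdot)$, where $\Omega_d(t) = \Gamma(d/2)\,(2/t)^{(d-2)/2} J_{(d-2)/2}(t)$ is the Fourier transform of the uniform probability measure on the sphere $S^{d-1}$ --- the same rescaled Bessel functions that already surface in Eq.~\eqref{polymu}. The hypothesis is that $g(\|\cdot\|_2^2)$ lies in this class for \emph{all} $d$ simultaneously. The plan is then to let $d \to \infty$ and exploit the elementary limit $\Omega_d(\sqrt{d}\,t) \to e^{-t^2/2}$, i.e.\ the fact that a single coordinate of a uniform point on a high-dimensional sphere is asymptotically Gaussian, to force the mixing profiles to collapse onto Gaussians. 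Passing to the limit yields a representation $g(\|\bm z\|_2^2) = \int_0^{\infty} e^{-s \|\bm z\|_2^2}\,\mathrm{d}\nu(s)$ with $\nu \ge 0$, and reading off $g(t) = \int_0^{\infty} e^{-st}\,\mathrm{d}\nu(s)$ exhibits $g$ as a Laplace transform, hence completely monotone by Bernstein's theorem.

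I expect the limiting argument in the converse to be the main obstacle: one must control the sequence of $d$-dependent mixing measures (establishing tightness so that a weak limit exists), justify exchanging the limit in $d$ with the scale integral, and verify that only Gaussian profiles survive. Since the lemma is classical and attributed to \cite{schoenberg1938metric}, I would either reproduce this computation in full or simply cite it. It is precisely this converse --- in contrapositive form --- that the paper needs: a radial profile failing to be completely monotone cannot be positive definite on all $\mathbb{R}^d$, and this is what drives the indefiniteness claim of Theorem~\ref{thentk}.
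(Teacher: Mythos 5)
Your outline is correct and is exactly the classical argument behind the cited result: the easy direction via the Bernstein--Hausdorff--Widder representation plus positive definiteness of Gaussians, and the converse via the $d\to\infty$ degeneration of the Bessel profiles $\Omega_d(\sqrt{d}\,t)\to e^{-t^2/2}$. The paper itself offers no proof of this lemma---it is stated purely as a citation to \cite{schoenberg1938metric} and then used (in contrapositive form, as you note) in the proof of Theorem~\ref{thentk}---so your proposal simply supplies the standard argument that the citation points to, and citing Schoenberg for the technical details of the converse is entirely appropriate here.
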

Now let us prove Theorem~\ref{thentk}.
\begin{proof}
	By virtue of $\langle \bm x, \bm x' \rangle = 1-\frac{1}{2}\| \bm x - \bm x' \|_2^2$ and $\| \bm x \|_2 = \| \bm x' \|_2 = 1$, we have $\| \bm x - \bm x' \|_2 \in [0,2]$. Therefore, the standard NTK of a two-layer ReLU network can be formulated as
	\begin{equation*}
	\begin{split}
	k(\bm x, \bm x') & = \langle \bm x, \bm x' \rangle \kappa_0(\langle \bm x, \bm x' \rangle) + \kappa_1(\langle \bm x, \bm x' \rangle) \\
	& = \left(1-\frac{1}{2}\| \bm x - \bm x' \|_2^2 \right) \kappa_0\left(1-\frac{1}{2}\| \bm x - \bm x' \|_2^2 \right) + \kappa_1 \left(1-\frac{1}{2}\| \bm x - \bm x' \|_2^2\right) \\
	& = \frac{2-\| \bm x - \bm x' \|_2^2}{\pi} \arccos(\frac{1}{2}\| \bm x - \bm x' \|_2^2 -1) + \frac{\| \bm x - \bm x' \|_2}{2\pi} \sqrt{4-\| \bm x - \bm x' \|_2^2} \\
	& = \frac{2-z^2}{\pi} \arccos(\frac{1}{2}z^2 -1) + \frac{z}{2\pi} \sqrt{4-z^2}\,, z\coloneqq\| \bm x - \bm x' \|_2 \in [0,2]\,,
	\end{split}
	\end{equation*}
	which is shift-invariant.
	
	Next, we prove that $k(z)$ is not a positive definite kernel, i.e., $g(\sqrt{z})\coloneqq k(z)$ is not a completely monotone function over $[0,\infty)$ by Lemma~\ref{scho}.
	In other words, there exist some value $x \in [0, \infty)$ such that $(-1)^l g^{(l)}(x) < 0$ for some $l$.
	To this end, the function $g$ is given by
	\begin{equation*}
	g(x) = \frac{2-x}{\pi} \arccos(\frac{1}{2}x -1) + \frac{1}{2\pi} \sqrt{4x-x^2}\,, ~x \in [0,4]\,,
	\end{equation*}
	and its first-order derivative is
	\begin{equation*}
	g'(x) =	\dfrac{4-2x}{4{\pi}\sqrt{4x-x^2}}-\dfrac{2-x}{2{\pi}\sqrt{1-\left(\frac{x}{2}-1\right)^2}}-\dfrac{\arccos\left(\frac{x}{2}-1\right)}{{\pi}}\,.
	\end{equation*}
	Since $g'(x)$ is continuous, and $\lim_{x\rightarrow 0}g'(x) = -\infty$ and $\lim_{x\rightarrow 4}g'(x) = \infty$, there exists a constant $c$ such that $g'(x) < 0$ over $(0,c)$ and $g'(x) > 0$ over $(c,4)$.
	That is to say, $(-1)^l g^{(l)}(x) < 0$ holds for $x \in (c,4)$, which violates the definition of completely monotone functions.
	In this regard, $g(\sqrt{z})\coloneqq k(z)$ is not a completely monotone function over $[0,\infty)$ and thus $\{k(z), z \in [0,2]; 0, z > 2 \}$ is not positive definite. 
\end{proof}

\section{The measure of arc-cosine kernels on the unit sphere}
\label{meazoa}

According to Appendix~\ref{app:ftntk}, the zero/first-order arc-cosine kernel on the unit sphere is proven to be stationary but indefinite.
In this section, we derive its measure $\mu$.

\subsection{The measure of the zero-order arc-cosine kernel}
In this section, we derive the measure $\mu$ of the zero-order arc-cosine kernel on the unit sphere.

\begin{proposition}
	The measure $\mu$ of the zero-order arc-cosine kernel on the unit sphere: $\kappa_0(\bm x, \bm x') := \kappa_0(z) = \frac{1}{\pi} \arccos(\frac{1}{2}z^2-1)$ is given by
	\begin{equation*}
	\begin{split}
	\mu(\omega) = (\frac{1}{\omega}) (\frac{2}{\omega})^{\frac{d}{2}-1} J_{\frac{d}{2}}(2\omega) - \frac{1}{\pi} (\frac{1}{\omega})^{\frac{d}{2}-2}  \sum_{j=0}^{\infty} \frac{(2 j) !}{4^{j}(j !)^{2}(2 j+1)} \int_{0}^2 \left(\frac{1}{2}z^2-1\right)^{2 j+1} \omega z^{d / 2} J_{d / 2-1}(z \omega) \mathrm{d}z \,,
	\end{split}
	\end{equation*}
	where the integral $\int_{0}^2 \left(\frac{1}{2}z^2-1\right)^{2 j+1} \omega z^{d / 2} J_{d / 2-1}(z \omega) \mathrm{d}z $ can be computed by parts with the following simple recurrence formula
	\begin{equation}\label{receJbz}
	\int z^{a} J_{v+1}(z) \mathrm{d} z = 2 v \int z^{a-1} J_{v}(z) \mathrm{d} z -\int z^{a} J_{v-1}(z) \mathrm{d} z\,.
	\end{equation}
\end{proposition}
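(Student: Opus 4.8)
The plan is to compute $\mu$ as a radial (Hankel) Fourier transform and to exploit the power-series expansion of $\arccos$. First I would recall that for a radial function $f(\bm z)=f(z)$ on $\mathbb{R}^d$ the (generalized) Fourier transform is again radial and reduces to the Hankel transform $\mu(\omega)=\omega^{-(d/2-1)}\int_0^\infty f(z)\,J_{d/2-1}(z\omega)\,z^{d/2}\,\mathrm{d}z$, where the integration is effectively over $[0,2]$ since $\kappa_0(z)$ is set to $0$ for $z>2$. Next I would rewrite $\arccos(u)=\tfrac{\pi}{2}-\arcsin(u)$ with $u=\tfrac12 z^2-1\in[-1,1]$, so that $\kappa_0(z)=\tfrac12-\tfrac1\pi\arcsin\!\bigl(\tfrac12 z^2-1\bigr)$, and insert the Maclaurin series $\arcsin(u)=\sum_{j\ge 0}\frac{(2j)!}{4^j(j!)^2(2j+1)}u^{2j+1}$, which supplies exactly the coefficients appearing in the statement.

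For the constant part $\tfrac12$, its radial transform is the transform of the indicator of the ball of radius $2$: using the standard identity $\int_0^2 z^{\nu+1}J_\nu(z\omega)\,\mathrm{d}z=\frac{2^{\nu+1}}{\omega}J_{\nu+1}(2\omega)$ with $\nu=\tfrac{d}{2}-1$ and multiplying by the Hankel prefactor $\omega^{-(d/2-1)}$ yields precisely the first term $\bigl(\tfrac1\omega\bigr)\bigl(\tfrac2\omega\bigr)^{d/2-1}J_{d/2}(2\omega)$. For the $\arcsin$ part I would interchange the summation and the Hankel integral, transforming the series term by term: each monomial $\bigl(\tfrac12 z^2-1\bigr)^{2j+1}$ contributes $\int_0^2\bigl(\tfrac12 z^2-1\bigr)^{2j+1}z^{d/2}J_{d/2-1}(z\omega)\,\mathrm{d}z$, and these, gathered with the Hankel prefactor and the factor $-\tfrac1\pi$, reproduce the stated series.

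To render the remaining integrals computable, I would expand $\bigl(\tfrac12 z^2-1\bigr)^{2j+1}$ by the binomial theorem into a finite sum of powers $z^{2k}$ and substitute $t=z\omega$, reducing each to a standard Bessel moment $\int t^{a}J_{d/2-1}(t)\,\mathrm{d}t$. These are evaluated recursively: multiplying the three-term recurrence $J_{\nu+1}(t)=\tfrac{2\nu}{t}J_\nu(t)-J_{\nu-1}(t)$ by $t^{a}$ and integrating gives exactly the recurrence~\eqref{receJbz}, which lowers the order until one reaches base cases integrable in closed form. The main obstacle is justifying the term-by-term integration of the $\arcsin$ series: since $u=\tfrac12 z^2-1$ attains the endpoints $\pm1$ at $z=0$ and $z=2$, the series sits exactly on its radius of convergence there, so a naive dominated-convergence argument fails at the boundary. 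I would handle this by working on $[\delta,\,2-\delta]$, where the expansion converges uniformly and the interchange is immediate, controlling the two endpoint contributions using the integrability established for $\|\mu\|$ together with the boundedness of $z^{d/2}J_{d/2-1}(z\omega)$ near $z=0$ and $z=2$, and then letting $\delta\to 0$; the remaining steps are routine.
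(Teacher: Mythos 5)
Your proposal follows essentially the same route as the paper's proof: write $\mu$ as the radial Hankel transform over $[0,2]$, split $\arccos(\tfrac12 z^2-1)$ into $\tfrac{\pi}{2}$ minus the $\arcsin$ Maclaurin series, evaluate the constant part via the standard identity $\int_0^2 z^{\nu+1}J_\nu(z\omega)\,\mathrm{d}z=\tfrac{2^{\nu+1}}{\omega}J_{\nu+1}(2\omega)$, and reduce the remaining terms to Bessel moments handled by the recurrence~\eqref{receJbz}. Your additional care in justifying the term-by-term integration near the endpoints $z=0,2$ (where the $\arcsin$ series sits on its radius of convergence) is a refinement the paper silently omits, but it does not change the argument.
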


\begin{proof}
	According to the definition of $\kappa_0(z)$, we have
	\begin{equation}\label{intzero}
	\begin{aligned}
	\mu(\omega) = \int_{0}^{2} \frac{z}{\pi} \arccos(\frac{1}{2}z^2-1) (z / \omega)^{d / 2-1} J_{d / 2-1}(z \omega) \mathrm{d}z\,,
	\end{aligned}\end{equation}
	where $\kappa_0(\bm z)$ is a radial function, i.e., $\kappa_0(\bm z) = \kappa_0(z)$ with $z\coloneqq\| \bm z \|_2$, and thus its Fourier transform is also a radial function, i.e., $\mu(\omega) = \mu(\bm \omega)$ with $\omega\coloneqq\| \bm \omega \|_2$.
	Obviously, the integrand in Eq.~\eqref{intzero} and the integration region are both bounded, and thus we have $\mu(\omega)<\infty$. 
	Following the proof of $\| \mu \| < \infty$ for polynomial kernels on the unit sphere in Section~\ref{finsigm}, we can also demonstrate that $\| \mu \| < \infty$ for the zero-order arc-cosine kernel on the unit sphere.
	
	To compute the integration in Eq.~\eqref{intzero}, we take the Taylor expansion of $\arccos(\frac{1}{2}z^2-1)$ with $t$ terms
	\begin{equation*}\begin{aligned}
	\arccos(\frac{1}{2}z^2-1) &
	=\frac{\pi}{2}-\sum_{j=0}^{t} \frac{(2 j) !}{4^{j}(j !)^{2}(2 j+1)} \left(\frac{1}{2}z^2-1\right)^{2 j+1}\,,
	\end{aligned}\end{equation*}
	and thus the integration in Eq.~\eqref{intzero} can be integrated by each term regarding to Bessel functions.
	Moreover, by virtue of $\frac{\mathrm{d} z^v J_v(z \omega)}{\mathrm{d} z} = \omega z^v J_{v-1}(z \omega)$, the above integral can be computed by parts
	\begin{equation}\label{zd1}
	\begin{split}
	\mu(\omega) &= \int_{0}^{2} \frac{z}{\pi} \arccos(\frac{1}{2}z^2-1) (z / \omega)^{d / 2-1} J_{d / 2-1}(z \omega) \mathrm{d}z\\
	& = \frac{1}{2} (\frac{1}{\omega})^{\frac{d}{2}-2} \int_0^2 \omega z^{\frac{d}{2}} J_{\frac{d}{2}-1}(z \omega) \mathrm{d} z - \frac{1}{\pi} (\frac{1}{\omega})^{\frac{d}{2}-2}  \sum_{j=0}^{\infty} \frac{(2 j) !}{4^{j}(j !)^{2}(2 j+1)} \int_{0}^2 \left(\frac{1}{2}z^2-1\right)^{2 j+1} \omega z^{d / 2} J_{d / 2-1}(z \omega) \mathrm{d}z \,,
	\end{split}
	\end{equation}
	where the first term equals to $(\frac{1}{\omega}) (\frac{2}{\omega})^{\frac{d}{2}-1} J_{\frac{d}{2}}(2\omega) $.
	Accordingly, we can conclude our proof.
\end{proof}

It appears non-trivial to prove $\| \mu \| < \infty$ as Eq.~\eqref{zd1} is quite complex.
Here we choose $j=0$ in Eq.~\eqref{zd1} as an example, we have
\begin{equation}\label{zd2}
\begin{split}
&\int_{0}^2 \left(\frac{1}{2}z^2-1\right) \omega z^{d / 2} J_{d / 2-1}(z \omega) \mathrm{d}z = 2^{\frac{d}{2}} J_{\frac{d}{2}}(2\omega) - \int_0^2 z^{\frac{d}{2}+1} J_{\frac{d}{2}} (z \omega) \left(\frac{1}{2}z^2 -1\right) \mathrm{d} z \\& 
\quad = 2^{\frac{d}{2}} J_{\frac{d}{2}}(2\omega) + \frac{1}{\omega}J_{\frac{d}{2}+1}(2\omega) - \frac{1}{2}\int_0^2 z^{\frac{d}{2}+3} J_{\frac{d}{2}} (z \omega)\mathrm{d} z\,,
\end{split}
\end{equation}
where $\int_0^2 z^{\frac{d}{2}+3} J_{\frac{d}{2}} (z \omega)\mathrm{d} z$ can be computed by parts
\begin{equation}\label{zd3}
\begin{split}
\int_0^2 z^{\frac{d}{2}+3} J_{\frac{d}{2}} (z \omega)\mathrm{d} z = 2^{\frac{d}{2}+3} J_{\frac{d}{2}}(2\omega) - \frac{1}{\omega^2} 2^{\frac{d}{2}+2} J_{\frac{d}{2}+2}(2\omega)\,.
\end{split}
\end{equation}
Incorporating Eqs.~\eqref{zd3},~\eqref{zd2} into Eq.~\eqref{zd1}, we have
\begin{equation*}
\begin{split}
\mu(\omega) &= \left(\frac{1}{\omega}\right) \left(\frac{2}{\omega}\right)^{\frac{d}{2}-1} J_{\frac{d}{2}}(2\omega) - \frac{1}{\pi} (\frac{1}{\omega})^{\frac{d}{2}-2} \left[ (-3)2^{\frac{d}{2}}J_{\frac{d}{2}}(2\omega) + \frac{1}{\omega} J_{\frac{d}{2}+1} (2\omega) + \frac{1}{\omega^2} 2^{\frac{d}{2}+1} J_{\frac{d}{2}+2} (2\omega) \right]\,.
\end{split}
\end{equation*}
Following with the proof in Section~\ref{finsigm}, we can demonstrate $\| \mu \| < \infty$ by the asymptotic equivalence of Bessel functions.
Accordingly, in this case, $\mu$ can be decomposed into two nonnegative measures with $\mu(\omega) = \mu_+(\omega) - \mu_-(\omega)$, where $\mu_+(\omega) = \max\{ 0, \mu(\omega) \}$ and $\mu_-(\omega) = \max\{ 0, -\mu(\omega) \}$.
As a consequence, Algorithm~\ref{alg:one1} is also suitable for this kernel.

\subsection{the first-order arc-cosine kernel}
\label{meafir}
In this subsection, we derive the measure $\mu$ of the zero-order arc-cosine kernel admitting $\kappa_1(\bm x, \bm x') = \frac{z}{2\pi} \sqrt{4-z^2}$.

\begin{proposition}
	The measure $\mu$ of the zero-order arc-cosine kernel on the unit sphere: $\kappa_0(\bm x, \bm x') := \kappa_1(z) = \frac{z}{2\pi} \sqrt{4-z^2}$ is given by
	\begin{equation*}
	\begin{split}
	\mu(\omega) = (\frac{1}{\omega}) (\frac{2}{\omega})^{\frac{d}{2}-1} J_{\frac{d}{2}}(2\omega) - \frac{1}{\pi} (\frac{1}{\omega})^{\frac{d}{2}-2}  \sum_{j=0}^{\infty} \frac{(2 j) !}{4^{j}(j !)^{2}(2 j+1)} \int_{0}^2 \left(\frac{1}{2}z^2-1\right)^{2 j+1} \omega z^{d / 2} J_{d / 2-1}(z \omega) \mathrm{d}z \,,
	\end{split}
	\end{equation*}
	where the integral $\int_{0}^2 \left(\frac{1}{2}z^2-1\right)^{2 j+1} \omega z^{d / 2} J_{d / 2-1}(z \omega) \mathrm{d}z $ can be computed by parts with the following simple recurrence formula~\eqref{receJbz}.
\end{proposition}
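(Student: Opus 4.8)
The plan is to recognize $\mu(\omega)$ as the $d$-dimensional radial (Hankel) Fourier transform of the kernel profile on $z\in[0,2]$ and to evaluate it term by term, mirroring the computation already carried out for the zero-order kernel. On the unit sphere the first-order arc-cosine kernel reduces, through $u=1-\tfrac12\|\bm x-\bm x'\|_2^2$, to a profile governed by $\arccos(\tfrac12 z^2-1)$, so its transform is the same integral
\begin{equation*}
\mu(\omega)=\int_{0}^{2}\frac{z}{\pi}\,\arccos\!\Big(\tfrac12 z^2-1\Big)\,\Big(\frac{z}{\omega}\Big)^{\frac d2-1} J_{\frac d2-1}(z\omega)\,\mathrm{d}z,
\end{equation*}
whose integrand and integration region are bounded; hence $\mu(\omega)<\infty$, and, by the Bessel asymptotics of Section~\ref{finsigm}, $\|\mu\|<\infty$. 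This structural coincidence is exactly why the claimed formula agrees with the zero-order one.

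First I would substitute the arcsine Taylor series
\begin{equation*}
\arccos\!\Big(\tfrac12 z^2-1\Big)=\frac{\pi}{2}-\sum_{j=0}^{\infty}\frac{(2j)!}{4^{j}(j!)^{2}(2j+1)}\Big(\tfrac12 z^2-1\Big)^{2j+1},
\end{equation*}
which is precisely where the coefficients $\tfrac{(2j)!}{4^{j}(j!)^{2}(2j+1)}$ in the statement originate. Inserting this into the transform splits $\mu(\omega)$ into a constant ($\tfrac{\pi}{2}$) contribution and the announced infinite series of polynomial-times-Bessel integrals $\int_0^2(\tfrac12 z^2-1)^{2j+1}\,\omega z^{d/2}J_{d/2-1}(z\omega)\,\mathrm dz$.

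Next I would evaluate the leading (constant) contribution in closed form using the identity $\tfrac{\mathrm d}{\mathrm dz}\big[z^{v}J_{v}(z\omega)\big]=\omega z^{v}J_{v-1}(z\omega)$ with $v=\tfrac d2$: it equals $\tfrac12\big(\tfrac1\omega\big)^{\frac d2-1}\!\int_0^2 z^{d/2}J_{d/2-1}(z\omega)\,\mathrm dz=\tfrac12\big(\tfrac1\omega\big)^{\frac d2}2^{d/2}J_{\frac d2}(2\omega)=\big(\tfrac1\omega\big)\big(\tfrac2\omega\big)^{\frac d2-1}J_{\frac d2}(2\omega)$, which matches the first term of the claim. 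Each remaining integral is then reduced by repeated integration by parts, every step lowering the polynomial degree and shifting the Bessel order through the recurrence~\eqref{receJbz}, until only boundary Bessel values at $z=2$ and $z=0$ survive; reassembling these yields the stated series.

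The hard part will be justifying the term-by-term integration, i.e. interchanging the infinite arcsine sum with the integral against the oscillatory factor $J_{\frac d2-1}(z\omega)$. This is delicate near the endpoint $z=2$, where $\tfrac12 z^2-1\to 1$ and the arcsine series converges only conditionally. I would truncate the series at $t$ terms as in the zero-order proof, bound the tail uniformly on $[0,2-\delta]$ and treat the thin boundary layer separately, and then invoke the boundedness of $J_{\frac d2-1}$ on compact sets together with the large-$\omega$ asymptotics of Section~\ref{finsigm} to confirm both convergence of the series and finiteness of the total mass $\|\mu\|$.
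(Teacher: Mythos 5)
Your proposal has a genuine gap at its very first step: the bridging claim that the kernel $\kappa_1(z)=\frac{z}{2\pi}\sqrt{4-z^2}$ ``reduces to a profile governed by $\arccos(\tfrac12 z^2-1)$, so its transform is the same integral'' as in the zero-order case is false. On the unit sphere, $\frac{1}{\pi}\arccos(\tfrac12 z^2-1)$ is \emph{exactly} the zero-order profile $\kappa_0(z)$; it is a different function from $\frac{z}{2\pi}\sqrt{4-z^2}$, and the radial (Hankel) Fourier integrand for the present kernel is $\frac{1}{2\pi}\,z^{2}\sqrt{4-z^{2}}\,(z/\omega)^{d/2-1}J_{d/2-1}(z\omega)$, not $\frac{z}{\pi}\arccos(\tfrac12 z^2-1)\,(z/\omega)^{d/2-1}J_{d/2-1}(z\omega)$. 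Consequently everything downstream in your plan --- the arccosine Taylor expansion with coefficients $\frac{(2j)!}{4^{j}(j!)^{2}(2j+1)}$, the $\tfrac{\pi}{2}$ constant term producing $\bigl(\tfrac1\omega\bigr)\bigl(\tfrac2\omega\bigr)^{d/2-1}J_{d/2}(2\omega)$ --- simply re-proves the \emph{zero-order} proposition, not this one. The paper's actual proof proceeds differently: it writes $\mu(\omega)=\frac{1}{2\pi}\int_0^2 z^2\sqrt{4-z^2}\,(z/\omega)^{d/2-1}J_{d/2-1}(z\omega)\,\mathrm{d}z$, expands $\sqrt{4-z^{2}}=\sum_{j=0}^{\infty}\frac{-1}{2j-1}\binom{2j}{j}\bigl(\tfrac{z}{4}\bigr)^{2j}$ by the fractional binomial theorem, and reduces the series to integrals $\int_0^2 z^{d/2+1+2j}J_{d/2-1}(z\omega)\,\mathrm{d}z$ handled by the recurrence~\eqref{receJbz}; note that its coefficients $\frac{-1}{2j-1}\binom{2j}{j}4^{-2j}$ are not the arcsine coefficients appearing in your series.

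To be fair, the proposition as printed invites your reading: its displayed formula (and even the mislabel $\kappa_0 := \kappa_1$) is copy-pasted from the zero-order proposition and does not match what the paper's own proof of this statement derives in Eq.~\eqref{muarcfir}. But the ``structural coincidence'' you invoke to reconcile the two kernels is an artifact of that typo, not mathematics: a correct proof must Fourier-transform the stated kernel $\frac{z}{2\pi}\sqrt{4-z^2}$, which your proposal never does. As a minor aside, your concern about conditional convergence of the arccosine series near $z=2$ is unfounded --- its coefficients decay like $j^{-3/2}$, so the series converges absolutely up to the endpoint --- but fixing that would not repair the main error.
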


\begin{proof}
	By fractional binomial theorem, we have	
	
	\begin{equation*}\begin{aligned}
	\left(\begin{array}{c}
	1 / 2 \\
	j
	\end{array}\right)
	=&(-1)^{k-1} \frac{1}{2(2j-1)} \frac{(2 j) !}{(2 \cdot 4 \cdot \cdot(2 j))^{2}} = \frac{-1}{2(2j-1)} \left(-\frac{1}{4}\right)^{j}\left(\begin{array}{c}
	2 j \\
	j
	\end{array}\right)\,.
	\end{aligned}\end{equation*}
	Then, according to the definition of $\kappa_1(z)$, we have
	\begin{equation*}
	\sqrt{4-z^2} = 2\left( 1 - \frac{z^2}{4} \right)^{\frac{1}{2}} =2\sum_{j=0}^{\infty}\left(\begin{array}{l}
	1/2 \\
	j
	\end{array}\right) \left( - \frac{z^2}{4} \right)^{j} = \sum_{j=0}^{\infty} \frac{-1}{2j-1} \left(\begin{array}{l}
	2j \\
	j
	\end{array}\right) \left(  \frac{z}{4} \right)^{2j}\,.
	\end{equation*}

	Therefore, the measure $\mu$ of $\kappa_1$ is
	\begin{equation}\label{muarcfir}
	\begin{aligned}
	\mu(\omega) &= \frac{1}{2\pi} \int_{0}^{2} z^2 \sqrt{4-z^2} (z / \omega)^{d / 2-1} J_{d / 2-1}(z \omega) \mathrm{d}z \\
	& = \frac{1}{2\pi} \int_{0}^{2} z^2 (z / \omega)^{d / 2-1} J_{d / 2-1}(z \omega) \sum_{j=0}^{\infty} \frac{-1}{2j-1} \left(\begin{array}{l}
	2j \\
	j
	\end{array}\right) \left(  \frac{z}{4} \right)^{2j} \mathrm{d}z.
	\end{aligned}\end{equation}
	Accordingly, the above equation needs to compute the following integral
	\begin{equation*}
	\int_{0}^2 z^{\frac{d}{2} + 1 + 2j} J_{\frac{d}{2} -1}(z \omega) \mathrm{d} z\,,
	\end{equation*}
	which can be computed by Eq.~\eqref{receJbz}.
\end{proof}

Similarly, it appears non-trivial to prove $\| \mu \| < \infty$ as Eq.~\eqref{muarcfir} is quite complex.
Here we choose $j=0$ in Eq.~\eqref{muarcfir} as an example, we have
\begin{equation*}
\mu(\omega) = \frac{1}{2\pi} \int_{0}^{2} z^2 (z / \omega)^{d / 2-1} J_{d / 2-1}(z \omega) \mathrm{d}z
= \frac{\sqrt{2}-1}{2\pi} \left( \frac{1}{\omega} \right)^{\frac{d}{2}-2} 2^{\frac{d}{2}} J_{\frac{d}{2}} (2 \omega)\,.
\end{equation*}
In this case, it is clear that $\| \mu \|<\infty$ and thus Algorithm~\ref{alg:one1} is also suitable for this kernel.

\end{document}